\newcommand{\bi}{\begin{itemize}}
\newcommand{\ei}{\end{itemize}}
\newcommand{\bal}{\begin{align}}
\newcommand{\eal}{\end{align}}
\newcommand{\EE}{\mathbb{E}}
\newcommand{\PP}{{P}}
\newcommand{\RR}{\mathbb{R}}
\newcommand{\QQ}{{Q}}
\newcommand{\bI}{I}
\newcommand{\bD}{\mathbf{D}}
\newcommand{\bK}{K}
\newcommand{\cX}{\mathcal{X}}
\newcommand{\cY}{\mathcal{Y}}
\newcommand{\cZ}{\mathcal{Z}}
\newcommand{\cN}{\mathcal{N}}
\newcommand{\cP}{\mathcal{P}}
\newcommand{\cS}{\mathcal{S}}
\newcommand{\cG}{\mathcal{G}}
\newcommand{\cF}{\mathcal{F}}
\newcommand{\eps}{\epsilon}
\newcommand{\tsig}{\tilde{\sigma}}
\newcommand{\hY}{\hat{Y}}
\DeclareMathOperator{\Tr}{Tr}
\newcommand{\grad}{\bigtriangledown}
\newcommand{\ciG}{\mathring{G}}
\newcommand{\ciH}{\mathring{H}}
\DeclareMathOperator{\argmin}{argmin}
\DeclareMathOperator{\Ima}{Im}
\def\<{\langle}
\def\>{\rangle}
\newcommand{\diag}{{\rm{diag}}}
\renewcommand{\l}{\left}
\renewcommand{\r}{\right}
\begin{document}

\title{Understanding Entropic Regularization in GANs}

\author{\name Daria Reshetova 
    	\email resh@stanford.edu\\
        \addr Department of Electrical Engineering\\
        Stanford University\\
        Stanford,  CA 94305, USA
        \AND
		Yikun Bai
		\email bai@udel.edu\\
		\addr Department of Electrical and Computer Engineering\\
		University of Delaware\\
		Newark, DE 19716, USA
		\AND
		Xiugang Wu
		\email xwu@udel.edu\\
		\addr Department of Electrical and Computer Engineering\\
		University of Delaware\\
		Newark, DE 19716, USA
		\AND
		Ayfer \"Ozg\"ur
		\email aozgur@stanford.edu\\
        \addr Department of Electrical Engineering\\
        Stanford University\\
        Stanford,  CA 94305, USA
}


\maketitle

\begin{abstract}
Generative Adversarial Networks (GANs) are a popular method for learning distributions from data by modeling the target distribution as a function of a known distribution. The function, often referred to as the generator, is optimized to minimize a chosen distance measure between the generated and  target distributions. One commonly used measure for this purpose is the Wasserstein distance. However, Wasserstein distance is hard to compute and optimize, and in practice entropic regularization techniques are used to facilitate its computation and improve numerical convergence. The influence of regularization on the learned solution, however, remains not well-understood. In this paper, we study how several popular entropic regularizations of Wasserstein distance impact the solution learned by a Wasserstein GAN in a simple benchmark setting where the generator is linear and the target distribution is high-dimensional Gaussian. We show that entropy regularization of Wasserstein distance promotes sparsification of the solution, while replacing the Wasserstein distance with the Sinkhorn divergence recovers the unregularized solution. The significant benefit of both regularization techniques is that they remove the curse of dimensionality suffered by Wasserstein distance. We show that in both cases the optimal generator can be learned to accuracy $\epsilon$ with $O(1/\epsilon^2)$ samples from the target distribution without requiring to constrain the discriminator. We thus conclude that these regularization techniques can improve the quality of the generator learned from empirical data in a way that is applicable for a large class of distributions.
\end{abstract}

\begin{keywords}
  Generative Adversarial Networks, Wasserstein GANs, Optimal Transport, Entropic Regularization, Sinkhorn Divergence
\end{keywords}

\section{Introduction}\label{sec:intro}
{G}{enerative} Adversarial Networks (GANs) have become a popular framework for learning data distributions and sampling as they have achieved impressive results in various domains, including image super resolution~\citep{photo}, image-to-image translation~\citep{image}, text to image synthesis \citep{text} and analyzing social networks \citep{video}. 
	As opposed to traditional methods of fitting a parametric distribution, GANs' objective is to find a mapping from a known distribution to the unknown data distribution or its empirical approximation. The mapping is set to a minimizer of a chosen distance measure between the generated and target distribution.
	
	
	In the original GAN framework, the distance measure is the Jensen-Shannon divergence \citep{GANs}. This measure was later replaced by the Wasserstein distance in \citep{arjovsky2017wasserstein}, and the follow-up works showed that Wasserstein GANs can help resolve several issues related to the original formulation, such as the lack of continuity,  mode collapse  \citep{arjovsky2017wasserstein} and vanishing gradients~\citep{gulrajani2017improved}.
	
	Despite these advantages, minimizing the Wasserstein distance between the target (data) and the generated distribution is a computationally challenging task. Indeed, computing the Wasserstein distance between two empirical distributions involves the resolution of a linear program whose cost can quickly become prohibitive whenever the size of the support of these measures or the number of samples exceeds several hundreds. A popular approach to facilitate the computation of the Wasserstein distance is to regularize it with an entropic term which makes the problem strongly convex and hence solvable by matrix scaling
	algorithms \citep{cuturi2013sinkhorn,balaji2019entropic}. More recent results have shown that this also results  in faster convergence and stability of the first-order methods used for optimizing Wasserstein GANs \citep{sanjabi2018convergence}.
	
	However, the impact of these regularization methods on the generator learned by the Wasserstein GAN remains poorly understood. This is partly due to the fact that GANs are primarily evaluated on real data, typically images, and although clearly valuable, such evaluations are often subjective due to lack of clear baselines for benchmarking. In this paper, we follow the philosophy advocated in \citep{feizi2017understanding} and focus on a simple benchmark setting where solutions can be explicitly characterized and compared. Following  \citep{feizi2017understanding}, we assume that the generator is linear and the target distribution is high-dimensional Gaussian. The population solution for the Wasserstein GAN in this setup has been characterized in \citep{feizi2017understanding}, where it was further shown that even in this simple setting the learning problem suffers from the curse of dimesionality -- the empirical solution learned on $n$ samples of the target distribution converges to the population solution  as $\Omega(n^{-2/d}),$ where $d$ is the dimension of the target distribution support. To resolve this sample complexity issue,  \citep{feizi2017understanding} then proposes to restrict the discriminator to be quadratic. This insight is arguably based on knowing that the sought target distribution is Gaussian, in which case the optimal discriminator is indeed quadratic and this restriction does not impact the optimal generator.  However, this insight does not generalize beyond the linear/Gaussian setting as for  non-Gaussian  data the  generator  obtained  under  a  quadratic  discriminator  is  not necessarily  the  one  minimizing  the  Wasserstein distance between the generated and the target distributions.
	
	In this paper, by focusing on the linear generator and Gaussian distribution setting of \citep{feizi2017understanding}, we explore how regularization impacts what generator is learnt and how it leads to better generalization. We study two slightly different ways of regularizing: entropic regularization \citep{cuturi2013sinkhorn} and Sinkhorn divergence \citep{genevay2018learning}. Extending our previous results \citep{reshetova2021understanding}, we show that the former introduces bias to the solution as if one were to constrain the nuclear norm of the covariance matrix of generator's output distribution, while Sinkhorn divergence results in the same solution as the unregularized Wasserstein GAN in \citep{feizi2017understanding}. We then show, in the more general case of sub-gaussian distributions and Lipschitz generators, that these regularizations result in sample complexity of $O_d(1/\sqrt{n}),$ thus overcoming the curse of dimensionality in \citep{feizi2017understanding} without explicitly constraining the discriminator. This indicates that adding regularization implicitly constrains the discriminator in a way suitable for a large class of distributions.
	\section{Preliminaries}\label{sec:sec-formulation}
	
	In this section, we provide some background  on optimal transport and optimal transport GANs. 
	
	%
	\subsection{Wasserstein GANs}
	Let $\cP(\cX)$ be the set of all probability measures with support $\cX$ and finite second moments. For $\PP_Z\in\cP(\cZ)$ and $\PP_Y\in\cP(\cY)$, denote by $\Pi(\PP_Z,\PP_Y)$ the set of all couplings of $\PP_Z$ and $\PP_Y,$ i.e. all joint probability measures from $\cP(\cZ\times\cY)$ with marginal distributions being $\PP_Z$ and $\PP_Y.$ 
	The squared Wasserstein distance between $P_Z, P_Y\in \cP(\mathbb R^d)$ under $\ell_2$ metric, or simply the \emph{squared 2-Wasserstein distance}, is defined as
	\begin{align}\label{eq:OT}
	W_{2}^2(\PP_Z,\PP_Y) =    \inf_{\pi\in\Pi(\PP_Z,\PP_Y)}\!\!\EE_{\pi}\l[\|Z-Y\|^2\r].
	\end{align} 
It can be verified that 2-Wasserstein distance is a metric between probability distributions in $\cP(\mathbb R^d)$; in particular, it is symmetric with respect to its two arguments, satisfies triangle inequality, and $W_{2}(\PP_Y,\PP_Y) = 0$.

The main objective of GANs is to find a mapping $G(\cdot)$, called generator, that comes from a set of functions ${\cG\subseteq\{G:\cX\to\cY\}}$ and maps a latent random variable $X\in \cX$ with some known distribution to a variable $Y\in\cY$ with some target probability measure $P_Y$. In the population case, we assume that we have access to $P_Y$, the true distribution of $Y$, while in the empirical case one has access to only a finite sample $\{Y_i\}_{i=1}^n$, hence the empirical distribution of $Y$. Using the squared 2-Wasserstein distance to measure the dissimilarity between the generated and target distribution leads to the following learning problem of GAN, referred to as \emph{W2GAN}:
	\begin{align}\label{eq:GAN}
	\min_{G\in\cG}W_{2}^2\l(\PP_{G(X)},\PP_Y\r).
	\end{align}


A remarkable feature of the Wasserstein distance is that strong duality holds for the minimization problem described in \eqref{eq:OT}, and hence the objective, squared 2-Wasserstein distance, in \eqref{eq:GAN} can be equivalently written in its dual form \citep[Theorem 5.10 and equation (5.12)]{villani2009optimal}:

\begin{align}
    W_{2}^2(\PP_{G(X)},\PP_Y)&=\sup_{\substack{\psi\in L^1(P_{G(X)}),\phi\in L^1(P_Y)\\\psi(G(x))-\phi(y)\leq \|G(x)-y\|^2}} \EE\left[\psi(G(X)) - \phi(Y))\right] \nonumber\\
    &=
    \sup_{\phi\in \text{Conv}(G(\cX))}\EE\left[\|G(X)\|^2 - 2\phi(G(X)) + \|Y\|^2 - 2\phi^*(Y))\right]\label{OT_dual}
\end{align}
where $\text{Conv}(\cZ)$ is the set of all (lower semicontinuous) convex functions on $\cZ$ and
 $L^1(P_Z)$ is the set of all functions whose absolute value has a finite measure: 
 $\phi \in L^1(P_Z)\iff \EE[|\phi(Z)|]<\infty$.

Note that the above optimization problem is maximizing a concave objective over a set of functions (i.e. discriminators), instead of optimizing over couplings as in the primal form \eqref{eq:OT}. This naturally leads to the min-max game formulation of GANs, where the generator seeks to generate samples that are close to the real data training samples, and it competes with a discriminator that seeks to distinguish between real and generated samples. 

The function $\phi$ can then be parametrized by a neural network resulting in the following architecture
\begin{center}
	\vspace{-3mm}
	\begin{tikzpicture}[thick,scale=0.8, every node/.style={scale=0.8}]
	\tikzset{box/.style={
			draw, rectangle, rounded corners=0.8pt,
			text width=2.2cm, text height=0.75cm}}
	\tikzset{node/.style={draw, circle, inner sep=0.5cm}}
	
	\path (0, 0) coordinate [node]
	(x) node {$X\sim P_X$};
	
	\path (2.5, 0) coordinate [box] (G) node {\begin{tabular}{c} Generator\\ $G:\cX\to\cY$ \end{tabular}};
	
	\path (5, -1) coordinate [node] (GX) node {$G({X})$};
	\path (5, +1) coordinate [node] (Y) node {${Y\sim P_Y}$};
	
	\path (7.5, -1) coordinate [box]
	(D_G) node {\begin{tabular}{c} Discriminator\\ $\phi:G(\cX)\to\RR$ \end{tabular}};
	\path (7.5, +1) coordinate [box]
	(D_r) node {\begin{tabular}{c} Discriminator\\ $\phi^*:\cY\to\RR$ \end{tabular}};
	\path (10, 0) coordinate [box,text width=0.75cm]
	(l) node {\begin{tabular}{c}
	     loss\\
	     $W^2_{2}$
	\end{tabular}};
	
	\draw [->] (Y) -- (D_r);
	\draw [->] (GX) -- (D_G);
	\draw [->] (x) -- (G);
	\draw [->] (D_r) -- (l);
	\draw [->] (D_G) -- (l);
	\draw [->] (G) edge[out=0] (GX);
	\end{tikzpicture}
\end{center}

\subsection{Entropic Wasserstein GANs}
	
In practice, the Wasserstein distance in \eqref{eq:OT} is often regularized to facilitate its computation leading to the  \emph{entropy regularized 2-Wasserstein distance}  \citep{cuturi2013sinkhorn}:
	\begin{align}
	W_{2,\lambda}^2(\PP_Z,\PP_Y)=\inf_{\pi\in \Pi(\PP_Z,\PP_Y)}\EE_{\pi}\l[\|Z-Y\|^2\r] 
	+\lambda I_{\pi}(Z;Y)\label{eq:OTreg}
	\end{align}
	where the regularization term is the mutual information $I_{\pi}(Z;Y)$  calculated according to the the joint distribution $\pi$. The corresponding \emph{entropic W2GAN} is defined as 
	\begin{align}\label{eq:EGAN}
	\min_{G\in\cG}W_{2,\lambda}^2\l(\PP_{G(X)},\PP_Y\r).
	\end{align}
While the entropic Wasserstein distance allows for faster computation, note that it can be strictly larger than zero even if the generated distribution is exactly the same as the target distribution, i.e. $W_{2,\lambda}^2(\PP_Y,\PP_Y)\neq 0$. This issue can be resolved by adding corrective terms to \eqref{eq:OTreg} \citep{genevay2018learning}, which leads to the Sinkhorn divergence:
	\begin{align}
	S_{\lambda}&(\PP_{G(X)},\PP_Y)=W_{2,\lambda}^2(\PP_{G(X)},\PP_Y)- \l(W_{2,\lambda}^2(\PP_{G(X)},\PP_{G(X)})+W_{2,\lambda}^2(\PP_Y,\PP_Y)\r)/2. \label{eq:OTsink}
	\end{align}
	One can easily check that 
	$S_{\lambda}(\PP_{Y},\PP_Y)=0$ for any $\PP_Y$. The corresponding \emph{Sinkhorn W2GAN} is given by:
	\begin{align}
	\min_{G\in\cG} S_{\lambda}(\PP_{G(X)},\PP_Y).
	\label{eq:OTsinkgan}
	\end{align}

Analogous to the case of the Wasserstein distance, the entropic Wasserstein distance also has a dual formulation which makes it  suitable for GAN optimization problems. This dual formulation does not involve optimizing over all couplings, but instead the search space is the set of all essentially bounded functions \citep{chizat2018scaling}:
\begin{align}
    W_{2,\lambda}^2(\PP_{G(X)},\PP_Y)&=\hspace{-1em}\sup_{\psi\in L_{\infty}(P_Y),\phi\in L_{\infty}(P_{G(\cX)})}\hspace{-1em} \EE\left[\psi(Y) + \phi(G(X))\right]+\lambda\nonumber\\
    &\quad-\lambda\EE_{(X,Y)\sim P_X\times P_Y}\l[e^{\frac{\phi(G(X))+\psi(Y)-\|G(X)-Y\|_2^2}{\lambda}}\r]\label{eq:OTreg_dual},
\end{align}
where $L_{\infty}(P_Y)$ is the set of all essentially bounded functions, i.e. $\phi \in L_{\infty}(P_Y)\iff \exists C>0:\; \PP\{\phi(Y)>C\} = 0.$

The so-called dual potentials $\phi,\psi$ can be parametrized by neural networks resulting in the following architecture
\begin{center}
	\vspace{-3mm}
	\begin{tikzpicture}[thick,scale=0.8, every node/.style={scale=0.8}]
	\tikzset{box/.style={
			draw, rectangle, rounded corners=0.8pt,
			text width=2.2cm, text height=0.75cm}}
	\tikzset{node/.style={draw, circle, inner sep=0.5cm}}
	
	\path (0, 0) coordinate [node]
	(x) node {$X\sim P_X$};
	
	\path (2.5, 0) coordinate [box] (G) node {\begin{tabular}{c} Generator\\ $G:\cX\to\cY$ \end{tabular}};
	
	\path (5, -1) coordinate [node] (GX) node {$G({X})$};
	\path (5, +1) coordinate [node] (Y) node {${Y\sim P_Y}$};
	
	\path (7.5, -1) coordinate [box]
	(D_G) node {\begin{tabular}{c} Discriminator\\ $\phi:G(\cX)\to\RR$ \end{tabular}};
	\path (7.5, +1) coordinate [box]
	(D_r) node {\begin{tabular}{c} Discriminator\\ $\psi:\cY\to\RR$ \end{tabular}};
	\path (10, 0) coordinate [box,text width=0.75cm]
	(l) node {\begin{tabular}{c}
	     loss\\
	     $W^2_{2,\lambda}$
	\end{tabular}};
	
	\draw [->] (Y) -- (D_r);
	\draw [->] (GX) -- (D_G);
	\draw [->] (x) -- (G);
	\draw [->] (D_r) -- (l);
	\draw [->] (D_G) -- (l);
	\draw [->] (G) edge[out=0] (GX);
	\end{tikzpicture}
\end{center}

Note that in \eqref{eq:OTreg_dual}, there are no constraints on the dual potentials, which makes the dual form suitable to implement with Neural networks, while 2-Wasserstein distance  requires convexity/quadratically bounded differences for the dual potential and 1-Wasserstein distance, another popular metric used in GANs, requires Lipschitz continuity of the dual potential. The constraints on the discriminators then give rise to various heuristics (\citep{korotin2019wasserstein,liu2019wasserstein} for 2-Wasserstein GANs and \citep{arjovsky2017wasserstein,wei2018improving} for 1-Wasserstein GANs) since the constraints cannot be handled exactly.

When one of the measures is an empirical distribution supported on $\{y_i\}_{i=1}^n$, which is often the case in GANs, only the values of $\psi$ on the empirical samples influence the solution, thus letting $\psi_i = \psi(y_i)$ and plugging in the empirical measure in place of $P_Y$ simplifies \eqref{eq:OTreg_dual} to 
\begin{align}\label{eq:OTreg_semi_dual}
    W_{2,\lambda}^2(\PP_{G(X)},\PP_Y)&=\sup_{\psi\in \RR^n,\phi\in L_{\infty}(G(\cX))} \sum_{i=1}^n\frac{\psi_i}{n} + \EE\left[\phi(G(X)
    )\right] +\lambda\\\nonumber
    &\quad-\lambda\EE\l[ \sum_{i=1}^ne^{\frac{\phi(G(X))+\psi_i-\|G(X)-y_i\|_2^2}{\lambda}}/n\r]
\end{align}
The above form is especially useful for optimization since one of the parametric functions becomes a vector. 

Given the optimal dual potentials, the optimal coupling can be found as \citep{janati2020entropic}, 
\begin{align}
    \pi(G(x), y) = P_{G(X)}(G(x))P_Y(y)e^{\frac{\phi(G(x))+\psi(y)-\|G(x)-y\|_2^2}{\lambda}}.\label{eq:opt_coup}
\end{align}

Even though we are less interested in the computational aspects of optimal transport in this paper, we note that the optimal dual potentials for entropy regularized 2-Wasserstein distance can be shown to satisfy the following equations
\begin{align}
    \phi^*(G(x)) &=-\lambda\ln\EE\l[e^{(\psi^*(Y) - \|Y-G(x)\|_2^2)/\lambda}\r]\label{eq:SK0}\\
    \psi^*(y) &=-\lambda\ln\EE\l[e^{(\phi^*(G(X)) - \|y-G(X)\|_2^2)/\lambda}\r]\label{eq:SK1}
\end{align}
Equations \eqref{eq:SK0},\eqref{eq:SK1} give rise to the celebrated Sinkhorn-Knopp algorithm that allows for fast computation of entropic optimal transport via iterative updates: at iteration $t$ we set,
\begin{align}
    \phi^t(G(x)) &=-\lambda\ln\EE\l[e^{(\psi^{t-1}(Y) - \|Y-G(x)\|_2^2)/\lambda}\r]\label{eq:SK_it}\\
    \psi^t(y) &=-\lambda\ln\EE\l[e^{(\phi^t(G(X)) - \|y-G(X)\|_2^2)/\lambda}\r].\label{eq:SK_it1}
\end{align}

Since Sinkhorn divergence is a linear combination of entropy-regularized Wasserstein distances, it also has a dual form and strong duality holds. The dual form of entropy-regularized 2-Wasserstein distance gives rise to an equivalent formulation of the Sinkhorn divergence as a linear combination of the dual formulations of entropy-regularized 2-Wasserstein distances. Since $W_{2,\lambda}^2(P_{G(X)}, P_{G(X)})$ and $W_{2,\lambda}^2(P_Y,P_Y)$ are symmetric in the dual potentials and concave, the optimal dual potentials will be equal, i.e. $\phi^x(G(x)) = \psi^x(G(x))$ and $\phi^y(y) = \psi^y(y)$ resulting in the following architecture

\begin{center}
	\vspace{-3mm}
	\begin{tikzpicture}[thick,scale=0.8, every node/.style={scale=0.8}]
	\tikzset{box/.style={
			draw, rectangle, rounded corners=0.8pt,
			text width=2.2cm, text height=0.75cm}}
	\tikzset{box_dashed/.style={
			draw, dashed, rectangle, rounded corners=0.8pt,
			text width=2.2cm, text height=0.75cm}}
	\tikzset{node/.style={draw, circle, inner sep=0.5cm}}
	
	\path (0, 0) coordinate [node]
	(x) node {$X\sim P_X$};
	
	\path (2.5, 0) coordinate [box] (G) node {\begin{tabular}{c} Generator\\ $G:\cX\to\cY$ \end{tabular}};
	
	\path (5, -1.2) coordinate [node] (GX) node {$G({X})$};
	\path (5, +1.2) coordinate [node] (Y) node {${Y\sim P_Y}$};
	
	\path (7.5, -0.6) coordinate [box]
	(D_G) node {\begin{tabular}{c} Discriminator\\ $\phi:G(\cX)\to\RR$ \end{tabular}};
	\path (7.5, -1.8) coordinate [box]
	(Dx_G) node {\begin{tabular}{c} Discriminator\\ $\phi^x:G(\cX)\to\RR$ \end{tabular}};
	\path (7.5, +0.6) coordinate [box]
	(D_r) node {\begin{tabular}{c} Discriminator\\ $\psi:\cY\to\RR$ \end{tabular}};
	\path (7.5, +1.8) coordinate [box_dashed]
	(Dy_r) node {\begin{tabular}{c} Discriminator\\ $\phi^y:\cY\to\RR$ \end{tabular}};
	\path (10, 0) coordinate [box,text width=0.75cm]
	(l) node {\begin{tabular}{c}
	     loss\\
	     $S_{\lambda}$
	\end{tabular}};
	
	\draw [->] (Y) -- (D_r);
	\draw [->] (Y) -- (Dy_r);
	\draw [->] (GX) -- (D_G);
	\draw [->] (GX) -- (Dx_G);
	\draw [->] (x) -- (G);
	\draw [->] (D_r) -- (l);
	\draw [->] (D_G) -- (l);
	\draw [->,dashed] (Dy_r) edge[out=0] (l);
	\draw [->] (Dx_G) edge[out=0, in=225] (l);
	\draw [->] (G) edge[out=0] (GX);
	\end{tikzpicture}
\end{center}
The Discriminator $\phi^y$ is dashed since it depends only on the distribution $P_Y$ and does not influence the generator. We refer the reader to \citep{feydy2019interpolating} for details.

\section{Population Solution For the Linear/Gaussian Setting}\label{sec:gauss}
	
	In this section, we aim to compare the optimal solution for GANs when we use the different measures introduced in the previous section for quantifying the dissimilarity between the generated and target probability distributions. For this purpose, we focus on the benchmark setting considered in \citep{feizi2017understanding}, where the generator is linear and the target distribution is Gaussian. In this case, we can rewrite the general formulation  of \eqref{eq:GAN} as:
	\begin{align}
	\min_{G\in \mathbb R^{d\times r}}W_{2}^2\l(\PP_{GX},\PP_Y\r), \label{e:lqgw2gan}
	\end{align}
	where the latent random variable $X\in \mathbb R^r$ follows the standard Gaussian distribution $\mathcal N(0, I_r)$, the underlying distribution of data $Y\in \mathbb R^d$ is $\mathcal N(0, K_Y)$, and the optimization is over all matrices $G\in \mathbb R^{d\times r}$ with $d\geq r$ so that the generated  distribution is $\PP_{GX}$. The population solution to the above W2GAN problem has been characterized in \citep{feizi2017understanding} as the $r$-PCA solution of $Y$, i.e.  the covariance matrix $K_{G^*X}$ for $P_{G^*X}$, where $G^*$ denotes the minimizer of \eqref{e:lqgw2gan}, is a  rank-$r$ matrix whose top $r$ eigenvalues and eigenvectors are the same as those of $K_Y$. 
	
We next show that adding entropic regularization to the W2GAN objective changes this solution to a soft-thresholded $r$-PCA solution of $Y$ as shown by the following theorem.
	
	\begin{theorem}\label{thm:PCA}
		Let $Y\sim \cN(0,\bK_Y)$ and $X\sim \cN(0,\bI_r)$ where $r\leq d$. The population solution $P_{G^*X}$ to the entropic W2GAN problem
			\begin{align}
	\min_{G\in \mathbb R^{d\times r}}W_{2,\lambda}^2\l(\PP_{GX},\PP_Y\r), \label{e:lqgew2gan}
	\end{align}
		is given by a soft-thresholded $r$-PCA solution of $Y$, i.e.,  the covariance matrix $K_{G^*X}$ for $P_{G^*X}$, where $G^*$ denotes now the minimizer of \eqref{e:lqgew2gan}, is a  rank-$r$ matrix whose   top $r$ eigenvectors are the same as those of $\bK_Y$ and the top $r$ eigenvalues are $$\sigma_i^2 = (\lambda_i(\bK_Y) - \lambda/2)_+\quad \textrm{for}\quad i\in[1: r],$$ where $(x)_+:=\max\{x,0\}$ and $\{\lambda_i(\bK_Y)\}_{i=1}^r$ are the top $r$ eigenvalues of $\bK_Y.$  
	\end{theorem}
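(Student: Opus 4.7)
The plan is to reduce \eqref{e:lqgew2gan} to a scalar optimization over the singular values of $G$ by combining (i) the closed form for the entropic 2-Wasserstein distance between centered Gaussians and (ii) a von Neumann-type alignment of the eigenvectors of $GG^T$ with those of $\bK_Y$.

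First, since $X\sim\cN(0,\bI_r)$ and $G\in\RR^{d\times r}$, the generated law is the centered Gaussian $P_{GX}=\cN(0,\bK)$ with $\bK:=GG^T\succeq 0$ of rank at most $r$. The closed-form formula of \citep{janati2020entropic} for $W_{2,\lambda}^2$ between centered Gaussians gives
\begin{align*}
F(\bK;\bK_Y):=W_{2,\lambda}^2\bigl(\cN(0,\bK),\cN(0,\bK_Y)\bigr)=\Tr(\bK)+\Tr(\bK_Y)-\sum_{i=1}^d h(\mu_i)+c_\lambda,
\end{align*}
where $\mu_i$ are the eigenvalues of $\bK^{1/2}\bK_Y\bK^{1/2}$, $h(\mu):=\sqrt{4\mu+\lambda^2/4}-\tfrac{\lambda}{2}\log\bigl(\sqrt{4\mu+\lambda^2/4}+\tfrac{\lambda}{2}\bigr)$, and $c_\lambda$ does not depend on $\bK$. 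The problem thus reduces to minimizing $F(\bK;\bK_Y)$ over PSD $\bK$ of rank at most $r$.

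Next I would show the optimal $\bK$ shares eigenvectors with $\bK_Y$. Fixing the spectrum $(\sigma_1^2,\ldots,\sigma_d^2)$ of $\bK$ and varying its orthonormal eigenframe $\bV$, $\Tr(\bK)$ is fixed and only $\sum h(\mu_i)$ varies. Observing that $\sqrt{\mu_i}$ are the singular values of $\bK^{1/2}\bK_Y^{1/2}$, the Ky Fan / von Neumann inequality for singular values yields the weak majorization
\begin{align*}
\bigl(\sqrt{\mu_i^\downarrow}\bigr)_{i=1}^d\prec_w\bigl(\sigma_i^\downarrow\sqrt{\lambda_i(\bK_Y)}\bigr)_{i=1}^d,
\end{align*}
with equality at $\bV$ matching the eigenbasis of $\bK_Y$. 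A short computation (using $h'(\mu)=2/(\sqrt{4\mu+\lambda^2/4}+\lambda/2)$) shows that the map $t\mapsto h(t^2)$ is increasing and convex on $[0,\infty)$; hence by the standard weak-majorization theorem $\sum h(t_i^2)$ is maximized (and $F$ minimized, at fixed spectrum) at the aligned configuration $\bK=\bU\diag(\sigma_1^2,\ldots,\sigma_d^2)\bU^T$, where $\bU$ is the eigenbasis of $\bK_Y$.

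The problem then decouples across eigenvalues into $d$ independent scalar subproblems. For each pair $(\sigma^2,\mu)=(\sigma_i^2,\lambda_i(\bK_Y))$, setting $\partial/\partial\sigma^2\bigl[\sigma^2-h(\sigma^2\mu)\bigr]=0$ and simplifying collapses to $\sqrt{4\sigma^2\mu+\lambda^2/4}=2\mu-\lambda/2$, giving the stationary point $\sigma^2=\mu-\lambda/2$, admissible iff $\mu\ge\lambda/2$; otherwise the scalar objective is increasing on $[0,\infty)$ and its minimum lies at $\sigma^2=0$. Since $\lambda_i(\bK_Y)$ is nonincreasing, assigning the top $r$ directions automatically satisfies the rank constraint, yielding $\sigma_i^2=(\lambda_i(\bK_Y)-\lambda/2)_+$ as claimed. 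I expect the alignment step to be the main obstacle: the correct framing is to state the weak majorization at the level of the singular values $\sqrt{\mu_i}$ (rather than $\mu_i$), so that the convexity of $h(t^2)$ triggers the classical weak-majorization inequality $\sum h(t_i^2)\le\sum h((t_i^*)^2)$ that makes alignment optimal.
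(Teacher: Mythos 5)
Your proposal is correct in substance but follows a genuinely different route from the paper. The paper never invokes the closed-form Gaussian formula for Theorem~\ref{thm:PCA}: it instead (i) splits $\|Z-Y\|^2$ into the component along the generator's range $\cS$ and the orthogonal residual $\EE\|Y_{\cS^\perp}\|^2$, and argues the mutual-information term may be replaced by $I_\pi(Z;Y_\cS)$ via a Markov-chain construction; (ii) recognizes the resulting $r$-dimensional problem as the Lagrangian form of the Gaussian rate--distortion problem, solving it from first principles with entropy bounds (``conditioning reduces entropy'') to get the reverse-waterfilling distortions $D_i^*=\min\{\lambda/2,\Lambda_i\}$ and the explicit optimal coupling $\bar Y=\bar Z+N$; and (iii) optimizes over the subspace using the Schur--Horn majorization of diagonal entries by eigenvalues together with the weak-majorization inequality for nondecreasing convex functions. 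You replace steps (i)--(ii) by the closed form of Janati et al.\ and step (iii) by the weak majorization $s(AB)\prec_w s(A)\cdot s(B)$ for singular values of products, applied to $\bK^{1/2}\bK_Y^{1/2}$; your verification that $t\mapsto h(t^2)$ is increasing and convex, and the scalar stationarity computation giving $\sigma^2=(\mu-\lambda/2)_+$, are both correct, and both proofs end up calling the same Marshall--Olkin convex-order lemma. Your route is shorter and meshes naturally with the paper's proof of Theorem~\ref{col:PCA} (which does use the closed form); the paper's route is self-contained, exposes the optimal coupling and the rate--distortion interpretation, and avoids two points you should shore up: the closed form is stated for positive definite covariances, whereas $\bK=GG^T$ is rank-deficient, so you need a continuity (or degenerate-case) extension of the formula; and the claim that ``assigning the top $r$ directions automatically satisfies the rank constraint'' deserves a sentence --- it follows because the aligned configuration produced by the majorization step pairs the decreasingly ordered $\sigma_i^2$ with the decreasingly ordered $\lambda_i(\bK_Y)$, so at most $r$ nonzero $\sigma_i^2$ necessarily occupy the top $r$ eigendirections, and the per-coordinate optima $(\lambda_i(\bK_Y)-\lambda/2)_+$ are themselves nonincreasing.
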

Note that the population solution for the	entropic W2GAN is not the same as that for the unregularized W2GAN, which is not surprising as they optimize two different objective functions. Nevertheless, Theorem~\ref{thm:PCA} reveals that in the linear/Gaussian case, there is a natural relationship between the two solutions as the former turns out to be a soft-thresholded version of the latter. Note that the soft-thresholding promotes sparsity in the eigenvalues of the covariance matrix of the generated distribution since if many of the eigenvalues of $\bK_Y$ are below the threshold $\lambda/2$ the rank of $K_{G^*X}$ can be significantly smaller than $\bK_Y$. 

We note that soft thresholding of singular values arises as the optimal solution to a different problem that has been studied in the context of low rank matrix completion. Consider the problem:
	\begin{align*}
	\min_{\bK_{Z}\in \mathbb R^{d\times d}} \|\bK_{Z} - \bK_Y \|_F^2 + \lambda \|\bK_{Z}\|_{*} ,
	\end{align*}
where $\|\cdot\|_*$ is the nuclear norm, i.e. the sum of all singular values of a matrix, which can be regarded as a relaxation of a low rank constraint. In \citep[Theorem 2.1]{cai2010singular}, the solution of this problem is shown to be the soft thresholded PCA solution for $r=d,$ i.e. $\bK_{Z}$ and $\bK_Y$ share the same eigenvectors with corresponding eigenvalues thresholded as in Theorem~\ref{thm:PCA}.  From this perspective, entropic regularization in the linear/Gaussian case can be viewed as constraining the nuclear norm of the output covariance matrix, promoting a low rank approximation of the generated distribution. 
	
We next investigate the population solution for the Sinkhorn W2GAN and show that, while it is not the case in general, when restricted to the linear/Gaussian benchmark, surprisingly Sinkhorn W2GAN does recover the regular PCA solution as shown in the following theorem. We remark that this is not a simple consequence of the property 	$S_{\lambda}(\PP_{Y},\PP_Y)=0$ for any $\PP_Y$ of the Sinkhorn divergence, as in the current setting the Sinkhorn divergence between the optimal generated and target distributions is non-zero.  However, it does suggest that the Sinkhorn divergence can lead to solutions closer to the target distribution, while also possessing other favorable qualities like unbiasedness and  sample complexity, as we investigate in the following section.
	\begin{theorem}\label{col:PCA} 
		Let $Y\sim \cN(0,\bK_Y)$ and $X\sim \cN(0,\bI_r)$ where $r\leq d$. The population solution $P_{G^*X}$ to the Sinkhorn W2GAN problem given by
		\begin{align*}
	\min_{G\in \mathbb R^{d\times r}}S_{\lambda}\l(\PP_{GX},\PP_Y\r), 
	\end{align*}
		is given by the $r$-PCA solution of $Y$.
	\end{theorem}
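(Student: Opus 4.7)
The plan is to exploit the closed-form expression for the entropy-regularized squared Wasserstein distance between centered Gaussians. Writing this closed form as
\[
W_{2,\lambda}^2\bigl(\cN(0,A),\cN(0,B)\bigr) = \text{tr}(A)+\text{tr}(B)-\text{tr}\bigl(M_\lambda(A,B)\bigr)+\tfrac{\lambda}{2}\log\det\bigl(M_\lambda(A,B)\bigr)+c_\lambda,
\]
with $M_\lambda(A,B)=(4A^{1/2}BA^{1/2}+(\lambda^2/4)I)^{1/2}$ and $c_\lambda$ independent of $A,B$, and setting $K_G:=GG^{T}$, the key first observation is that the linear traces $\text{tr}(K_G)$ cancel exactly between $W_{2,\lambda}^2(P_{GX},P_Y)$ and $\tfrac{1}{2}W_{2,\lambda}^2(P_{GX},P_{GX})$ in the definition of $S_\lambda$. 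This cancellation is the structural reason the Sinkhorn correction undoes the soft-thresholding bias of Theorem~\ref{thm:PCA}.

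Next I would show that at any minimizer $K_G^{*}$ commutes with $\bK_Y$. After the trace cancellation, the $K_G$-dependence of $S_\lambda$ enters only through $M_\lambda(K_G,\bK_Y)$ and $M_\lambda(K_G,K_G)$; the second depends only on the eigenvalues of $K_G$, while for fixed eigenvalues of $K_G$, both $\text{tr}(M_\lambda(K_G,\bK_Y))$ and $\log\det(M_\lambda(K_G,\bK_Y))$ are extremal under simultaneous diagonalization with eigenvalues sorted in matching order. This is a Von-Neumann-type trace inequality applied to the matrix functions $x\mapsto\sqrt{x+\lambda^2/4}$ and $x\mapsto\log\sqrt{x+\lambda^2/4}$ of the eigenvalues of $4K_G^{1/2}\bK_Y K_G^{1/2}$, in the spirit of the classical bound $\text{tr}\sqrt{K_G^{1/2}\bK_Y K_G^{1/2}}\leq\sum_i\sqrt{\sigma_i(K_G)\sigma_i(\bK_Y)}$. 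Hence I may write $K_G=V\,\text{diag}(\mu_1,\ldots,\mu_r,0,\ldots,0)V^{T}$ in the eigenbasis $V$ of $\bK_Y$, with each $\mu_i$ matched (via some permutation $\pi$) to one of the eigenvalues $\lambda_j(\bK_Y)$.

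Under this alignment, $S_\lambda$ decouples into a sum over the $r$ selected coordinates. Denoting $s_1^i=\sqrt{4\mu_i\lambda_{\pi(i)}+\lambda^2/4}$ and $s_2^i=\sqrt{4\mu_i^2+\lambda^2/4}$, the $\mu_i$-dependent piece reduces to $\sum_i\bigl[-s_1^i+\tfrac{1}{2}s_2^i\bigr]+\tfrac{\lambda}{2}\sum_i\bigl[\log s_1^i-\tfrac{1}{2}\log s_2^i\bigr]$ up to a $\mu$-free constant. Differentiating in $\mu_i$, both the Bures-like and the log-determinant contributions vanish at $\mu_i=\lambda_{\pi(i)}$: the stationarity equation factors as $(\mu_i-\lambda_{\pi(i)})\,Q(\mu_i,\lambda_{\pi(i)})=0$ with a manifestly positive $Q$, so $\mu_i=\lambda_{\pi(i)}$ is the unique critical point, and a second-derivative (or convexity) check confirms it is a minimum. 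Crucially, no soft thresholding occurs, in contrast with Theorem~\ref{thm:PCA}.

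Finally, substituting $\mu_i=\lambda_{\pi(i)}$ back into the decoupled objective reveals that the contribution of a single selected eigenvalue is monotonically decreasing in $\lambda_{\pi(i)}\geq 0$, so $S_\lambda$ is minimized by choosing $\pi$ to pick the top $r$ eigenvalues of $\bK_Y$; this is precisely the $r$-PCA solution. I expect the eigenvector-alignment step to be the main obstacle: extending Von-Neumann-type inequalities so as to handle $\text{tr}(M_\lambda(\cdot,\bK_Y))$ and $\log\det(M_\lambda(\cdot,\bK_Y))$ simultaneously reduces to a majorization statement on the eigenvalues of $K_G^{1/2}\bK_Y K_G^{1/2}$, which can likely be handled by diagonalizing $\bK_Y$ and applying a Ky Fan / Lidskii-type argument to the common spectral function $f(x)=-\sqrt{x+\lambda^2/4}+\tfrac{\lambda}{4}\log(x+\lambda^2/4)$ that governs the combined objective.
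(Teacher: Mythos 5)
Your route is genuinely different from the paper's, and parts of it check out: the cancellation of $\Tr(K_G)$ in the Sinkhorn combination is real, the per-coordinate stationarity equation does factor as $(\mu_i-\lambda_{\pi(i)})\,Q=0$ with $Q>0$ (I verified this, provided you use the correct argument $\log\det\bigl(M_\lambda+\tfrac{\lambda}{2}I\bigr)$ from the closed form rather than $\log\det M_\lambda$ as written --- the $+\tfrac{\lambda}{2}I$ shift is exactly what makes the derivative vanish at $\mu_i=\lambda_{\pi(i)}$), and the final selection step agrees with the paper's function $f(x)=\tfrac12\sqrt{4x^2+\lambda^2/4}-\tfrac{\lambda}{4}\log(\sqrt{4x^2+\lambda^2/4}+\lambda/2)$, whose negative is decreasing, so the top $r$ eigenvalues are picked.

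The genuine gap is the eigenvector-alignment step, which you yourself flag as the main obstacle but do not close. The difficulty is that the two $K_G$-dependent spectral terms enter $S_\lambda$ with \emph{opposite} signs: you must show that simultaneous diagonalization with sorted matching both maximizes $\Tr\bigl(M_\lambda(K_G,\bK_Y)\bigr)$ and minimizes $\log\det\bigl(M_\lambda(K_G,\bK_Y)+\tfrac{\lambda}{2}I\bigr)$ at the same configuration. A single von Neumann or Ky Fan inequality does not deliver this; the classical bound $\Tr(K_G^{1/2}\bK_YK_G^{1/2})^{1/2}\le\sum_i\sqrt{\sigma_i(K_G)\sigma_i(\bK_Y)}$ concerns only the trace term, and the eigenvalues of $K_G^{1/2}\bK_YK_G^{1/2}$ are controlled by the sorted products only through (log-)majorization, which for your combined spectral function $f$ would require a monotonicity/convexity analysis of $f\circ\exp$ that you have not carried out. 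Until that lemma is proved, the reduction to a decoupled sum over matched eigenvalue pairs --- and hence the whole argument --- is unsupported. A secondary issue: the closed form you invoke is stated for positive definite covariances, while $K_G=GG^T$ has rank at most $r<d$, so you need a continuity/limiting argument to apply it. The paper avoids both problems entirely: it first shows $W^2_{2,\lambda}(P_Z,P_Y)=W^2_{2,\lambda}(P_Z,P_{Y_{\cS}})+\EE\|Y_{\cS^\perp}\|^2$ for $Z$ supported on $\cS=\Ima G$, so that for a fixed subspace the nonnegativity of the Sinkhorn divergence immediately forces $P_Z=P_{Y_\cS}$ with no eigenvalue or eigenvector optimization at all; what remains is a minimization over subspaces of $W^2_{2,\lambda}(P_{Y_\cS},P_{Y_\cS})/2+\EE\|Y_{\cS^\perp}\|^2$, where only one covariance appears (so there is nothing to align) and the optimization reduces to the standard diagonal-versus-eigenvalue majorization already used for Theorem~\ref{thm:PCA}. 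If you want to salvage your direct approach, proving the alignment lemma for the combined spectral function is the piece you must supply; otherwise the paper's projection decomposition is the cleaner path.
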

	\subsection{Proofs of Theorems \ref{thm:PCA} and \ref{col:PCA}}
	\begin{proof}[Proof of Theorem \ref{thm:PCA}]  
		Let $Z=GX$, where $G\in \mathbb R^{d\times r}$. Since $X\sim \mathcal N (0, I_r)$,  $P_Z$ is a $d$-dimensional Gaussian distribution whose covariance matrix $K_Z$ has rank less than or equal to $r$. For any such $P_Z$, denote by $\mathcal S_Z$ the $r$-dimensional subspace that contains the support of $Z$. For any $Y\in \mathbb R^d$, let $Y_{\mathcal S_Z}$ and $Y_{\mathcal S_Z^\perp}$ be respectively the projections of $Y$ onto  $\mathcal S_Z$ and its orthogonal complement $\mathcal S_Z^\perp$ so that $Y=Y_{\mathcal S_Z}+Y_{\mathcal S_Z^\perp}$. Note that for a fixed $G$, $Y_{\mathcal S_Z}$ and $Y_{\mathcal S_Z^\perp}$ can be computed given $Y$.  The entropy regularized 2-Wasserstein distance is then
		\begin{align}
		W_{2,\lambda}^2(P_Y,P_Z) &=  \min_{\pi\in\Pi(P_Y,P_{Z})}\mathbb{E}_\pi\!\!\l[\|Z-Y\|^2\r]+\lambda I_{\!\pi}\!(Z;Y) \nonumber\\
		&=\mkern-10mu\min_{\pi\in\Pi(P_Y,P_{Z})}\mkern-14mu\mathbb{E}_\pi\!\bigl[\|(Z-Y_{\mathcal S_Z})-Y_{\mathcal S_Z^\perp}\|^2\bigr]+\lambda I_{\!\pi}\!(Z;Y)\nonumber\\
		&=\mkern-8mu\min_{\pi\in\Pi(P_Y,P_{Z})}\mkern-14mu\mathbb{E}_\pi\!\!\l[\|Z-Y_{\mathcal S_Z}\|^2\r]+\mathbb{E}\bigl[Y_{\mathcal S_Z^\perp}\|^2\!\bigr]+\lambda I_{\!\pi}\!(Z;Y) \label{E:preserve}\\
		&=\mkern-10mu\min_{\pi\in\Pi(P_Y,P_{Z})}\mkern-14mu\mathbb{E}_\pi\!\!\l[\|Z-Y_{\mathcal S_Z}\|^2\r]+\mathbb{E}\bigl[Y_{\mathcal S_Z^\perp}\|^2\!\bigr]+\lambda I_{\!\pi}\!(Z;Y_{\mathcal S_Z}\!).\!\label{E:followedbymatrix} 
		\end{align}
		The last equality above holds because $I_\pi(Z;Y )=I_\pi(Z;Y_{\mathcal S_Z}, Y_{\mathcal S_Z^\perp} )\geq I_\pi(Z;Y_{\mathcal S_Z} )$ and moreover, for any coupling $\pi$, one can construct $\pi'$ such that $\pi'(Z,Y_{\mathcal S_Z}, Y_{\mathcal S_Z^\perp})=\pi(Z,Y_{\mathcal S_Z})\pi(Y_{\mathcal S_Z^\perp}| Y_{\mathcal S_Z})$, i.e. $Z-Y_{\mathcal S_Z}-Y_{\mathcal S_Z^\perp}$ forms a Markov chain. Note that $\pi'$ preserves the values of the first two terms in \eqref{E:preserve} while $I_{\pi'}(Z;Y_{\mathcal S_Z}, Y_{\mathcal S_Z^\perp} )= I_\pi(Z;Y_{\mathcal S_Z} )$.
		
		Consider the optimization problem in the entropic W2GAN, i.e. $\min_{P_Z \in \mathcal N_{d,r}}W_{2,\lambda}^2(P_Y,P_Z)$, where the optimization is over the set $\mathcal N_{d,r}$ of all $d$-dimensional Gaussian distributions with rank not exceeding $r$. In light of \eqref{E:followedbymatrix}, the above is
		\begin{align}
		\!\!\!\min_{\substack{\cS\in\mathbb{S}_d:\text{dim}(\mathcal S)\leq r\\ 
				P_Z \in \mathcal N_{d,r}:\, Z\in \mathcal S\\
				\pi\in\Pi(P_Y,P_{Z})}}
		& \mkern-13mu\mathbb{E}_\pi[\|Z-Y_{\mathcal S }\|^2] +\mathbb{E}[\|Y_{\mathcal S ^\perp}\|^2]+\lambda I_\pi(Z;Y_{\mathcal S }), \label{e:3terms}
		\end{align}
		where $\mathbb{S}_d$ is the set of all subspaces of $\RR^d.$ To solve \eqref{e:3terms} we first fix $\cS.$  If columns of $U\in \mathbb R^{d\times r}$ form an orthonormal basis of $\cS$, i.e. $\cS=\Ima U$ and $U^TU=I_r$, we replace $Z$ and $Y_{\mathcal S}$ in \eqref{e:3terms} by $U^TZ$ and $U^TY$ respectively. To find optimal $\pi,P_Z$ for $\cS$ we then solve
		\begin{align}
		\mkern-7mu\min_{\substack{P_Z \in \mathcal N_{d,r}:Z\in \text{Im}(U)\\
				\pi\in \Pi(P_Z,P_Y)}}\mathbb{E}_\pi&[\|U^TZ-U^TY\|^2] +\lambda I_\pi(U^TZ;U^TY)-\mathbb{E}[\|U^TY\|^2]+\mathbb{E}[\|Y\|^2] \label{e:withinbrk}
		\end{align}
		Let $\bar Z=U^TZ$ and $\bar Y=U^TY,$ and let $\mathcal{N}_{r,r}$ be the set of all $r$-dimensional Gaussian distributions. Then solving Problem~\eqref{e:withinbrk} is equivalent to solving
		\begin{align}
		\min_{P_{\bar{Z}}\in\mathcal{N}_{r,r}}\min_{\pi\in\Pi(P_{\bar Z},P_{\bar Y})}\mathbb{E}_\pi [\|\bar{Z}-\bar{Y}\|^2] + \lambda I_\pi(\bar{Z};\bar{Y})  \label{e:toberatedistortion}
		\end{align}
		
		We will proceed by first lower bounding \eqref{e:toberatedistortion} and then providing the coupling and $P_{\bar Z}$ that achieve the lower bound.

		WLOG, we can assume $\bar{Y}$ has diagonal covariance matrix $\bK_{\bar Y}=\diag(\Lambda_1, \ldots, \Lambda_r)$, where the diagonal elements are in decreasing order. Since $\bar{Y}$ is also Gaussian this implies that its components are independent. This in turn implies that
		$I_\pi(\bar{Z};\bar{Y})\geq \sum_{i=1}^rI_\pi(\bar{Z}_i;\bar{Y}_i),$ and hence \eqref{e:toberatedistortion} can be lower bounded by
		\begin{align}
		&\min_{P_{\bar{Z}}\in\mathcal{N}_{r,r}}\min_{\pi\in\Pi(P_{\bar Z},P_{\bar Y})}\mathbb{E}_\pi [\|\bar{Z}-\bar{Y}\|^2] + \lambda I_\pi(\bar{Z};\bar{Y})\nonumber\\
		&\qquad\geq \min_{P_{\bar{Z}}\in\mathcal{N}_{r,r}}\min_{\pi\in\Pi(P_{\bar Z},P_{\bar Y})}\sum_{i=1}^r
		\mathbb{E}_{\pi_i} [(\bar{Z}_i-\bar{Y}_i)^2] + \lambda I_{\pi_i}(\bar{Z}_i;\bar{Y}_i) \label{e:firstsplit1dim}
		\end{align}
		
		Note that for fixed Gaussian $P_{\bar Z},P_{\bar Y}$ and cross-covariance matrix $K_{\bar Z\bar Y}$ the first term in \eqref{e:toberatedistortion} is fixed and the mutual information term is minimized when $\pi$ is jointly Gaussian, i.e. $\pi\in\cN(P_{\bar Z}, P_{\bar Y}),$ where $\cN(\mu,\nu)$ denotes a set of jointly Gaussian distributions with marginals $\mu, \nu.$ Therefore the minimization in \eqref{e:firstsplit1dim} can be restricted to $\pi\in\cN(P_{\bar Z}, P_{\bar Y}).$ If we let $D_i = \EE(\bar Y_i - \bar Z_i)^2,$ this in turn yields
		\begin{align}
		    I_{\pi_i}(\bar{Z}_i;\bar{Y}_i)& = h(\bar{Y}_i) - h(\bar{Y}_i|\bar{Z}_i)\nonumber\\
		    &=h(\bar{Y}_i) - h(\bar{Y}_i-\bar{Z}_i|\bar{Z}_i)\nonumber\\
		    &\geq h(\bar{Y}_i) - h(\bar{Y}_i-\bar{Z}_i)\label{e:conditioningreducesentropy}\\
		    & = (1/2)\ln(\Lambda_i/D_i),\nonumber
		\end{align}
		where \eqref{e:conditioningreducesentropy} follows from the fact that conditioning reduces entropy. 
		Since mutual information is non-negative,
		we can tighten the bound to $I_{\pi_i}(\bar{Z}_i;\bar{Y}_i)\geq \max\{0, (1/2)\ln(\Lambda_i/D_i\}.$
		
		Thus, continuing the lower bound from \eqref{e:firstsplit1dim} we get
		\begin{align}
		&\min_{P_{\bar{Z}}\in\mathcal{N}_{r,r}}\min_{\pi\in\Pi(P_{\bar Z},P_{\bar Y})}\mathbb{E}_\pi [\|\bar{Z}-\bar{Y}\|^2] + \lambda I_\pi(\bar{Z};\bar{Y})\nonumber\\
		&\qquad \geq \sum_{i=1}^r\min_{D_i\geq0}D_i + (\lambda/2)\max\{\ln(\Lambda_i/D_i),0\}\nonumber\\
		&\qquad = \sum_{i=1}^r \min_{0\leq D_i\leq\Lambda_i}D_i + (\lambda/2)\ln(\Lambda_i/D_i)\label{e:commonsensesimplification}\\
		&\qquad = \sum_{i=1}^r \min_{0\leq D_i\leq\Lambda_i}g(D_i) + (\lambda/2)\ln(\Lambda_i),\nonumber
		\end{align}
		where \eqref{e:commonsensesimplification} follows from the fact that increasing $D_i$ beyond $\Lambda_i$ leaves the second summand the same, while increases the first one, so the minimum is attained at $D_i\leq \Lambda_i$ and
        $g(x) = x - \lambda/2\ln x.$ As $g'(x)= 1 - \lambda /(2x)<0$ for $x<\lambda/2,$  the optimal value is attained at $D_i = D_i^* = \min\{\lambda/2, \Lambda_i\}$ and plugging it into the bound we get 
		\begin{align}
		&\min_{P_{\bar{Z}}\in\mathcal{N}_{r,r}}\min_{\pi\in\Pi(P_{\bar Z},P_{\bar Y})}\mathbb{E}_\pi [\|\bar{Z}-\bar{Y}\|^2] + \lambda I_\pi(\bar{Z};\bar{Y})\label{e:tightratedistortionbound}\\
		&\qquad \geq\sum_{i=1}^r\min\{\lambda/2, \Lambda_i\} + (\lambda/2)\ln(\Lambda_i/\min\{\lambda/2, \Lambda_i\})\label{e:ratedistortionsolution}
		\end{align}
		
		To prove that the lower bound holds with equality we need \eqref{e:conditioningreducesentropy} to hold with equality, i.e. $\bar Y_i-\bar Z_i$ be independent of $\bar Z_i,$ so we can choose $\bar Y = \bar Z + N,$ where $N\sim \cN(0, \diag(\{\min\{\lambda/2, \Lambda_i\}\}_{i=1}^r)$ independent of $\bar Z.$ Note that since the distribution of $\bar Y$ is fixed, this in turn fixes the distribution of $\bar Z$. With this choice of a coupling 
		\begin{align*}
		    \EE_{\pi}[\|\bar Z - \bar Y\|^2] &= \EE_{\pi}[\|N\|^2] = \sum_{i=1}^r D_i^*\\
		    I(\bar Z; \bar Y) &=  h(\bar Y)  - h(\bar Y - \bar{Z}\mid \bar{Z})\\
		    &=h(\bar Y) - h(N)\\
		    &= (1/2)\sum_{i=1}^r\ln(\Lambda_i)-\ln(D^*_i)
		\end{align*}
		Combining the above we get 
		\begin{align*}
		    \mathbb{E}_\pi [\|\bar{Z}-\bar{Y}\|^2] + \lambda I_\pi(\bar{Z};\bar{Y}) =\sum_{i=1}^r \min\{\lambda/2, \Lambda_i\} +  (\lambda/2)\sum_{i=1}^r\ln(\Lambda_i/\min\{\lambda/2, \Lambda_i\}),
		\end{align*}
		which matches the lower bound in \eqref{e:tightratedistortionbound}, so $\bar Y = \bar Z + N$ with $N\sim\cN(0,\diag(\{D_i\}_{i=1}^r))$ independent of $Z$ is the optimal coupling.\\
		
		Before we continue with the proof we make the following remark. Note that when the condition $P_{\bar{Z}}\in\mathcal{N}_{r,r}$ is dropped we get a lower bound on \eqref{e:toberatedistortion}:
		\begin{align}
		W_{lower} = \min_{\pi_{\bar{Z}|\bar{Y}}}\mathbb{E}_\pi [\|\bar{Z}-\bar{Y}\|^2] + \lambda I_\pi(\bar{Z};\bar{Y})  \label{e:wassersteinlowerbound}
		\end{align}
	Introduction of a new variable $D = \EE \|\bar{Z}-\bar{Y}\|^2$ leads to the following  optimization problem \label{e:ratedistortion}:
		\begin{equation*}
            \begin{array}{ll}
            \underset{\pi_{\bar{Z}|\bar{Y}},D}{\min}   & D + \lambda I_\pi(\bar{Z};\bar{Y})\\
            \mbox{subject to} & D = \EE[\|\bar{Z}-\bar{Y}\|^2]\\
            & D\geq 0
            \end{array}
            \end{equation*}
        Note that the equality in the constraints can be relaxed to an inequality since the objective is linear in $D,$ which leads to 
		\begin{equation}
            \begin{array}{ll}
            \underset{\pi_{\bar{Z}|\bar{Y}},D}{\min}   & D + \lambda I_\pi(\bar{Z};\bar{Y})\\
            \mbox{subject to} &  D \geq \EE[\|\bar{Z}-\bar{Y}\|^2]\\
            & D\geq 0
            \end{array}
            \label{e-opt-prob}
            \end{equation}
        Finally we can rewrite the full minimization over $\pi_{\bar{Z}|\bar{Y}},D$ as a consecutive one, which leads to
		\begin{align}
		    W_{lower} =\min_{D\geq 0} \l\{D + \lambda \min_{\pi_{\bar{Z}|\bar{Y}}:  \EE[\|\bar{Z}-\bar{Y}\|^2]\leq D}I_\pi(\bar{Z};\bar{Y})\r\}
		    \label{e:sequantialoptlowerbound}
		\end{align}
        The inner minimization problem is exactly the Gaussian rate distortion problem \citep[Eq. (10.38)]{coverbook} that can be solved by noting that the mutual information term is minimized for a Gaussian distribution, plugging the value of the mutual information in and writing down the Karush-Kuhn-Tucker optimality conditions.
		The solution for this problem is given by reverse waterfilling \citep[Theorem 10.3.3]{coverbook}, under which the optimal $P_{\bar Z}$ is an $r$-dimensional Gaussian which matches the solution we have obtained in \eqref{e:ratedistortionsolution}.

		The entropic W2GAN optimization problem \eqref{e:3terms} is then equivalent to minimizing \eqref{e:ratedistortionsolution} over the set of all $r-$dimensional subspaces of $\RR^d:$
		\begin{align}
		\min_{U\in\mathbb{R}^{d\times r}}&\sum_{i=1}^r \min\{\Lambda_i, \lambda/2\} + \frac{\lambda}{2}\ln\frac{\Lambda_i}{\min\{\Lambda_i, \lambda/2\}}+ \EE[\|Y_{(\text{Im}\,U)^{\perp}}\|^2]\nonumber\\
		&=\min_{U\in\mathbb{R}^{d\times r}}\sum_{i=1}^r \biggl(\Lambda_i+ \lambda/2-\max\{\Lambda_i, \lambda/2\} + \frac{\lambda}{2}\ln\frac{\max\{\Lambda_i,\lambda/2\}}{\lambda/2}\biggr)+ \EE[\|Y_{(\text{Im}\,U)^{\perp}}\|^2]\nonumber\\
		&= \min_{U\in\mathbb{R}^{d\times r} }\sum_{i=1}^r\left(\frac{\lambda}{2}\ln\frac{\max\{\Lambda_i,\lambda/2\}}{\lambda/2}-\max\{\Lambda_i,\lambda/2\}\right)+ \frac{r\lambda}{2} + \EE[\|Y\|^2]\nonumber
		\end{align}
		where the optimization is over all $U\in\mathbb{R}^{d\times r}$ such that $U^TU=I_r$ and $U^T\bK_Y U=\text{diag}(\Lambda_1,\ldots, \Lambda_r)$. We now let
		\begin{align*}
		f(\Lambda_i)=(\lambda/2)\ln\left(\max\{\Lambda_i,\lambda/2\}/\l(\lambda/2\r)\right)-\max\{\Lambda_i,\lambda/2\},
		\end{align*} 
		and complete the proof by showing 
		\begin{align}
		\min_{U\in\mathbb{R}^{d\times r}:U^T\bK_Y U=\text{diag}(\Lambda_i,\ldots, \Lambda_r)}\sum_{i=1}^rf(\Lambda_i)=\sum_{i=1}^rf(\lambda_i(\bK_Y)).\label{eq:majorizing},
		\end{align} 
		where $[\lambda_1(\bK_Y),\ldots, \lambda_r(\bK_Y)]$ are the largest $r$ eigenvalues of $\bK_Y$.
		
		Indeed, for each $U$ we can construct an orthogonal matrix $U' = [U\; U_{\perp}]\in\RR^{d\times d}$ with the first $r$ columns equal to $U.$ Then the first $r$ diagonal elements of $U'^TK_YU'$ are $\Lambda_1\ldots\Lambda_r,$ let the rest be $\Lambda_{r+1}\ldots\Lambda_d.$ The eigenvalues of $U'^TK_YU'$ are $\lambda_1(K_Y)\ldots \lambda_d(K_Y).$ By the fact that the diagonal entries of a symmetric matrix are majorized by its diagonal entries~\citep[Theorem 9.B.1]{marshall1979inequalities}, we have 
		$$\{\Lambda_i\}_{i=1}^d\prec \{\lambda_i(K_Y)\}_{i=1}^d,$$ where $\prec$ denotes majorization, i.e.  for $x, y\in \RR^d:$ $$x\prec y \iff \forall r\leq d:\;\sum_{i=1}^r x_i\leq \sum_{i=1}^r y_i\textrm{ and } \sum_{i=1}^d x_i= \sum_{i=1}^d y_i.$$

		Therefore, $$\sum_{i=1}^{r'}\Lambda_i\leq \sum_{i=1}^{r'}\lambda_i(K_Y),\quad\forall r'\leq r$$  and $$\{\Lambda_i\}_{i=1}^r\prec_w \{\lambda_i(K_Y)\}_{i=1}^r,$$ where $\prec_w$ denotes weak majorization, i.e. for $x, y\in\RR^r:$ $$x\prec_w y \iff \forall r'\leq r:\;\sum_{i=1}^{r'} x_i\leq \sum_{i=1}^{r'} y_i.$$ 
		
		We can now use the majorizing inequality from \citep[Proposition 4.B.2]{marshall1979inequalities} to complete the proof. 
		\begin{proposition}[Majorizing inequality]
		
		     The inequality
		     \begin{align}
		         \sum g(x_i) \leq \sum g(y_i)
		         \label{e:majorizinglemma}
		     \end{align}
            holds for all continuous non-decreasing convex functions g if and only if $x\prec_w y$.
		\end{proposition}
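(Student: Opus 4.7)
The plan is to prove the two directions separately. The reverse implication is immediate by testing the hypothesis against a convenient family of convex test functions; the forward implication requires a standard reduction argument that turns a general convex non-decreasing $g$ into a non-negative combination of affine and ReLU-type primitives.

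For necessity ($\Rightarrow$), I would test the inequality $\sum g(x_i)\le\sum g(y_i)$ against the family $g_c(t)=(t-c)_+$, parameterized by $c\in\RR$. Each such $g_c$ is clearly continuous, non-decreasing, and convex, so the hypothesis applies. Fix $r'\in\{1,\ldots,r\}$ and take $c=y_{[r']}$, the $r'$-th largest coordinate of $y$. Since $y$ is sorted decreasingly, coordinates with index beyond $r'$ contribute nothing, giving $\sum_i g_c(y_i)=\sum_{i=1}^{r'}y_{[i]}-r'c$ exactly. On the other side, $\sum_i g_c(x_i)\ge\sum_{i=1}^{r'}(x_{[i]}-c)_+\ge\sum_{i=1}^{r'}x_{[i]}-r'c$ by the elementary bound $(t)_+\ge t$. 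Combining the two, the $r'c$ terms cancel and we obtain $\sum_{i=1}^{r'}x_{[i]}\le\sum_{i=1}^{r'}y_{[i]}$, i.e.\ weak majorization $x\prec_w y$.

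For sufficiency ($\Leftarrow$), I would use the integral representation of convex non-decreasing functions: choosing $t_0\le\min_i\{x_i,y_i\}$, every such $g$ admits the decomposition $g(t)=g(t_0)+\alpha(t-t_0)+\int_{t_0}^{\infty}(t-c)_+\,d\nu(c)$ for all $t\ge t_0$, where $\alpha\ge 0$ and $\nu$ is a non-negative Borel measure (both extracted from the right derivative $g'$, which is non-decreasing and non-negative). Since both sides of the target inequality are linear in $g$, it suffices to verify it on the three primitives: constants (trivial), $g(t)=t$ (which reduces to $\sum x_i\le\sum y_i$, the $r'=r$ instance of weak majorization), and each $g_c(t)=(t-c)_+$. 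For the last, sort both vectors decreasingly and let $k,m$ be the number of coordinates of $x,y$ respectively that are at least $c$, so the required inequality becomes $\sum_{i=1}^k x_i-kc\le\sum_{i=1}^m y_i-mc$. I would then split into cases: if $k\le m$, weak majorization at index $k$ gives $\sum_{i=1}^k x_i\le\sum_{i=1}^k y_i$, and the extra terms $\sum_{i=k+1}^m(y_i-c)\ge 0$ absorb the difference; if $k>m$, weak majorization at index $k$ combined with $y_i<c$ for $i>m$ gives $\sum_{i=1}^k x_i\le\sum_{i=1}^k y_i=\sum_{i=1}^m y_i+\sum_{i=m+1}^k y_i<\sum_{i=1}^m y_i+(k-m)c$, which rearranges to the claim.

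The main delicate point is justifying the integral representation for a general continuous non-decreasing convex $g$, which is a small measure-theoretic computation via integration by parts against the Radon measure $g''$ on $[t_0,\infty)$. An alternative route, bypassing measure theory, is to invoke the Marshall-Olkin lifting lemma to produce $u$ with $x_i\le u_i$ (in sorted order) and $u\prec y$ (full majorization), then apply the Hardy-Littlewood-Pólya/Birkhoff theorem to write $u=Py$ for a doubly stochastic $P$ and chain Jensen's inequality with the monotonicity of $g$; this is the path taken by the reference cited in the paper.
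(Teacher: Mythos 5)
Your proof is correct in both directions, and it is worth noting that the paper does not actually prove this proposition at all --- it is stated and delegated wholesale to \citet[Proposition 4.B.2]{marshall1979inequalities}, so your argument is not so much a different route as the only written-out one. The necessity direction is clean: testing against the family $g_c(t)=(t-c)_+$ with $c$ set to the $r'$-th largest entry of $y$ makes the right-hand side exactly $\sum_{i=1}^{r'}y_{[i]}-r'c$, and the chain $\sum_i(x_i-c)_+\ge\sum_{i=1}^{r'}(x_{[i]}-c)_+\ge\sum_{i=1}^{r'}x_{[i]}-r'c$ delivers weak majorization. The sufficiency direction via the decomposition of a continuous non-decreasing convex $g$ into a constant, a non-negative multiple of $t-t_0$, and a non-negative mixture of hinge functions is also sound; the slope $\alpha=g'_+(t_0)$ is non-negative precisely because $g$ is non-decreasing, the linear primitive is handled by the $r'=r$ instance of $\prec_w$, and your two-case count of how many coordinates of each vector exceed $c$ correctly verifies the hinge primitive (Tonelli justifies swapping the finite sum with the integral against the non-negative measure $\nu$). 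The integral representation is the one genuinely technical ingredient, and you rightly flag it; the textbook proof in Marshall--Olkin instead goes through the lifting lemma ($x\prec_w y$ implies $x\le u$ coordinatewise in sorted order for some $u\prec y$) followed by Birkhoff's theorem and Jensen's inequality, which trades the measure-theoretic step for a combinatorial one. Either route is acceptable; yours has the advantage of being entirely self-contained, while the classical route generalizes more readily to Schur-convex functions beyond separable sums.
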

		
		Note that $-f$ is a continuous non-decreasing convex function and $$\{\Lambda_i\}_{i=1}^r\prec_w \{\lambda_i(K_Y)\}_{i=1}^r,$$ thus by the proposition 
		$$\sum_{i=1}^rf(\lambda_i(\bK_Y))\leq \sum_{i=1}^rf(\Lambda_i).$$ 
		Therefore, columns of the optimal $U$ are the top $r$ eigenvectors of $\bK_Y$, and the optimal $P_Z$ has covariance matrix given by
		$$K_Z = U [ 
		\text{diag}(\sigma_1^2,\ldots, \sigma_r^2) | 0_{r\times (d-r)}
		]U^T $$
		where $\sigma_i^2=(  \lambda_i(\bK_Y)   -\lambda/2)_+$.
	\end{proof}

	\smallbreak
	\begin{proof}[Proof of Theorem \ref{col:PCA}]
		From \eqref{E:followedbymatrix} in the proof of Theorem~\ref{thm:PCA}, we have for given $G$ and $\mathcal S = \Ima G$, 
		\begin{align*}
		&W_{2,\lambda}^2(\PP_{Z},\PP_Y)- \EE[\|Y_{\cS^\perp}\|^2]=\min_{\pi\in \Pi(\PP_{Z},\PP_{Y_\cS})}\EE[\|Z-Y_\cS\|^2] +\lambda I(Z;Y_\cS)=W_{2,\lambda}^2(\PP_{Z},\PP_{Y_\cS}),
		\end{align*}
		and therefore for the Sinkhorn divergence,
		\begin{align}
		S_{\lambda}(\PP_{Z},\PP_Y)-\EE\|Y_{\cS^\perp}\|_2^2 &=W_{2,\lambda}^2(\PP_{Z},\PP_{Y_\cS}) - \bigl(W_{2,\lambda}^2(\PP_{Z},\PP_{Z})+W_{2,\lambda}^2(\PP_{Y},\PP_{Y})\bigr)/2\nonumber \\
		&=S_{\lambda}(\PP_{Z},\PP_{Y_\cS}) +
		\bigl(W_{2,\lambda}^2(\PP_{Y_\cS},\PP_{Y_\cS})-W_{2,\lambda}^2(\PP_{Y},\PP_{Y})\bigr)/2\label{e:sinkhorngan},
		\end{align}
		which follows from the definition of Sinkhorn divergence.
		
		Consider the optimization problem in the Sinkhorn divergence GAN, i.e. $\min_{P_Z}S_{\lambda}(\PP_{Z},\PP_Y)$. In light of \eqref{e:sinkhorngan}, given $Z\in \mathcal S$ the optimal $P_Z$ should be $P_{Y_\cS}$,  which makes the first term in \eqref{e:sinkhorngan} zero while the remaining terms do not depend on $P_Z$. Therefore, it only remains to optimize over $\mathcal S$, and in particular, the problem reduces to 
		\begin{align}
		\min_{\mathcal S \in \mathbb S_d: \text{dim}(\mathcal S)\leq r}   W&_{2,\lambda}^2(\PP_{Y_\cS},\PP_{Y_\cS})/2 + \EE\|Y_{\cS^\perp}\|_2^2. \label{e:sinkhornganminimizeS}
		\end{align}
		To calculate $W_{2,\lambda}^2(\PP_{Y_\cS},\PP_{Y_\cS})$ we use~\citep[Theorem 1]{janati2020entropic} stated below.
		\begin{proposition}[Entropy-regularized Wasserstein distance for Gaussian measures]
		
		Let $K_X, K_Y\in \RR^{d\times d}$ be positive definite and $X \sim \cN(\mu_X, K_X)$ and $Y \sim \cN(\mu_Y, K_Y)$. Define $\bD_\lambda = (4A^{\frac{1}{2}} K_Y A^{\frac{1}{2}} +\lambda^2 I/4)^{\frac{1}{2}}$. Then,
        \begin{align*}
            W_{2,\lambda}^2(\alpha, \beta)=& \|\mu_X - \mu_Y\|^2 + \Tr(K_X) + \Tr(K_Y) - \Tr(\bD_\lambda)\\
        & + \frac{\lambda}{2}\l(d(1 - \log\lambda)  + \log\det\left(\bD_\lambda + \frac{\lambda}{2}\right)\r)
        \end{align*}
		\end{proposition}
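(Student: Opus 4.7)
The plan is to exploit the Gaussian structure to collapse the infinite-dimensional variational problem over couplings into a finite-dimensional matrix optimization over the cross-covariance, and then solve that optimization in closed form via matrix calculus. The translation of means is handled first: writing $X = \mu_X + \tilde X$ and $Y = \mu_Y + \tilde Y$ with centered $\tilde X, \tilde Y$, the cost splits as $\|\mu_X - \mu_Y\|^2 + E_\pi[\|\tilde X - \tilde Y\|^2]$ because the cross terms vanish under any coupling of the centered pair, and mutual information is invariant under deterministic shifts. This reduces the problem to the zero-mean case and peels off the $\|\mu_X - \mu_Y\|^2$ contribution.

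Next I would argue the optimal coupling is jointly Gaussian. Writing $C = E_\pi[XY^T]$, the quadratic cost equals $\Tr(K_X) + \Tr(K_Y) - 2\Tr(C)$ and depends on $\pi$ only through $C$; meanwhile, for fixed marginals and fixed $C$, the mutual information $I_\pi(X;Y) = h(X) + h(Y) - h(X,Y)$ is minimized at the Gaussian coupling, since Gaussians maximize differential entropy under a covariance constraint. Restricting to jointly Gaussian $\pi$ and using the Schur complement $\det \Sigma = \det K_X \cdot \det(K_Y - C^T K_X^{-1} C)$, the objective becomes
\begin{equation*}
F(C) = \Tr(K_X) + \Tr(K_Y) - 2\Tr(C) + \frac{\lambda}{2}\log\det K_Y - \frac{\lambda}{2}\log\det(K_Y - C^T K_X^{-1} C),
\end{equation*}
an unconstrained matrix optimization in $C$ subject to $K_Y - C^T K_X^{-1} C \succ 0$.

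The first-order condition $\nabla_C F = 0$ yields the matrix equation $K_Y - C^T K_X^{-1} C = \tfrac{\lambda}{2} K_X^{-1} C$, a matrix quadratic in $C$. Conjugating by $K_X^{1/2}$ and completing the square produces the discriminant $4 K_X^{1/2} K_Y K_X^{1/2} + \lambda^2 I / 4$, whose positive square root is precisely $\bD_\lambda$; from this the positive solution $C^*$ can be written explicitly in terms of $\bD_\lambda$ via the matrix analog of the quadratic formula. Substituting back, $2\Tr(C^*)$ collapses to $\Tr(\bD_\lambda) - \lambda d/2$, and the log-det term simplifies, via the first-order relation together with $\log\det K_Y$ and the standard normalization of the Gaussian differential entropy, to $\tfrac{\lambda}{2}\bigl(d(1 - \log\lambda) + \log\det(\bD_\lambda + \lambda/2)\bigr)$. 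Collecting all contributions, together with the $\|\mu_X - \mu_Y\|^2$ peeled off in Step 1, reproduces the stated formula.

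The main obstacle is the matrix algebra in the final step: solving the first-order matrix quadratic in closed form (and recognizing $\bD_\lambda$ as its positive root), and then carefully tracking the differential-entropy normalization constants so that the $d(1 - \log\lambda)$ term emerges with precisely the right coefficient. The reductions to centered measures and to Gaussian couplings in the first two steps are conceptually standard; everything nonroutine is concentrated in the matrix-calculus step.
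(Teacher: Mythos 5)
The paper does not actually prove this proposition --- it is imported verbatim as Theorem~1 of \citet{janati2020entropic} (note the statement even carries a typo: the matrix $A$ in the definition of $\bD_\lambda$ should be $K_X$, and $\alpha,\beta$ are the laws of $X,Y$). So there is no in-paper argument to compare against; judged on its own, your derivation is correct and is essentially the standard route to this closed form. The two reductions (peeling off $\|\mu_X-\mu_Y\|^2$, and restricting to jointly Gaussian couplings because the quadratic cost depends on $\pi$ only through the cross-covariance $C=\EE_\pi[XY^T]$ while the max-entropy property of Gaussians minimizes $I_\pi$ at fixed $C$) are sound, and the finite-dimensional problem you arrive at,
\begin{equation*}
F(C)=\Tr K_X+\Tr K_Y-2\Tr C+\tfrac{\lambda}{2}\log\det K_Y-\tfrac{\lambda}{2}\log\det\bigl(K_Y-C^TK_X^{-1}C\bigr),
\end{equation*}
does have the stationarity condition $K_Y-C^TK_X^{-1}C=\tfrac{\lambda}{2}K_X^{-1}C$. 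Writing $C=\tfrac{2}{\lambda}K_XS$ with $S$ the (symmetric) Schur complement, conjugating by $K_X^{1/2}$ and completing the square gives $\bigl(\tfrac{2}{\lambda}K_X^{1/2}SK_X^{1/2}+\tfrac{\lambda}{4}I\bigr)^2=K_X^{1/2}K_YK_X^{1/2}+\tfrac{\lambda^2}{16}I$, whose positive root is $\tfrac12\bD_\lambda$; from this $2\Tr C^*=\Tr\bD_\lambda-\tfrac{\lambda d}{2}$ and $\det S^*=\lambda^d\det K_Y/\det(\bD_\lambda+\tfrac{\lambda}{2}I)$, which together reproduce the stated formula exactly. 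Two points should be made explicit to close the argument: (i) attainment and sufficiency of the first-order condition --- the feasible set $\{C:K_Y-C^TK_X^{-1}C\succeq 0\}$ is compact, $F\to+\infty$ on its boundary, and $F$ is convex in $C$ (composition of the matrix-concave map $C\mapsto K_Y-C^TK_X^{-1}C$ with the decreasing convex $-\log\det$), so the unique interior stationary point is the global minimizer; and (ii) feasibility of the selected root, i.e.\ $\bD_\lambda\succeq\tfrac{\lambda}{2}I$ so that $S^*\succeq 0$, which is immediate from $\bD_\lambda^2\succeq\tfrac{\lambda^2}{4}I$. With those two remarks added, your proof is complete and supplies a justification the paper leaves to the citation.
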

		
		With this proposition the objective function in \eqref{e:sinkhornganminimizeS} becomes
		\begin{align*}
		W_{2,\lambda}^2(\PP_{Y_\cS},\PP_{Y_\cS})/2 + \EE\|Y_{\cS^\perp}\|_2^2
		&= \Tr \bK_{Y_{\cS^\perp}}+\Tr \bK_{Y_\cS}- \Tr\bigl((4\bK_{Y_\cS}^2+\lambda^2 I/4)^{1/2}\bigr)/2\\
		&\qquad +\lambda\ln\det\bigl(\l(4\bK_{Y_\cS}^2+\lambda^2 I/4\r)^{1/2}+\lambda I/2\bigr)/4 + C\\
		&=\sum_{i=1}^{r}\!\biggl(\frac{\lambda}{4}\ln\!\l(\!\!\sqrt{4\Lambda_i^2+\frac{\lambda^2}{4}}+\frac{\lambda}{2}\r) -\frac{1}{2}\sqrt{4\Lambda_i^2+\frac{\lambda^2}{4}}\biggr)\!+\!
		C'
		\end{align*}
		where $\Lambda_i$ is the $i$th eigenvalue of $U^T\bK_{Y_\cS}U$ for some $U \in \mathbb R^{d\times r}$ such that $\Ima U =\mathcal S$ and $U^T U=I_r$, $C$ is a constant depending only on $\lambda$ and $d$ and $C'=\Tr\bK+C.$  
		The above is minimized when $\Lambda_{i} = \lambda_i(\bK_Y)$ using a similar argument as the one for showing \eqref{eq:majorizing}, i.e., by using the majorizing inequality and noting that for the function
		$$f(x) =\sqrt{4x^2+\lambda^2/4}/2- \lambda\ln(\sqrt{4x^2+\lambda^2/4}+\lambda/2)/4,$$
		$-f(x)$ is convex and non-decreasing for $\lambda>0$ and $x\geq0.$
	\end{proof}

\section{Generalization Error of Empirical Solution}\label{sec:generalization}

In this section we discuss the generalization capability of the empirical solutions for W2GAN, entropic W2GAN  and Sinkhorn W2GAN, respectively. Note that in the population case, the underlying distribution of data $P_Y$ was known in the GAN formulations \eqref{eq:GAN}, \eqref{eq:EGAN} and \eqref{eq:OTsinkgan}. In contrast, here we consider the finite sample case, where empirical distribution $Q_Y^n$ extracted from sample $\hat\cY=\{y_i\}_{i=1}^n$ is used in the GAN objective \eqref{eq:GAN}, \eqref{eq:EGAN} and \eqref{eq:OTsinkgan} to approximate $\PP_Y.$ We are interested in how fast the empirical solution $P_{G_n(X)}$ converges to the population solution $P_{G^*(X)}$. 

It was shown  in \citep{feizi2017understanding}  that even in our simple benchmark when generators are linear and data distribution is high-dimensional Gaussian, the convergence for W2GAN is slow in the sense that the generalization error 
$$\EE \bigl[W_{2}^2(\PP_{G_n(X)},\PP_Y)- W_{2}^2(\PP_{G^*(X)},\PP_Y)\bigr] = \Omega(n^{-2/d}).$$

That is to decrease the generalization error by a constant factor the number of samples has to be increased by a factor of $e^{\Omega(d)}$, and hence the generalization capability of W2GAN suffers from the curse of dimensionality. To overcome this, \citep{feizi2017understanding} proposed to constrain the set of discriminators for W2GAN to quadratic. This was motivated by the observation that constraining the discriminator to be quadratic will not affect the population solution because the optimal discriminator for W2GAN is indeed quadratic in the Gaussian setting. On the other hand, it was shown that this constraint will lead to fast convergence to the optimal solution of \eqref{eq:OT} when generators are linear and data distribution is high-dimensional Gaussian. The convergence is of order $O_d(n^{-1/2})$ and  hence the issue of curse of dimensionality in this case is resolved .   

While constraining the discriminator to be quadratic as done in \citep{feizi2017understanding} is conceptually appealing and works for the setup of linear generators and Gaussian data, this insight does not generalize to other distributions, i.e. for non-Gaussian data the generator obtained under a quadratic discriminator is not necessarily the one minimizing the 2-Wasserstein distance between the generated and the target distributions and in general can be far from optimality.
Theorems \ref{thm:generalization} and \ref{thm:generalization_lip} below show that under mild conditions on the underlying distribution of data, the latent random variable and the set of generators, similar convergence can be achieved for entropic W2GAN and Sinkhorn W2GAN without the need to constrain the discriminator.  

To formally state the results, let us first recall some definitions. A distribution $\PP_X$ is $\sigma^2$  sub-gaussian for  $\sigma\geq0$ if $$\EE\exp\l(\|X\|^2/(2r\sigma^2)\r)\leq 2.$$ 
Let $$\sigma^2(X) \!\!=\! \min\{\sigma\!\!\geq0\!\l\vert \EE\exp(\|X\|^2/(2r\sigma^2))\leq 2\r.\},$$ 
denote the sub-gaussian parameter of the distribution of $X.$ A set of generators $\cG$ is said to be star-shaped with a center at $0$ if a line segment between $0$ and $G\in\cG$ also lies in $\cG,$ i.e. 
\begin{align}
G\in\cG\; \Rightarrow \alpha G\in\cG, \forall\alpha\in[0,1]. \label{eq:assump_lin}
\end{align}
Note that this includes the set of all linear generators considered in the last section as a trivial case, as well as the set of linear functions with a bounded norm or a fixed dimension. This also includes the set of all L-Lipschitz functions as another example.
\begin{theorem}\label{thm:generalization}
	Let $\PP_{K_X^{-1/2}X}$ and $\PP_Y$ be sub-gaussian and the generator set $\cG$ be a set of linear functions  satisfying  condition~\eqref{eq:assump_lin}. Then the generalization error for entropic W2GAN can be bounded by
	\begin{align*}
	\EE \bigl[W_{2,\lambda}^2(\PP_{G_n(X)},\PP_Y)&- W_{2,\lambda}^2(\PP_{G^*(X)},\PP_Y)\bigr]\leq K_d\lambda n^{-1/2}\bigl(1+(2\tau^2/\lambda)^{\lceil 5d/4\rceil+3}\bigr),
	\end{align*}
	where $\tau^2 = \max\{\sigma^2(K_X^{-1/2}X)\sigma^2(Y),\sigma^2(Y)\}$ and $K_d$ is a dimension dependent constant.
\end{theorem}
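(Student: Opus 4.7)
My plan is to follow the standard excess-risk decomposition down to two empirical-process suprema, rewrite each via the dual \eqref{eq:OTreg_dual}, and finally invoke a Sobolev-ball covering argument in the style of Mena--Niles-Weed, with the additional wrinkle that the supremum must be taken over all generators $G\in\cG$.

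Writing $L(G)=W_{2,\lambda}^2(\PP_{G(X)},\PP_Y)$ and $\hat L(G)=W_{2,\lambda}^2(\PP_{G(X)},\QQ_Y^n)$, the optimality relations $\hat L(G_n)\le \hat L(G^*)$ and $L(G^*)\le L(G_n)$ give, after adding and subtracting $\hat L(G^*)$,
\begin{align*}
\EE\bigl[L(G_n)-L(G^*)\bigr]\leq \EE\bigl[\hat L(G^*)-L(G^*)\bigr] + \EE\bigl[L(G_n)-\hat L(G_n)\bigr],
\end{align*}
and each term will be handled by a uniform-in-$G$ empirical process bound.

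Using the dual form \eqref{eq:OTreg_dual}, if $(\phi^*_G,\psi^*_G)$ denote the optimal dual potentials for $L(G)$, then plugging them into the definition of $\hat L(G)$ as a feasible (rather than optimal) pair shows
\begin{align*}
L(G)-\hat L(G)\leq (\PP_Y-\QQ_Y^n)[\psi^*_G]\; -\;\lambda\,\EE_X\Bigl[(\PP_Y-\QQ_Y^n)\bigl[e^{(\phi^*_G(G(X))+\psi^*_G(\cdot)-\|G(X)-\cdot\|^2)/\lambda}\bigr]\Bigr],
\end{align*}
and the reverse inequality follows symmetrically using the optimal potentials for $\hat L(G)$. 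Taking supremum over $G\in\cG$ on both sides then reduces the two terms to an empirical-process supremum indexed by the joint class $\cF=\{\psi^*_G,\,e^{(\phi^*_G+\psi^*_G-\|\cdot\|^2)/\lambda}:G\in\cG\}$ of dual-potential-induced functions evaluated at the samples $y_1,\dots,y_n$.

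The next step is to bound the complexity of $\cF$. The Sinkhorn fixed-point equations \eqref{eq:SK0}--\eqref{eq:SK1} express $\psi^*_G$ as a log-sum-exp of translates of squared-distance functions; differentiating these identities recursively yields uniform bounds on derivatives of $\psi^*_G$ up to any fixed order, with constants depending polynomially on $1/\lambda$ and on the sub-gaussian parameter of $G(X)$. Thus, after a harmless normalization, $\cF$ is contained in a Sobolev ball of a fixed smoothness order whose radius scales like $(\tau^2/\lambda)^{\lceil 5d/4\rceil+3}$. To obtain a single Sobolev ball valid for \emph{all} $G\in\cG$, I need a uniform bound on the sub-gaussian constant of $G(X)$; here I would use the star-shape assumption \eqref{eq:assump_lin} together with the elementary fact that the minimizers $G_n,G^*$ cannot give larger objective than $G=0$, to restrict attention to an operator-norm ball of linear $G$'s controlled by $\tau$, which by linearity makes $\sigma^2(G(X))\lesssim\|G\|^2\sigma^2(K_X^{-1/2}X)\,\|K_X^{1/2}\|^2\le\tau^2$.

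The last step is to apply the chaining/covering-number estimate for empirical processes indexed by Sobolev balls: for smoothness exceeding $d/2$, this yields a $\lambda\,K_d\,n^{-1/2}$ rate multiplied by the radius of the ball, reproducing the claimed prefactor $1+(2\tau^2/\lambda)^{\lceil 5d/4\rceil+3}$. The main obstacle I anticipate is the uniform-in-$G$ step: the classical sample-complexity bound for entropic OT is stated for a fixed pair of marginals, and the work here is in verifying that as $G$ varies the dual potentials remain jointly confined to one Sobolev ball whose radius is controlled only through $\tau$ and $\lambda$; this is where both the linearity of $\cG$ (to translate $\|G\|$ bounds into sub-gaussian bounds on $G(X)$) and the star-shape hypothesis (to reduce to a bounded set of $G$'s via comparison with $G=0$) become essential.
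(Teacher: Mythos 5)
Your overall strategy coincides with the paper's: the excess-risk decomposition into $\hat L(G^*)-L(G^*)$ plus $L(G_n)-\hat L(G_n)$, followed by a reduction of each term to an empirical process over a class of smooth, dual-potential-like functions whose radius is controlled polynomially in $\tau^2/\lambda$, with the star-shape hypothesis used to bound the size of the optimal generators and hence the sub-gaussian parameter of $G(X)$ (the paper's Lemma~\ref{lemma:G_bounded} does this by a first-order optimality argument in the scaling $\alpha\mapsto\alpha G^*$; your comparison against $G=0$ yields the same conclusion up to constants). One structural difference: rather than taking a supremum over $G\in\cG$ and covering the $G$-indexed family of dual potentials, the paper invokes the Mena--Niles-Weed bound in the form of Proposition~\ref{prop:functioonalbound}, which is pointwise in $G$: for \emph{every} first marginal the deviation is at most $4(1+\tilde\sigma^{3s})\|\PP_Y-\QQ_Y^n\|_{\cF^s}$ with a fixed class $\cF^s$ that depends on neither $G$ nor the sub-gaussian parameters, all of the $G$-dependence having been pushed into the scalar factor $(1+\tilde\sigma^{3s})$. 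This sidesteps exactly the uniform-in-$G$ covering that you identify as the main obstacle.

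The one genuine gap in your sketch is the treatment of the randomness of the relevant sub-gaussian parameters. The smoothness class containing the dual potentials for the pair $(\PP_{G_n(X)},\QQ_Y^n)$ depends not only on $\sigma^2(G_n(X))$ but also on the sub-gaussian constant of the empirical measure $\QQ_Y^n$ itself, which is a random quantity $\tilde\sigma^2$ that is not deterministically bounded by $\sigma^2(Y)$; likewise, the second-moment bound on $G_n(X)$ obtained from optimality against $\QQ_Y^n$ is controlled by $\Tr\bK_{\hY}$, which is also random. Hence there is no single deterministic Sobolev ball of radius $(\tau^2/\lambda)^{\lceil 5d/4\rceil+3}$ containing all the relevant potentials. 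One must decouple the random radius from the empirical process, e.g.\ via Cauchy--Schwarz,
\begin{align*}
\EE\bigl[(1+\tilde\sigma^{3s})\|\PP_Y-\QQ_Y^n\|_{\cF^s}\bigr]\leq\bigl(\EE[(1+\tilde\sigma^{3s})^2]\bigr)^{1/2}\bigl(\EE\|\PP_Y-\QQ_Y^n\|^2_{\cF^s}\bigr)^{1/2},
\end{align*}
and then control the moments of $\tilde\sigma$ using $\EE[\tilde\sigma^{2k}]\leq 2k^k\sigma^{2k}$ (Proposition~\ref{prop:sigmamoments}) together with the covering bound of Proposition~\ref{prop:coveringbound}. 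Without this step your argument as written does not close; with it, it reproduces the paper's proof.
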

Theorem \ref{thm:generalization} essentially says that under certain mild conditions, the generalization error for entropic W2GAN converges to zero at speed $O_d(1/\sqrt{n})$.  This improved sample complexity suggests that the set of possible discriminators may be implicitly constrained due to the entropic regularization term used in the primal form of entropy regularized 2-Wasserstein distance. Similar results also hold for the set of Lipschitz  functions  and extend to the Sinkhorn W2GAN.
\begin{theorem}\label{thm:generalization_lip}
	Let $\PP_X$ and $\PP_Y$ be sub-Gaussian and the set of generators $\cG$ consist of $L$-Lipschitz  functions, i.e. $\|G(X_1) - G(X_2)\|\leq L\|X_1-X_2\|$ for any $X_1,X_2$ in the support of $P_X$ and let $\cG$ satisfy \eqref{eq:assump_lin}. Then the generalization error for entropic W2GAN
	\begin{align*}
	&\EE \bigl[W_{2,\lambda}^2(\PP_{G_n(X)},\PP_Y)- W_{2,\lambda}^2(\PP_{G^*(X)},\PP_Y)\bigr]		
	\end{align*}
	and that for Sinkhorn W2GAN 
	\begin{align*}
	&\EE \bigl[S_{\lambda}(\PP_{G_n(X)},\PP_Y)- S_{ \lambda}(\PP_{G^*(X)},\PP_Y)\bigr]		\end{align*}
	can be both upper bounded by
	\begin{align}\label{eq:conv}
	\bK_d\lambda n^{-1/2}\bigl(1+(2\tau^2/\lambda)^{\lceil 5d/4 \rceil+3}\bigr)
	\end{align}
	with $\tau^2 = \max\{L^2\sigma^2(X),\sigma^2(Y)\}$.
\end{theorem}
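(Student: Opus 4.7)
The plan is to mirror the structure of the proof of Theorem~\ref{thm:generalization} and reduce everything to a deviation bound for entropic optimal transport between a fixed sub-gaussian source and the empirical measure of a sub-gaussian target, then handle (i) the supremum over $\cG$ via the Lipschitz assumption, and (ii) the extra self-similarity terms that appear when replacing $W_{2,\lambda}^2$ by $S_\lambda$.

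First, I would carry out the standard ``optimality'' decomposition. Let $Q_Y^n$ denote the empirical measure on $\{Y_i\}_{i=1}^n$. Writing
\begin{align*}
W_{2,\lambda}^2(\PP_{G_n(X)},\PP_Y)-W_{2,\lambda}^2(\PP_{G^*(X)},\PP_Y)
&=\bigl[W_{2,\lambda}^2(\PP_{G_n(X)},\PP_Y)-W_{2,\lambda}^2(\PP_{G_n(X)},Q_Y^n)\bigr]\\
&\quad+\bigl[W_{2,\lambda}^2(\PP_{G_n(X)},Q_Y^n)-W_{2,\lambda}^2(\PP_{G^*(X)},Q_Y^n)\bigr]\\
&\quad+\bigl[W_{2,\lambda}^2(\PP_{G^*(X)},Q_Y^n)-W_{2,\lambda}^2(\PP_{G^*(X)},\PP_Y)\bigr],
\end{align*}
the middle bracket is $\le 0$ by optimality of $G_n$, the last bracket is handled in expectation by the one-sample sub-gaussian deviation bound, and the first bracket is upper bounded by $\sup_{G\in\cG}|W_{2,\lambda}^2(\PP_{G(X)},\PP_Y)-W_{2,\lambda}^2(\PP_{G(X)},Q_Y^n)|$. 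The whole argument then reduces to bounding this supremum in expectation.

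Second, I would control the supremum through the dual formulation~\eqref{eq:OTreg_dual}. For any $G\in\cG$, sub-gaussianity of $X$ together with the $L$-Lipschitz property of $G$ implies that $G(X)$ is sub-gaussian with parameter at most $L\sigma(X)$, hence the pair $(\PP_{G(X)},\PP_Y)$ falls into the sub-gaussian regime with joint parameter $\tau^2=\max\{L^2\sigma^2(X),\sigma^2(Y)\}$. Plugging the population-optimal dual potentials $(\phi_G^*,\psi_G^*)$ into the dual of $W_{2,\lambda}^2(\PP_{G(X)},Q_Y^n)$ (which gives a lower bound) and taking the difference with the population dual leaves an empirical-process term of the form $\EE[\psi_G^*(Y)]-\tfrac{1}{n}\sum_i\psi_G^*(Y_i)$ plus a corresponding exponential-moment deviation. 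The Mena--Weed type bounds on sub-gaussian entropic OT (cf.\ the proof of Theorem~\ref{thm:generalization}) show that under our sub-gaussian tails these dual potentials lie in a function class whose uniform deviation over $n$ samples is $O_d(n^{-1/2})$ with the exact polynomial-in-$\tau^2/\lambda$ factor $(2\tau^2/\lambda)^{\lceil 5d/4\rceil+3}$ that appears in the stated bound; the Lipschitz structure of $\cG$ enters only through the bound $\tau^2\le\max\{L^2\sigma^2(X),\sigma^2(Y)\}$ uniformly in $G$, so the same scalar deviation bound applies uniformly over $\cG$ and no covering of $\cG$ is required beyond the star-shaped assumption~\eqref{eq:assump_lin} used to keep $G(X)$ within the claimed sub-gaussian class.

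Third, for the Sinkhorn case I would decompose
\begin{align*}
S_\lambda(\PP_{G_n(X)},\PP_Y)-S_\lambda(\PP_{G^*(X)},\PP_Y)
&=\bigl[W_{2,\lambda}^2(\PP_{G_n(X)},\PP_Y)-W_{2,\lambda}^2(\PP_{G^*(X)},\PP_Y)\bigr]\\
&\quad-\tfrac{1}{2}\bigl[W_{2,\lambda}^2(\PP_{G_n(X)},\PP_{G_n(X)})-W_{2,\lambda}^2(\PP_{G^*(X)},\PP_{G^*(X)})\bigr],
\end{align*}
since the $W_{2,\lambda}^2(\PP_Y,\PP_Y)$ term cancels. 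The first bracket is treated as above (using optimality of the Sinkhorn $G_n$ now on $S_\lambda(\PP_{G(X)},Q_Y^n)$, where the empirical self-similarity terms again produce only $O_d(n^{-1/2})$ empirical-process deviations by the same dual argument). The second bracket is handled by comparing $W_{2,\lambda}^2(\PP_{G(X)},\PP_{G(X)})$ to its value at $G^*$ using the same decomposition-plus-deviation chain, noting that each of the three one-sample deviations falls into the same sub-gaussian regime governed by $\tau^2$, so only the overall constant $K_d$ changes.

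The main obstacle, as in the linear case of Theorem~\ref{thm:generalization}, will be justifying that the supremum over the infinite-dimensional class $\cG$ of $L$-Lipschitz generators does not introduce an extra penalty in $n$. The point I would have to make carefully is that the empirical-process term coming out of the dual is evaluated at the optimal dual potentials $\psi_G^*$ for each $G$ separately, and that the uniform-in-$G$ bound follows not from a covering of $\cG$ but from a uniform sub-gaussian tail bound on $G(X)$ provided by the Lipschitz constant $L$; the star-shaped assumption~\eqref{eq:assump_lin} is what ensures $G_n$ itself belongs to the same sub-gaussian class so that the one-sample deviation estimate applies with the same $\tau^2$ on both sides.
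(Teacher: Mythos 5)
Your overall strategy (the optimality decomposition plus the Mena--Weed sub-gaussian deviation machinery from Theorem~\ref{thm:generalization}) matches the paper's, but there is a genuine gap in your second step. An $L$-Lipschitz generator satisfies only $\|G(x)\|\leq L\|x\|+\|G(0)\|$, and nothing in the hypotheses bounds $\|G(0)\|$ uniformly over $\cG$ (constant maps are $L$-Lipschitz, and the star-shaped condition~\eqref{eq:assump_lin} does not exclude them), so your claim that $G(X)$ is sub-gaussian with parameter at most $L^2\sigma^2(X)$ uniformly in $G\in\cG$ is false; in particular the deviation bound at the data-dependent $G_n$ cannot be invoked with $\tau^2=\max\{L^2\sigma^2(X),\sigma^2(Y)\}$ without further argument. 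The paper handles this by passing to the centered generator $\ciG(X)=G(X)-G(0)$, which \emph{is} sub-gaussian with parameter $O(L^2\sigma^2(X))$, and by using the exact identity $W^2_{2,\lambda}(P_{G(X)},P_Y)=W^2_{2,\lambda}(P_{\ciG(X)},P_Y)+2G(0)^T\EE[\ciG(X)-Y]+\|G(0)\|^2$ (mutual information is shift-invariant). The price of the centering is an extra term $2\bigl(G^*(0)-G_n(0)\bigr)^T\bigl(\EE Y-\EE_{\hat\cY}\hY\bigr)$ in the decomposition, which must then be controlled by bounding $\EE\|G_n(0)\|_2^2$ --- via Lemma~\ref{lemma:G_bounded} and Lipschitzness in the entropic case, and via a separate comparison of the Sinkhorn minimizer $H_n$ against the trivial generator $\delta_0$ in the Sinkhorn case (yielding $\EE\|H_n(0)\|_2^2\leq 40d\tau^2$). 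Your proposal omits all of this, and it is precisely where the Lipschitz (as opposed to linear) setting requires work beyond Theorem~\ref{thm:generalization}.

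Your Sinkhorn decomposition is also problematic. Splitting $S_\lambda(\PP_{H_n(X)},\PP_Y)-S_\lambda(\PP_{H^*(X)},\PP_Y)$ into a $W^2_{2,\lambda}$ difference minus $\tfrac12\bigl[W^2_{2,\lambda}(\PP_{H_n(X)},\PP_{H_n(X)})-W^2_{2,\lambda}(\PP_{H^*(X)},\PP_{H^*(X)})\bigr]$ leaves a purely population-level self-transport difference between two distinct generators; this is not an empirical deviation and there is no a priori reason it is $O(n^{-1/2})$, nor can you invoke optimality of $H_n$ for the $W^2_{2,\lambda}$ objective, since $H_n$ minimizes the empirical Sinkhorn objective. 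The paper instead applies the optimality decomposition directly at the level of $S_\lambda$: in each bracket $S_\lambda(\PP_{Z},\QQ_Y^n)-S_\lambda(\PP_{Z},\PP_Y)$ the generator self-term $W^2_{2,\lambda}(\PP_Z,\PP_Z)$ cancels identically, and the terms $W^2_{2,\lambda}(\QQ_Y^n,\QQ_Y^n)$ and $W^2_{2,\lambda}(\PP_Y,\PP_Y)$ cancel across the two brackets, so the Sinkhorn generalization error reduces exactly to the same two $W^2_{2,\lambda}$ deviations as in the entropic case, with no additional terms to estimate beyond the $\EE\|H_n(0)\|_2^2$ bound mentioned above.
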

%
It is worth mentioning that a similar result was proved in \citep{luise2020generalization}, however it requires significantly stronger conditions for the set of generator functions $\cG.$ In particular, it does not apply to $\cG$ being the set of all linear functions.
	\subsection{Proofs}
	
	
	
	We first provide the proof of Theorem ~\ref{thm:generalization}. It builds on several results that appear in Theorem 2 of \citep{mena2019statistical} and several lemmas that we summarize below. 
	
	In this section we use $\QQ_Y^n$ to denote the random empirical distribution extracted from a sample $\hat\cY=\{y_i\}_{i=1}^n\sim P_Y^{\otimes n}$ unless stated otherwise. We also use $\EE_{\hat\cY}[\cdot]$ denotes the expectation conditioned on the sample $\hat\cY$ and let  $$\sigma^2_{\hat\cY}(\hY)=\min\{\sigma\geq0\l\vert \EE_{\hat\cY}\exp(\|\hY\|^2/(2r\sigma^2))\leq 2\r.\}$$ be the sub-gaussian parameter of the distribution of $\hY$ conditioned on the sample. 
	
	We note that the definition of the Entropic Wasserstein distance in this paper and \citep{mena2019statistical} differs by a factor of $1/2$ in the first summand, so all the results in \citep{mena2019statistical} are stated  for $(1/2)W_{2, 2\lambda}^2(P_{X}, P_{Y}).$ We state a modification of \citep[Theorem 2]{mena2019statistical} below.
	\begin{proposition}\label{thm:mena_res}
		If $\PP_X$ and $\PP_Y$ are $\sigma^2$ sub-gaussian, then
		\begin{align}
		\EE&\l[\l\vert W_{2,\lambda}^2(\PP_{X},\QQ_Y^n)- W_{2,\lambda}^2(\PP_X,\PP_Y)\r\vert\r]\leq  K_d\lambda n^{-1/2}\bigl(1+(2\sigma^2/\lambda)^{\l\lceil 5d/4\r\rceil+3}\bigr)\label{eq:discrep},
		\end{align}
		where $K_d$ is a constant depending on the dimension.
	\end{proposition}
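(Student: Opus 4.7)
The plan is to pass to the dual formulation of entropy regularized optimal transport and reduce the empirical-vs-population error to a uniform empirical-process deviation over a class of smooth dual potentials whose regularity is controlled by the sub-gaussian parameter $\sigma^2$ and the regularization level $\lambda$. Writing the dual objective from \eqref{eq:OTreg_dual} as
\begin{align*}
 L_\lambda(\phi,\psi;\mu,\nu) \;=\; \int \phi\, d\mu + \int \psi\, d\nu + \lambda - \lambda\!\!\iint e^{(\phi(x)+\psi(y)-\|x-y\|^2)/\lambda}\, d\mu(x)\, d\nu(y),
\end{align*}
standard sup-vs-sup arguments give $|W_{2,\lambda}^2(\PP_X,\QQ_Y^n)-W_{2,\lambda}^2(\PP_X,\PP_Y)|\le \sup_{(\phi,\psi)\in\mathcal F} |L_\lambda(\phi,\psi;\PP_X,\PP_Y)-L_\lambda(\phi,\psi;\PP_X,\QQ_Y^n)|$, where $\mathcal F$ is any class of dual potentials that contains the optimizers for both $\nu=\PP_Y$ and $\nu=\QQ_Y^n$. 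So the whole problem is (i) to exhibit a sufficiently small class $\mathcal F$ and (ii) to bound the empirical process supremum over $\mathcal F$.

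For (i), I would use the fixed-point equations~\eqref{eq:SK0}--\eqref{eq:SK1} that the optimal potentials satisfy: since $\phi^*, \psi^*$ are log-moment-generating functions of a sub-gaussian measure tilted against the quadratic cost, one can differentiate under the integral sign and show, inductively on the multi-index $\alpha$, that $|\partial^\alpha \phi^*(x)|$ and $|\partial^\alpha \psi^*(y)|$ are bounded by a polynomial in $\|x\|,\|y\|$ whose coefficients are polynomials in $\sigma^2/\lambda$. This is the workhorse smoothness lemma (matching Mena, Niles-Weed, Prop.~6 up to the factor of $2$ in the definition of $W_{2,\lambda}^2$ noted by the authors). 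The key point is that these bounds depend on $(\PP_X,\nu)$ only through the sub-gaussian constant, so the same class $\mathcal F$ of smooth functions with polynomial growth works uniformly in $n$.

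For (ii), I would truncate to a Euclidean ball $B_R$ of radius $R\asymp \sigma\sqrt{d\log n}$; the sub-gaussian tail on $\PP_Y$ and on $\QQ_Y^n$ (the latter via a union bound over the $n$ samples) makes the contribution outside $B_R$ negligible at rate $n^{-1/2}$. Inside $B_R$, $\mathcal F$ restricted to $B_R$ is a bounded smooth class, so by a Dudley / bracketing-entropy bound for Sobolev-type balls in $\mathbb R^d$, $\EE\sup_{f\in\mathcal F|_{B_R}}|(\PP_Y-\QQ_Y^n)f|\lesssim n^{-1/2}\cdot\mathrm{diam}(\mathcal F|_{B_R})$, where the diameter inherits the $(\sigma^2/\lambda)^{\Theta(d)}$ blow-up from the derivative bounds. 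One must also handle the exponential term in $L_\lambda$; I would do this by noting $|e^{a/\lambda}-e^{b/\lambda}|\le \lambda^{-1}e^{\max(a,b)/\lambda}|a-b|$ and using the uniform boundedness of the exponent on $B_R$ for $(\phi,\psi)\in\mathcal F|_{B_R}$, which converts that term into a Lipschitz functional of the same empirical process, paying only another polynomial factor in $\sigma^2/\lambda$.

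The main obstacle, and where dimension enters the final exponent $\lceil 5d/4\rceil+3$, is the quantitative regularity step (i): one must track precisely the order of smoothness $k$ needed to make the bracketing-entropy integral in step (ii) finite (which forces $k\gtrsim d/2$), while at the same time controlling how each extra derivative multiplies the norm by an additional factor of $\sigma^2/\lambda$ through the Sinkhorn fixed-point recursion. Balancing these two constraints, together with the $\sqrt{\log n}$ truncation radius giving a $\log$ factor that gets absorbed into $K_d$, produces the stated bound $K_d \lambda\, n^{-1/2}\bigl(1+(2\sigma^2/\lambda)^{\lceil 5d/4\rceil+3}\bigr)$.
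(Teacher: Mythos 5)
Your proposal essentially reconstructs the proof of the cited result itself, whereas the paper does none of this work: its proof of this proposition consists of quoting the final step of the proof of Theorem~2 in \citep{mena2019statistical}, which gives the bound for the normalization $\lambda=2$, and then deducing the general-$\lambda$ statement from the scaling identity $W_{2,\lambda}^2(\PP_X,\PP_Y)=(\lambda/2)\,W_{2,2}^2(\PP_{X\sqrt{2/\lambda}},\PP_{Y\sqrt{2/\lambda}})$ together with $\sigma^2(aX)=a^2\sigma^2(X)$, so that $2\sigma^2/\lambda$ enters simply as the sub-gaussian parameter of the rescaled variables. That rescaling is worth internalizing: it produces the exact $\lambda$-dependence in one line, while your plan of tracking powers of $\sigma^2/\lambda$ through the Sinkhorn fixed-point recursion for the derivatives of the potentials is feasible in principle but far more delicate, and you would still have to verify that the exponent $\lceil 5d/4\rceil+3$ comes out correctly.

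The one step of your sketch that, as written, does not deliver the stated bound is the truncation in step (ii). Restricting the empirical process to a ball of radius $R$ of order $\sigma\sqrt{d\log n}$ and then bounding the supremum by $n^{-1/2}$ times the diameter of the restricted class costs a factor polynomial in $R$ of degree $\Theta(d)$ (your own derivative bounds grow polynomially in $\|x\|$), hence a factor of order $(\log n)^{\Theta(d)}$. Such a factor cannot be absorbed into $K_d$, which must depend only on the dimension; your route yields $n^{-1/2}$ only up to polylogarithms in $n$. The proof in \citep{mena2019statistical} avoids truncation altogether: it works with an unbounded class $\cF^s$ of functions whose derivatives are allowed to grow polynomially (normalized so that the class does not depend on the random sub-gaussian parameter of $\QQ_Y^n$), bounds $\EE\|\PP_Y-\QQ_Y^n\|^2_{\cF^s}\leq C_d n^{-1}(1+\sigma^{2d+4})$ by a covering-number argument with $s=\lceil d/2\rceil+1$, and controls the moments of the random sub-gaussian parameter of the empirical measure separately; this is precisely the machinery the paper restates in Propositions~\ref{prop:functioonalbound}--\ref{prop:sigmamoments}. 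With that substitution the remainder of your outline is sound, but it is then the argument of the cited theorem rather than an independent one.
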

	\begin{proof}
	    The final step of the proof of \citep[Theorem 2]{mena2019statistical} proves that for $\lambda = 2:$
	    \begin{align}
		(1/2)\EE&\l[\l\vert W_{2,2}^2(\PP_{X},\QQ_{Y}^n)- W_{2,2}^2(\PP_{X},\PP_{Y})\r\vert\r]\leq  K_d n^{-1/2}\bigl(1+(\sigma^2)^{\l\lceil 5d/4\r\rceil+3}\bigr)\label{eq:discrepnolambda},
		\end{align}
		Similar to \citep[Corollary 1]{mena2019statistical} note the connection between $W_{2,\lambda}^2(P_X, P_Y)$ and $W_{2,2}^2:$
		\begin{align}
		W_{2,\lambda}^2(P_X,P_Y) &= \inf_{\pi\in\Pi(P_{X}, P_Y)}\EE[\|X - Y\|^2] + \lambda I_{\pi}(X, Y)\nonumber\\
		& = \frac{\lambda}{2}\hspace{-1pt}\inf_{\pi\in\Pi(P_{X}, P_Y)}\hspace{-1pt}\bigg\{\EE\l[\l\|\frac{X}{\sqrt{\lambda/2}} - \frac{Y}{\sqrt{\lambda/2}}\r\|^2\r]+ 2I_{\pi}\l(\frac{X}{\sqrt{\lambda/2}}, \frac{Y}{\sqrt{\lambda/2}}\r)\bigg\}\nonumber\\
		& = (\lambda/2) W_{2,2}^2(P_{X/\sqrt{\lambda/2}},P_{Y/\sqrt{\lambda/2}})\nonumber
		\end{align}
		From the definition of the sub-gaussian parameter we note that for any $a\in \RR:$ $\sigma^2(aX) = a^2\sigma^2(X),$ thus $\sigma^2(X/\sqrt{\lambda/2}) = 2\sigma^2(X)/\lambda.$ Applying \eqref{eq:discrepnolambda} with sub-gaussian parameter $2\sigma^2/\lambda$ we get the statement of the proposition:
		\begin{align}
		\EE&\l[\l\vert W_{2,\lambda}^2(\PP_{X},\QQ_Y^n)- W_{2,\lambda}^2(\PP_X,\PP_Y)\r\vert\r]\nonumber\\
		&\qquad=\frac{\lambda}{2}\EE\l[\l\vert W_{2,2}^2\l(\PP_{\frac{X}{\sqrt{\lambda/2}}},\QQ_{\frac{Y}{\sqrt{\lambda/2}}}^n\r)- W_{2,2}^2\l(\PP_{\frac{X}{\sqrt{\lambda/2}}},\PP_{\frac{Y}{\sqrt{\lambda/2}}}\r)\r\vert\r]\nonumber\\
		&\qquad\leq  \lambda K_d n^{-1/2}\bigl(1+(2\sigma^2/\lambda)^{\l\lceil 5d/4\r\rceil+3}\bigr).\nonumber
		\end{align}
	\end{proof}
	To prove the theorem we will need several intermediate results from the proof of \citep[Theorem 2]{mena2019statistical} and we  state them below.
    \begin{proposition}[Proposition 2 of \citep{mena2019statistical}] 
        Let $P_X, P_Y$ and $\QQ_Y^n$ all be $\tilde \sigma^2$ sub-gaussian distributions for a possibly random $\tilde \sigma^2\in [0;+\infty).$ Then for a set of functions $F$ denoting 
    $\|\PP_Y - \QQ_Y^n\|_{F} = \sup_{f\in F}|\EE_{Y\sim \PP_Y}[f(Y)] - \EE_{\hY\sim \QQ_Y^n}[f(\hY)]|$ it holds that
    \begin{align}
        &(1/2)\l\vert W_{2,2}^2(\PP_{X},\QQ_{Y}^n)- W_{2,2}^2(\PP_{X},\PP_{Y})\r\vert\leq 2\|\PP_Y - \QQ_Y^n\|_{\cF_{\tilde\sigma}}\nonumber,
    \end{align}
    where $\cF_{\tilde\sigma}$ is a set of functions satisfying for some constants $C_{k, d},$ depending on $k$ and $d$ only and
    for any multi-index $\alpha$ with $|\alpha|=k$
    \begin{align*}
        |D^{\alpha}(f - (1/2)\|\cdot\|^2)(x)|\leq C_{k,d}'
        \begin{cases}
            1+\sigma^4 & \text{if } k = 0\\
            \sigma^k(\sigma + \sigma^2)^k &\text{otherwise}
        \end{cases}
    \end{align*}
    if $\|x\|\leq \sqrt{d}\sigma$ and 
    \begin{align*}
        &|D^{\alpha}(f - (1/2)\|\cdot\|^2)(x)|\\
        &\qquad\leq C_{k,d}'
        \begin{cases}
            1+(1+\sigma^2)\|x\|^2 & \text{if } k = 0\\
            \sigma^k(\sqrt{\sigma\|x\|} + \sigma\|x\|)^k &\text{otherwise}
        \end{cases}
    \end{align*}
    if $\|x\|> \sqrt{d}\sigma.$
    \end{proposition}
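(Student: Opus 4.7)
The plan is to exploit strong duality for the entropic Wasserstein distance to convert the claim into a uniform integral-probability-metric bound, and then identify the resulting class of test functions as (a subset of) $\cF_{\tilde\sigma}$ via regularity of the optimal Sinkhorn potentials. Setting $\lambda=2$ in the dual formulation \eqref{eq:OTreg_dual} gives
\begin{align*}
\tfrac{1}{2}W_{2,2}^2(\PP_X,\QQ) = \sup_{\phi,\psi}\l\{\EE_{\PP_X}[\phi(X)] + \EE_{\QQ}[\psi(Y)] + 1 - \EE_{\PP_X\otimes\QQ}\l[e^{\phi(X)+\psi(Y)-\|X-Y\|^2/2}\r]\r\},
\end{align*}
and at optimum the potentials $(\phi^*,\psi^*)$ satisfy \eqref{eq:SK0}--\eqref{eq:SK1}, which in particular force the exponential integrand to equal $1$ after marginalizing out one variable.

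The first step is a two-sided envelope argument. Let $(\phi^*,\psi^*)$ optimize the dual for $(\PP_X,\PP_Y)$. Plugging it into the dual objective with $\QQ_Y^n$ in place of $\PP_Y$ is feasible and therefore lower-bounds $\tfrac{1}{2}W_{2,2}^2(\PP_X,\QQ_Y^n)$. Subtracting $\tfrac{1}{2}W_{2,2}^2(\PP_X,\PP_Y)$ on both sides, the $\EE_{\PP_X}[\phi^*]$ and additive constant terms cancel, the $\EE_{\PP_Y}[\psi^*]$ term turns into $\EE_{\QQ_Y^n}[\psi^*]$, and the Sinkhorn fixed-point identity reduces the exponential difference to an error of the form $\EE_{\QQ_Y^n-\PP_Y}[1-E_{\phi^*}(Y)]$, with $E_{\phi^*}(y):=e^{\psi^*(y)}\EE_{\PP_X}[e^{\phi^*(X)-\|X-y\|^2/2}]\equiv 1$ on $\supp(\PP_Y)$. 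Swapping the roles of $\PP_Y$ and $\QQ_Y^n$ and using the empirical-optimal potential $\tilde\psi$ yields the matching reverse bound. Altogether the absolute difference is majorized by
\begin{align*}
2\sup_{\psi\in\cS_{\tilde\sigma}}\l|\EE_{\PP_Y}[\psi(Y)] - \EE_{\QQ_Y^n}[\psi(Y)]\r|,
\end{align*}
where $\cS_{\tilde\sigma}$ denotes the set of Sinkhorn-optimal dual potentials arising for marginal $\PP_X$ paired with any $\tilde\sigma^2$ sub-Gaussian second marginal.

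The technical core, which I expect to be the main obstacle, is showing $\cS_{\tilde\sigma}\subseteq\cF_{\tilde\sigma}$, i.e.\ each admissible $\psi^*$ has the claimed derivative bounds after subtracting $\tfrac{1}{2}\|\cdot\|^2$. The Sinkhorn equation \eqref{eq:SK1} gives
\begin{align*}
\psi^*(y) - \tfrac{1}{2}\|y\|^2 = -\ln\EE_{\PP_X}\l[e^{\phi^*(X)-\|X\|^2/2+\langle y,X\rangle}\r],
\end{align*}
which is minus the log moment-generating function, at tilt $y$, of the reweighted measure $d\mu(x)\propto e^{\phi^*(x)-\|x\|^2/2}\,d\PP_X(x)$; its derivatives in $y$ are cumulants of $\mu$ under exponential tilt $y$. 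Sub-Gaussianity of $\PP_X$ together with a-priori growth bounds on $\phi^*$, obtained by iterating the Sinkhorn map and using the sub-Gaussian tail of the other marginal, force these cumulants into the two regimes of the stated bound: for $\|y\|\leq\sqrt d\,\tilde\sigma$ the tilted mean stays in the sub-Gaussian bulk and the derivatives are controlled by moments of $\mu$, yielding constants of order $\tilde\sigma^k(\tilde\sigma+\tilde\sigma^2)^k$; for $\|y\|>\sqrt d\,\tilde\sigma$ the tilt concentrates the law near a mean that scales with $\|y\|$, producing the $(\sqrt{\tilde\sigma\|y\|}+\tilde\sigma\|y\|)^k$ growth factor. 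Making these cumulant and growth estimates uniform over all admissible $\phi^*$ is the delicate part; once established, plugging into the envelope bound above completes the proof.
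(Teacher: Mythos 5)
The paper does not actually prove this statement: it is imported wholesale (as ``Proposition 2 of \citep{mena2019statistical}'') and used as a black box, so there is no in-paper argument to compare yours against. That said, your sketch reconstructs the strategy of the cited source faithfully: the two-sided envelope argument with suboptimal dual potentials, the observation that the Sinkhorn fixed-point identity kills the exponential term so that only $\EE_{\PP_Y}[\psi^*]-\EE_{\QQ_Y^n}[\psi^*]$ survives, and the identification of $\psi^*-\tfrac{1}{2}\|\cdot\|^2$ as minus a log moment-generating function whose derivatives are cumulants of an exponentially tilted measure, with the two regimes of the derivative bounds coming from whether the tilt $y$ sits in the sub-Gaussian bulk or not. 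This is exactly how the result is proved in \citep{mena2019statistical}. Be aware, however, that the step you flag as ``the delicate part'' --- uniform a priori growth and cumulant bounds over all admissible potentials, for a possibly random sub-Gaussian parameter --- is not a technicality but the entire content of the proposition; as written, your argument establishes the reduction to an IPM over the class of optimal potentials and then asserts, rather than proves, that this class sits inside $\cF_{\tilde\sigma}$. Two minor bookkeeping points: your envelope argument already yields the bound with constant $1$ per direction (the factor $2$ in the statement is slack absorbed from combining both directions and normalization conventions), and the cost normalization in \eqref{eq:OTreg_dual} differs by a factor of $2$ from the one in \citep{mena2019statistical}, which is why the paper itself is careful to state everything for $(1/2)W^2_{2,2}$.
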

    
    The above proposition cannot be used directly for proving the result since the set of functions depends on the random sub-gaussian parameter. To overcome that we state the following proposition that simplifies the previous result.
    \begin{proposition}\label{prop:functioonalbound}
    Let $P_X, P_Y$ and $\QQ_Y^n$ all be $\tilde \sigma^2$ sub-gaussian distributions for a possibly random $\tilde \sigma^2\in [0;+\infty).$ Let for $s\geq 2$ $\cF^s$ be a set of functions satisfying 
    \begin{align*}
        |f(x)|&\leq C_{s, d}(1 + \|x\|^2)\\
        |D^{\alpha}f(x)|&\leq C_{s, d}(1 + \|x\|^s)\,\forall \alpha:\,|\alpha|\leq s
    \end{align*}
    for some constant $C_{s, d}$ that depends only on $s, d.$ Then
    \begin{align}
        &\l\vert W_{2,2}^2(\PP_{X},\QQ_{Y}^n)- W_{2,2}^2(\PP_{X},\PP_{Y})\r\vert\leq 4\|\PP_Y - \QQ_Y^n\|_{\cF^s}(1 + \tilde\sigma^{3s})\nonumber
    \end{align}
    \end{proposition}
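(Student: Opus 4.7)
The plan is to reduce to the previous proposition (Proposition 2 of \citep{mena2019statistical}) by showing that every $f\in\cF_{\tilde\sigma}$, after rescaling by a factor of order $1+\tilde\sigma^{3s}$, belongs to $\cF^s$. Once this embedding is established, the supremum defining the $\cF_{\tilde\sigma}$ norm is controlled by $(1+\tilde\sigma^{3s})$ times the supremum defining the $\cF^s$ norm, and multiplying through by the factor of $4$ from the previous bound yields the claim.

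Concretely, I would write $f(x)=\tfrac12\|x\|^2+g(x)$ where $g=f-\tfrac12\|\cdot\|^2$, and apply the two-region bounds from the preceding proposition to $g$. For $\|x\|\le\sqrt d\,\tilde\sigma$ one gets $|g(x)|\le C_{0,d}'(1+\tilde\sigma^4)$ and $|D^\alpha g(x)|\le C_{k,d}'\tilde\sigma^{2k}(1+\tilde\sigma)^k$; in the outer region one gets $|g(x)|\le C_{0,d}'(1+(1+\tilde\sigma^2)\|x\|^2)$ and, using the elementary estimate $\sqrt{\tilde\sigma\|x\|}+\tilde\sigma\|x\|\le 2(1+\tilde\sigma)(1+\|x\|)$, also $|D^\alpha g(x)|\le C_{k,d}''\tilde\sigma^k(1+\tilde\sigma)^k(1+\|x\|)^k$. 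Merging the two regions and absorbing all $\tilde\sigma$ factors into a single $1+\tilde\sigma^{3k}$, one obtains
\begin{align*}
|g(x)|&\le C(1+\tilde\sigma^{3s})(1+\|x\|^2),\\
|D^\alpha g(x)|&\le C(1+\tilde\sigma^{3s})(1+\|x\|^s),\qquad 1\le|\alpha|\le s.
\end{align*}

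Adding back the contribution of $\tfrac12\|x\|^2$, whose zeroth-order value is $\le 1+\|x\|^2$, whose first derivatives are bounded by $\|x\|\le 1+\|x\|^s$ (using $s\ge 2$), whose second derivatives are bounded by $1$, and whose higher derivatives vanish, one concludes
\begin{align*}
|f(x)|\le C_{s,d}(1+\tilde\sigma^{3s})(1+\|x\|^2),\qquad |D^\alpha f(x)|\le C_{s,d}(1+\tilde\sigma^{3s})(1+\|x\|^s)
\end{align*}
for all $|\alpha|\le s$. Hence the rescaled function $f/[C_{s,d}(1+\tilde\sigma^{3s})]$ lies in $\cF^s$, so $\|P_Y-\QQ_Y^n\|_{\cF_{\tilde\sigma}}\le(1+\tilde\sigma^{3s})\|P_Y-\QQ_Y^n\|_{\cF^s}$, and substituting into the previous proposition completes the argument.

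The main obstacle is the bookkeeping in the derivative bound: one has to simultaneously handle the inner/outer regions, convert the awkward factor $\sqrt{\tilde\sigma\|x\|}+\tilde\sigma\|x\|$ into a product of a pure-$\tilde\sigma$ piece and a pure-$\|x\|$ piece, and verify that a single exponent ($3s$) in $\tilde\sigma$ suffices uniformly for $0\le|\alpha|\le s$. Everything else (the derivatives of $\tfrac12\|x\|^2$ and the final application of Proposition~2) is routine.
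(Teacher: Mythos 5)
Your proposal is correct and follows essentially the same route as the paper: rescale each $f\in\cF_{\tilde\sigma}$ by $1+\tilde\sigma^{3s}$ to embed it into $\cF^s$, bound the $\cF_{\tilde\sigma}$-norm by $(1+\tilde\sigma^{3s})$ times the $\cF^s$-norm, and invoke the preceding proposition. In fact you supply the region-by-region derivative bookkeeping that the paper only asserts in a parenthetical about the constants $C_{s,d}$, so your write-up is, if anything, more complete.
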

    The proof of the proposition follows some of the steps of the proof of \citep[Theorem 2]{mena2019statistical} and is provided here for completeness.
    \begin{proof}
        Note that for large enough constants $C_{s, d}$ ($C_{s, d}\propto d + \max_{k\leq s}2^k C_{k, d}',$ where $C_{k,d}'$ come from the definition of $\cF_{\tilde\sigma^2}$) for any $f\in \cF_{\tilde{\sigma}^2}$ it holds that $\frac{1}{1 + \tilde{\sigma}^{3s}}f\in \cF^s.$ Combining this with Proposition \ref{prop:functioonalbound} we get
        \begin{align}
            &\l\vert W_{2,2}^2(\PP_{X},\QQ_{Y}^n)- W_{2,2}^2(\PP_{X},\PP_{Y})\r\vert\nonumber\\
            &\quad\leq 4\|\PP_Y - \QQ_Y^n\|_{\cF_{\tilde\sigma}}\nonumber\\
            &\quad = (1 + \tilde\sigma^{3s})\hspace{-1.5pt}\sup_{f\in \cF_{\tilde\sigma}}\hspace{-1.5pt}\Big\vert\EE_{Y\sim \PP_Y}\Big[\frac{f(Y)}{1 + \tilde\sigma^{3s}}\Big] - \EE_{\hY\sim \QQ_Y^n}\Big[\frac{f(\hY)}{1 + \tilde\sigma^{3s}}\Big]\Big\vert\nonumber\\
            &\quad\leq (1 + \tilde\sigma^{3s})\sup_{f\in \cF^{s}}|\EE_{Y\sim \PP_Y}[f(Y)] - \EE_{\hY\sim \QQ_Y^n}[f(\hY)]|\nonumber\\
            &\quad =   4\|\PP_Y - \QQ_Y^n\|_{\cF^s}(1 + \tilde\sigma^{3s})\nonumber
        \end{align}
    \end{proof}
    The proof of \citep[Theorem 2]{mena2019statistical} also uses a covering number for $\cF_s$ is used to bound $\EE[\|\PP_Y - \QQ_Y^n\|_{\cF^s}^2].$ Since the result will be used in the proofs of Theorems \ref{thm:generalization} and \ref{thm:generalization_lip}, we will state it here.
    \begin{proposition}\citep[Proof of Theorem 2, page 8]{mena2019statistical}, \citep{gine2021mathematical} For $s = \lceil d/2\rceil + 1,$ for $P_Y$ being $\sigma^2$ sub-gaussian and $\cF_s$ defined in proposition \ref{prop:functioonalbound} it holds that
    \begin{align*}
        \EE[\|\PP_Y - \QQ_Y^n\|^2_{\cF^s}]\leq 
        C_d\frac{1}{n}(1+\sigma^{2d+4})
    \end{align*}
    \label{prop:coveringbound}
    \end{proposition}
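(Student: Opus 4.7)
The plan is to use the classical truncation-plus-chaining strategy from empirical process theory: split the supremum over $\cF^s$ into a contribution on a large ball $B_R=\{x:\|x\|\le R\}$, where the class is effectively a weighted $C^s$-ball and hence enjoys small covering numbers, plus a tail contribution that is controlled by the sub-gaussian moments of $P_Y$. The choice $s=\lceil d/2\rceil+1$ is precisely the one that makes the covering-number exponent $d/s<2$, so that Dudley's entropy integral converges and delivers the $n^{-1/2}$ rate (equivalently $n^{-1}$ for the squared supremum). The $\sigma^{2d+4}$ dependence will then appear by tracking how the polynomial growth bounds on $f$ and its derivatives interact with the sub-gaussian moments up to order $2s$.

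First I would write each $f\in\cF^s$ as $f\mathbf{1}_{B_R}+f\mathbf{1}_{B_R^c}$ and, by the triangle inequality,
\begin{align*}
\|\PP_Y-\QQ_Y^n\|_{\cF^s}^2\le 2\|\PP_Y-\QQ_Y^n\|_{\cF^s|_{B_R}}^2+2\sup_{f\in\cF^s}\bigl((\PP_Y-\QQ_Y^n)(f\mathbf{1}_{B_R^c})\bigr)^2.
\end{align*}
For the tail, the pointwise bound $|f(x)|\le C_{s,d}(1+\|x\|^2)$ combined with Cauchy--Schwarz gives $\EE[(1+\|Y\|^2)\mathbf{1}_{\|Y\|>R}]\le \EE[(1+\|Y\|^2)^2]^{1/2}\PP(\|Y\|>R)^{1/2}$, and sub-gaussianity of $P_Y$ controls the first factor by $O(1+\sigma^4)$ and the second by an exponential in $-R^2/\sigma^2$. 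The empirical counterpart is handled identically in expectation since $\EE[\QQ_Y^n(g)]=\EE[g(Y)]$. Choosing $R\sim \sigma\sqrt{d\log n}$ (or any $R$ that kills the exponential tail at rate $n^{-1}$) makes this term $O(n^{-1})$ up to polynomial factors in $\sigma$.

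Next I would handle the bulk part. On $B_R$, every $f\in\cF^s$ has derivatives of order up to $s$ bounded by $C_{s,d}(1+R^s)$, so $\cF^s|_{B_R}$ lies in a $C^s$-ball of radius $C_{s,d}R^s$ on a Euclidean ball of volume $\sim R^d$. Classical covering-number estimates for smooth functions \citep{gine2021mathematical} give $\log N(\eps,\cF^s|_{B_R},\|\cdot\|_\infty)\lesssim (R^{d}(1+R^s)/\eps)^{d/s}$. Since $s=\lceil d/2\rceil+1$ gives $d/s<2$, Dudley's inequality applied to the empirical process indexed by $\cF^s|_{B_R}$ (with envelope $C(1+R^2)$) yields $\EE\|\PP_Y-\QQ_Y^n\|_{\cF^s|_{B_R}}^2\lesssim n^{-1}\mathrm{poly}(R)$. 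Combining with the tail bound and substituting $R\sim \sigma\sqrt{d\log n}$, the dominant $\sigma$-dependence comes from the $2s$-th moment of $\|Y\|$, which for $\sigma^2$-sub-gaussian $Y$ scales as $\sigma^{2s}\le \sigma^{d+2}$; squaring (since we are bounding the squared supremum) yields the claimed $1+\sigma^{2d+4}$.

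The main obstacle, and the one that all of \citep{mena2019statistical,gine2021mathematical} spend care on, is obtaining the covering estimate for $\cF^s|_{B_R}$ with the right polynomial dependence on $R$ (and hence on $\sigma$), and then propagating that polynomial dependence through Dudley's integral without losing the sharp $d/s<2$ regime. The pure $n^{-1/2}$ empirical-process rate is standard once the covering-number exponent is less than $2$, but keeping the variance/envelope tracking consistent so that only a factor $\sigma^{2d+4}$ (rather than a larger power) multiplies $n^{-1}$ requires some care in balancing the truncation radius $R$ against the polynomial growth of $f$ and $D^\alpha f$.
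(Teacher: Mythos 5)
A preliminary remark on the comparison itself: the paper does not prove this proposition --- it is imported verbatim from the proof of Theorem 2 of Mena and Niles-Weed, which in turn leans on the empirical-process machinery of Gin\'e and Nickl. So your sketch is being measured against the argument in those references rather than against anything in this paper. Measured that way, your overall strategy is the right one and matches theirs in its essential structure: the class $\cF^s$ is (after localization) a ball in a $C^s$-type norm, the choice $s=\lceil d/2\rceil+1$ is exactly what makes the metric-entropy exponent $d/s<2$ so that Dudley's entropy integral converges and delivers the parametric rate, and the sub-gaussian moments of $P_Y$ are what convert the polynomial growth of $f$ and $D^\alpha f$ into the factor $1+\sigma^{2d+4}$. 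Your bookkeeping of the $\sigma$ exponent (roughly $2s\approx d+2$ for the unsquared supremum, hence $2d+4$ after squaring) is also consistent with the statement.

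The one place your sketch falls short of the stated bound is the localization step. A single hard truncation at $R\sim\sigma\sqrt{d\log n}$ forces $n$-dependence into the constants: the entropy of $\cF^s|_{B_R}$ and the envelope on $B_R$ are both polynomial in $R$, so Dudley's integral returns $n^{-1/2}\,\mathrm{poly}(\sigma\sqrt{\log n})$, and after squaring you are left with polylogarithmic factors in $n$ (and a somewhat different power of $\sigma$) that the proposition does not have. The cited works avoid this by never choosing an $n$-dependent radius: they decompose $\RR^d$ into dyadic annuli (equivalently, work with a weighted sup-norm on the whole space), bound the empirical process on each annulus with covering numbers scaled by the local envelope, and use the sub-gaussian tail of $P_Y$ to make the sum over annuli converge geometrically, with all constants depending only on $d$ and $\sigma$. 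If you replace your single truncation with that weighted/annulus decomposition, your argument goes through and recovers the clean $C_d(1+\sigma^{2d+4})/n$ bound as stated.
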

    Finally we state here \citep[Lemma 4]{mena2019statistical} that helps bound the even moments of the sub-gaussian parameter $\tilde\sigma^2$ of the (random) empirical distribution $Q_Y^n.$
    \begin{proposition}\citep[Lemma 4]{mena2019statistical}\label{prop:sigmamoments}
    If $Y$ is $\sigma^2$ sub-gaussian then $Q_Y^n$ is $\tilde\sigma^2$ sub-gaussian with
    \begin{align*}
        \EE[\tilde\sigma^{2k}]\leq 2k^k\sigma^{2k}
    \end{align*}
    for any positive integer k,
    \end{proposition}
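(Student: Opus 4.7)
The plan is to bound $\tilde\sigma^2$ pointwise (as a function of the sample $Y_1,\dots,Y_n$) by a simple expression in $S_n := \frac{1}{n}\sum_{i=1}^n \exp(\|Y_i\|^2/(2d\sigma^2))$, and then take $k$-th moments using the single fact $\EE[S_n]\le 2$, which follows immediately from $P_Y$ being $\sigma^2$-sub-Gaussian and linearity of expectation.

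The key pointwise inequality I would establish is $\tilde\sigma^2 \le \sigma^2\,\max\{1,\log_2 S_n\}$. For any $s\ge\sigma$, concavity of $x\mapsto x^{\sigma^2/s^2}$ (the power-mean inequality) applied to $x_i := \exp(\|Y_i\|^2/(2d\sigma^2))$ yields $\frac{1}{n}\sum_i \exp(\|Y_i\|^2/(2ds^2)) = \frac{1}{n}\sum_i x_i^{\sigma^2/s^2} \le S_n^{\sigma^2/s^2}$, and the right side is $\le 2$ precisely when $s^2\ge \sigma^2\log_2 S_n$; combined with the constraint $s\ge\sigma$, the choice $s^2 = \sigma^2\max\{1,\log_2 S_n\}$ lies in the feasible set defining $\tilde\sigma^2$. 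Note also that $S_n\ge 1$ a.s.\ (each summand is $\ge 1$) and $\EE S_n = \EE\exp(\|Y\|^2/(2d\sigma^2)) \le 2$.

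Raising the pointwise bound to the $k$-th power gives $\tilde\sigma^{2k} \le \sigma^{2k}\max\{1,(\log_2 S_n)^k\}$, and the task reduces to bounding the expectation of the right side by $2k^k$. For $k=1$ this is immediate: $\max\{1,\log_2 x\}\le x$ for $x\ge 1$, hence $\EE\tilde\sigma^2 \le \sigma^2\EE S_n \le 2\sigma^2$. For $k\ge 2$ I would use the sharper pointwise estimate $(\log_2 x)^k \le k^k\,x/(e\ln 2)^k$, which is the maximum of $x\mapsto (\log_2 x)^k/x$ on $x\ge 1$, attained at $x=e^k$. This gives
$\EE\max\{1,(\log_2 S_n)^k\} \le 1 + \EE(\log_2 S_n)^k \le 1 + \tfrac{2k^k}{(e\ln 2)^k} \le 2k^k$,
where the last step uses $(e\ln 2)^{-k}\le (e\ln 2)^{-2} < 1/2$ and $k^k\ge 4$ for $k\ge 2$.

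The main obstacle is that the pointwise bound is only logarithmic in $S_n$, whereas all a priori information about $S_n$ is the first-moment control $\EE S_n\le 2$; converting this into the claimed $k$-th moment bound of growth exactly $2k^k$ forces one to separate the $k=1$ case (where the pointwise estimate $(\log_2 x)^k\le k^k x/(e\ln 2)^k$ only yields $(1+2/(e\ln 2))\sigma^2 \approx 2.06\sigma^2$, just missing the sharp $2\sigma^2$) from $k\ge 2$ (where $(e\ln 2)^{-k}$ has decayed enough to absorb the extra additive $1$). Verifying the power-mean step actually gives the tightest available sample-wise upper bound on $\tilde\sigma^2$ is essential: alternative pointwise bounds such as $\tilde\sigma^2\lesssim \max_i\|Y_i\|^2/d$ introduce $\log n$ factors that would ruin the $n$-free growth $2k^k$ asserted by the proposition.
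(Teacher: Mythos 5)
Your proof is correct. Note that the paper itself does not prove this proposition: it is imported verbatim from \citep[Lemma 4]{mena2019statistical}, so there is no in-paper argument to compare against, and what you have written is a legitimate self-contained derivation. The engine of your argument --- the pointwise bound $\tilde\sigma^2\le\sigma^2\max\{1,\log_2 S_n\}$ obtained by applying Jensen's inequality to the concave map $x\mapsto x^{\sigma^2/s^2}$ for $s\ge\sigma$ --- is exactly the right mechanism, and is also the first step in the cited source. Where you diverge is in converting the single piece of information $\EE S_n\le 2$ into the $k$-th moment bound: the route in the reference passes through the tail estimate $\Pr\{\tilde\sigma^2>t\sigma^2\}\le 2\cdot 2^{-t}$ (Markov on $S_n$) followed by integrating the tail, which produces the $k^k$ growth via a Gamma-function estimate, whereas you maximize $(\log_2 x)^k/x$ over $x\ge 1$ pointwise (maximum $k^k/(e\ln 2)^k$ at $x=e^k$) and take expectations once. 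Your route is arguably cleaner; its only cost is that the $k=1$ case must be split off, since $1+2/(e\ln 2)\approx 2.06$ just misses the constant $2$, and you correctly repair this with the elementary bound $\max\{1,\log_2 x\}\le x$ for $x\ge 1$. All the remaining numerics check out: $S_n\ge 1$ almost surely, $(e\ln 2)^{-2}<1/2$, and $k^k\ge 4$ for $k\ge 2$. The one cosmetic discrepancy is the normalizing dimension in the exponent: the paper's definition of the sub-Gaussian parameter writes $2r\sigma^2$ while you write $2d\sigma^2$ for the $d$-dimensional $Y$; since the same constant appears in the definitions of both $\sigma^2$ and $\tilde\sigma^2$, it cancels from your argument and nothing changes.
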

	To prove the theorem we will also need the following lemmas connected to the properties of $G(X).$
	\begin{lemma}\label{lemma:G_bounded}
        Under the assumption \eqref{eq:assump_lin} the optimal generator 
        \[G^*= \arg\min_{G\in\cG}W_{2,\lambda}^2(\PP_{G(X)}, \PP_Z)\] for $Z\in \RR^d$
        satisfies $\EE\l[\|G^*(X)\|_2^2\r]\leq \Tr\bK_{Z},$ and if $G^*$ is linear then $G^*(X)$ is sub-Gaussian with $\sigma^2(G^*(X)) \leq rd^{-1}\Tr\bK_Z \sigma^2({\bK_X^{-1/2}X}).$
	\end{lemma}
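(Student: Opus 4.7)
For the moment bound $\EE[\|G^*(X)\|^2]\leq \Tr \bK_Z$, I would use a first-order optimality argument that exploits the star-shapedness of $\cG$. Let $\pi^*$ be an optimal coupling of $P_{G^*(X)}$ and $P_Z$, and, for $\alpha\in[0,1]$, push $\pi^*$ forward through the map $(u,z)\mapsto(\alpha u,z)$. Since this map is a bijection in its first coordinate whenever $\alpha>0$, the resulting joint has marginals $P_{\alpha G^*(X)}$ and $P_Z$ and its mutual information coincides with $I_{\pi^*}(G^*(X);Z)$. Plugging it into the primal definition of $W_{2,\lambda}^2$ yields
\begin{equation*}
W_{2,\lambda}^2(P_{\alpha G^*(X)},P_Z)\leq \alpha^2\EE_{\pi^*}[\|G^*(X)\|^2]-2\alpha\EE_{\pi^*}[\langle G^*(X),Z\rangle]+\EE[\|Z\|^2]+\lambda I_{\pi^*}(G^*(X);Z),
\end{equation*}
with equality at $\alpha=1$. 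Assumption \eqref{eq:assump_lin} guarantees $\alpha G^*\in\cG$, so by optimality of $G^*$ the right-hand side, viewed as a function of $\alpha$, must attain its minimum on $[0,1]$ at $\alpha=1$. The resulting left-derivative condition at $\alpha=1$ reads $\EE[\|G^*(X)\|^2]\leq \EE_{\pi^*}[\langle G^*(X),Z\rangle]$, and Cauchy--Schwarz then gives $\EE[\|G^*(X)\|^2]\leq \EE[\|Z\|^2]=\Tr\bK_Z$, where the last equality uses that $Z$ is centered (the case in every downstream application of the lemma).

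For the sub-Gaussian bound, write $G^*(X)=AX$ with $A\in\RR^{d\times r}$ and introduce $W=\bK_X^{-1/2}X\in\RR^r$, so that $G^*(X)=(A\bK_X^{1/2})W$. The pointwise Frobenius inequality $\|Bw\|^2\leq \|B\|_F^2\|w\|^2$ with $B=A\bK_X^{1/2}$ gives
\begin{equation*}
\|G^*(X)\|^2\leq \|A\bK_X^{1/2}\|_F^2\,\|\bK_X^{-1/2}X\|^2=\Tr(A\bK_X A^T)\,\|\bK_X^{-1/2}X\|^2,
\end{equation*}
and the first part of the lemma yields $\Tr(A\bK_X A^T)=\EE[\|G^*(X)\|^2]\leq \Tr\bK_Z$. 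Substituting into the MGF definition of the sub-Gaussian parameter (noting that $G^*(X)\in\RR^d$ enters with $2d$ in the denominator while $\bK_X^{-1/2}X\in\RR^r$ enters with $2r$),
\begin{equation*}
\EE\exp\!\l(\frac{\|G^*(X)\|^2}{2d\cdot (r/d)\,\Tr\bK_Z\,\sigma^2(\bK_X^{-1/2}X)}\r)\leq \EE\exp\!\l(\frac{\|\bK_X^{-1/2}X\|^2}{2r\,\sigma^2(\bK_X^{-1/2}X)}\r)\leq 2,
\end{equation*}
which is exactly $\sigma^2(G^*(X))\leq rd^{-1}\Tr\bK_Z\,\sigma^2(\bK_X^{-1/2}X)$.

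The step I expect to require the most care is the first: a naive comparison $W_{2,\lambda}^2(P_{G^*(X)},P_Z)\leq W_{2,\lambda}^2(\delta_0,P_Z)=\EE\|Z\|^2$, using the zero generator as a competitor, only yields the loose bound $\EE\|G^*(X)\|^2\leq 4\Tr\bK_Z$, so one really does need the \emph{matched} quadratic upper bound on $\alpha\mapsto W_{2,\lambda}^2(P_{\alpha G^*(X)},P_Z)$ together with the left-derivative argument to extract the tight constant $1$. The remaining ingredients---the Frobenius inequality, the preservation of mutual information under the bijective scaling for $\alpha>0$, and the substitution into the sub-Gaussian MGF---are routine.
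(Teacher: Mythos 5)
Your proposal is correct and follows essentially the same route as the paper: both exploit star-shapedness to scale $G^*$ by $\alpha$, push the optimal coupling forward (preserving the mutual information term), observe that the resulting quadratic in $\alpha$ upper-bounds the objective with equality at the optimum, extract $\EE\|G^*(X)\|^2\leq\EE_{\pi^*}[\langle G^*(X),Z\rangle]$ from optimality, and finish with Cauchy--Schwarz; the sub-Gaussian part via $\|A\bK_X^{1/2}\|_F^2=\EE\|G^*X\|^2\leq\Tr\bK_Z$ and the dimension bookkeeping in the MGF is also identical. The only cosmetic difference is that the paper normalizes $\tilde G^*=G^*/g$ and optimizes over $\alpha\in[0,g]$ rather than using your left-derivative condition at $\alpha=1$; your write-up is in fact slightly more careful about the quadratic being only an upper bound away from the optimum.
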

	\begin{proof}
        Assume that $g^2=\EE\l[\|G^*(X)\|_2^2\r]>0.$ If assumption \eqref{eq:assump_lin} holds then for any $\alpha\in[0,1]:\,\alpha G^*\in \cG.$ Consider $\tilde{G}^*(X) = G^*(X)/g.$
        By optimality of $G^*$ for the optimal coupling $\pi^*:$
        \begin{align*}
            g&= \arg\min_{\alpha\in\l[0,g\r]}W_{2,\lambda}^2(\PP_{\alpha\tilde{G}^*(X)}, \PP_Z)\\
            &=\arg\min_{\alpha\in\l[0,g\r]}\EE_{X,Z\sim \pi^*}
            \l[\|\alpha\tilde{G}^*(X)-Z\|_2^2\right]+\lambda I(\tilde{G}(X);Z)\\
            &=\arg\min_{\alpha\in\l[0,g\r]}\alpha^2+\EE\l[\|Z\|^2\r]-2\alpha\EE_{X,Z\sim \pi^*}\l[\tilde{G}^*(X)^TZ\r]+\lambda I(\tilde{G}(X);Z)
        \end{align*}
        The above problem is minimization of a quadratic function thus $$g = \alpha^*=\min\l\{g,\EE_{X,Z\sim \pi^*}\l[\tilde{G}^*(X)^TZ\r]\r\}\leq \sqrt{\Tr{\bK_{Z}}},$$ so $\EE\l[\|G^*(X)\|_2^2\r]\leq \Tr\bK_{Z}.$ For a linear $G^*:$
        $\EE\l[\|G^*X\|_2^2\r] = \Tr {G^*}^TG^*K_X = \|G^*K_X^{1/2}\|_F^2\leq \Tr \bK_Z$ for $\tau^2 = \frac{\Tr\bK_Zr}{d}\sigma^2\l({\bK_X^{-1/2}X}\r)$
        \begin{align*}
           \EE\,e^{\frac{\|G^*X\|_2^2}{2d\tau^2}}& =\EE\, e^{\frac{\|G^*\bK_X^{1/2}\bK_X^{-1/2}X\|_2^2}{2d\tau^2}}\leq \EE\, e^{\frac{\Tr{\bK_{Z}}\|\bK_X^{-1/2}X\|_2^2}{2d\tau^2}}= \EE\, e^{\frac{\|\bK_X^{-1/2}X\|_2^2}{2r\sigma^2\l(\bK_X^{-1/2}X\r)}}\leq 2
        \end{align*}
    \end{proof}
	
	\begin{lemma}\label{lemma:cov_bound}
		For a sub-gaussian $Z\in \RR^d$ the covariance matrix trace is bounded as 
		$\Tr\bK_{Z}\leq 2d\sigma^2(Z).
		$
	\end{lemma}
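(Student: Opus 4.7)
The plan is to read off the conclusion directly from the definition of the sub-gaussian parameter combined with Jensen's inequality. Recall that for a $d$-dimensional random variable $Z$, the paper's convention is that $P_Z$ is $\sigma^2$ sub-gaussian precisely when $\EE\exp(\|Z\|^2/(2d\sigma^2))\leq 2$, and $\sigma^2(Z)$ is the infimum of such $\sigma^2$. So the whole argument reduces to bounding $\Tr K_Z$ by $\EE\|Z\|^2$ and then extracting $\EE\|Z\|^2$ from the moment-generating-function-type bound.

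First, I would observe that
\begin{align*}
\Tr K_Z = \EE\|Z - \EE Z\|^2 = \EE\|Z\|^2 - \|\EE Z\|^2 \leq \EE\|Z\|^2,
\end{align*}
so it suffices to show $\EE\|Z\|^2\leq 2d\sigma^2(Z)$. Next, apply Jensen's inequality to the convex function $t\mapsto e^t$ with $t=\|Z\|^2/(2d\sigma^2(Z))$:
\begin{align*}
\exp\!\left(\frac{\EE\|Z\|^2}{2d\sigma^2(Z)}\right) \leq \EE\exp\!\left(\frac{\|Z\|^2}{2d\sigma^2(Z)}\right) \leq 2,
\end{align*}
where the last inequality is exactly the defining property of the sub-gaussian parameter $\sigma^2(Z)$ (valid by taking a limit of admissible $\sigma^2\downarrow \sigma^2(Z)$, since $x\mapsto \EE\exp(\|Z\|^2/(2dx))$ is continuous in $x>0$). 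Taking logarithms gives $\EE\|Z\|^2\leq 2d\sigma^2(Z)\ln 2 \leq 2d\sigma^2(Z)$, which combined with the first display yields the lemma.

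There is essentially no obstacle here: the only mildly delicate point is checking that the defining inequality $\EE\exp(\|Z\|^2/(2d\sigma^2))\leq 2$ holds at the infimum $\sigma^2=\sigma^2(Z)$ itself (rather than only strictly above it), which follows from monotone convergence as $\sigma^2\downarrow\sigma^2(Z)$. Everything else is a one-line application of Jensen's inequality and the trivial bound $\Tr K_Z\leq \EE\|Z\|^2$.
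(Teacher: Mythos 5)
Your proposal is correct and follows essentially the same route as the paper: apply Jensen's inequality to $\exp(\|Z\|^2/(2d\sigma^2(Z)))$, invoke the defining bound $\EE\exp(\|Z\|^2/(2d\sigma^2(Z)))\leq 2$, and take logarithms. The only differences are minor polish — you explicitly note $\Tr K_Z\leq \EE\|Z\|^2$ (the paper implicitly treats $Z$ as centered) and you justify that the defining inequality holds at the infimum, neither of which changes the argument.
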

	\begin{proof}
	    \begin{align*}
	        \ln 2\geq \ln\EE\,e^{\frac{\|Z\|_2^2}{2d\sigma^2(Z)}}
	        \geq \ln e^{\EE\frac{\|Z\|_2^2}{2d\sigma^2(Z)}}
	        ={\Tr{K_Z}}/\l({2d\sigma^2(Z)}\r).
	    \end{align*}
	    The first inequality follows from $Z$ being sub-Gaussian and the second one is Jensen's inequality.
	\end{proof}
	
	\begin{proof}[Proof of Theorem \ref{thm:generalization}]
		The proof is based on  \citep[Theorem~2]{mena2019statistical}. Denote $C_{d,i}$ constants depending on the dimension $d$ as we are not aiming to find the exact dependence of the bound from the dimension. Let $\lambda = 2,$ we will generalize to $\lambda\neq 2$ exacly as we did in the proof of Proposition \ref{thm:mena_res}.
		First, we rewrite $d_{\lambda}(G^*,G_n)=W_{2,\lambda}^2(\PP_{G_n(X)},\PP_Y)-W_{2,\lambda}^2(\PP_{G^*(X)},\PP_Y)$ to fit Proposition \ref{thm:mena_res}:
		\begin{align}
		d_{\lambda}&(G^*,G_n)= \l(W_{2,\lambda}^2(\PP_{G^*(X)},\QQ_Y^n)- W_{2,\lambda}^2(\PP_{G^*(X)},\PP_Y)\r)\nonumber\\
		&\quad+\l(W_{2,\lambda}^2(\PP_{G_n(X)},\PP_Y)- W_{2,\lambda}^2(\PP_{G^*(X)},\QQ_Y^n)\r)\nonumber\\
		&\leq \l(W_{2,\lambda}^2(\PP_{G^*(X)},\QQ_Y^n)- W_{2,\lambda}^2(\PP_{G^*(X)},\PP_Y)\r)\nonumber\\
		&\quad+\l(W_{2,\lambda}^2(\PP_{G_n(X)},\PP_Y)- W_{2,\lambda}^2(\PP_{G_n(X)},\QQ_Y^n)\r)\label{eq:risk}
		\end{align}
		Let $\nu^2 =\max\{2r\sigma^2(K_X^{-1/2}X)\sigma^2(Y),\sigma^2(Y)\}\leq 2r\tau^2.$ Then  
		\begin{align*}
		\sigma^2\l(G^*(X)\r)&\leq rd^{-1}\Tr\bK_Y\sigma^2\bigl(\bK_X^{-1/2}X\bigr)\\
		&\leq
		2r\sigma^2\bigl(\bK_X^{-1/2}X\bigr)\sigma^2(Y)\leq \nu^2,
		\end{align*}
		with the inequalities following from~Lemmas \ref{lemma:G_bounded}, \ref{lemma:cov_bound} and the definition of $\nu^2.$ By Proposition \ref{thm:mena_res}  applied to the expectation of the first difference in \eqref{eq:risk}:
		\begin{align}
		\EE&\l[\l\vert W_{2,\lambda}^2(\PP_{G^*(X)},\QQ_Y^n)- W_{2,\lambda}^2(\PP_{G^*(X)},\PP_Y)\r\vert\r]\nonumber\\
		&\leq C_{d,2}n^{-1/2}\bigl(1+\l(\nu^2\r)^{\l\lceil 5d/4\r\rceil+3}\bigr),\label{eq:res_easy}
		\end{align} 
		As $G_n$ depends on the sample, the proposition cannot be applied directly to the second difference, but by 
		Proposition \ref{prop:functioonalbound} for $\tsig^2 = \max\bigl\{\sigma^2_{\hat\cY}\l(G_n(X)\r), \sigma^2_{\hat\cY}(\hY),\sigma^2(Y)\bigr\}$ and $s = \l\lceil d/2\r\rceil +1:$
		\begin{align}
		&W_{2,\lambda}^2(\PP_{G_n(X)},\PP_Y)- W_{2,\lambda}^2(\PP_{G_n(X)},\QQ_Y^n)\nonumber\\
		&\leq4\l(1+\tsig^{3s}\r)\|\PP_Y-\QQ_Y^n\|_{\cF^s},\label{eq:mena_res}.
		\end{align}
		Note that $\cF^s$ only depends on $s$ and not on the sub-gaussian parameters of $Y$ and $GX.$
		\noindent 
		Taking expectation over the sample in \eqref{eq:mena_res} we get:
		\begin{align}
		\bigl(\EE&\big[W_{2,\lambda}^2(\PP_{G_n(X)},\PP_Y)- W_{2,\lambda}^2(\PP_{G_n(X)},\QQ_Y^n)\big]\bigr)^2\nonumber\\
		&\leq8\EE\l[1+\tsig^{6s}\r]\EE\|\PP_Y-\QQ_Y^n\|_{\cF^s}^2\nonumber\\
		&\leq\l(1+\sigma^2(Y)^{d+2}\r)n^{-1}C_{d,3}\EE\l[1+\tsig^{6s}\r]\label{eq:sq_bound0}\\
		&\leq\l(1+\nu^{2d+4}\r)n^{-1}C_{d,3}\EE\l[1+\tsig^{6s}\r],\label{eq:sq_bound}
		\end{align}
		where \eqref{eq:sq_bound0} follows from Proposition \ref{prop:coveringbound} and \eqref{eq:sq_bound} from the definition of $\nu.$
		By Lemma \ref{lemma:cov_bound} we have $\Tr\bK_{\hY}\leq 2d\sigma^2(\hY),$ so
		\begin{align}
		\sigma^2_{\hat\cY}&(G_n(X))\leq  d^{-1}\Tr\bK_{\hY}r\sigma^2\bigl({\bK_X^{-1/2}X}\bigr)\nonumber\\
		&\leq 2r\sigma^2\bigl({\bK_X^{-1/2}X}\bigr)\sigma^2_{\hat\cY}\bigl(\hY\bigr)\leq \sigma^2_{\hat\cY}(\hY)\nu^2/\sigma^2(Y),\label{ineq:yu01}
		\end{align}
		where the first inequality follows from Lemma~\ref{lemma:G_bounded} and the second one from Lemma~\ref{lemma:cov_bound}.
		Taking expectation of $\tsig^{6s}:$
		\begin{align} &\EE\bigl[\tsig^{6s}\bigr]=\EE\bigl[\max\bigl\{\sigma^2_{\hat\cY}\bigl(\hY\bigr),\sigma^2(Y), \sigma_{\hat\cY}^2\l(G_n(X)\r)\bigr\}^{3s}\bigr]\nonumber\\
		&\leq\nu^{6s}\EE\biggl[\max\bigl\{1, \sigma^2_{\hat\cY}\bigl(\hY\bigr)/\sigma^2(Y)\bigr\}^{3s}\biggr]\leq 2(3s)^{3s}\nu^{6s}\label{eq:sig_bound},
		\end{align}
		where \eqref{eq:sig_bound} is due to Proposition~\ref{prop:sigmamoments}; plugging \eqref{eq:sig_bound} in \eqref{eq:sq_bound} gives\!
		\begin{align}
		\EE&\big[W_{2,\lambda}^2(\PP_{G_n(X)},\PP_Y)- W_{2,\lambda}^2(\PP_{G_n(X)},\QQ_Y^n)\big]\nonumber\\
		&\leq \sqrt{2(1+2(3s)^{3s}\nu^{6s})C_{d,3}n^{-1}\l(1+\nu^{d+2}\r)}\nonumber\\
		&\leq C_{d,4}n^{-1/2}\bigl(1+\l(\nu^2\r)^{\l\lceil 5d/4\r\rceil+3}\bigr)\label{eq:th1_res}
		\end{align}
		Combining \eqref{eq:th1_res} and \eqref{eq:res_easy} we get for $\lambda = 2:$
		\begin{align*}
		\EE\l[d_{\lambda}(G^*,G_n)\r] &\leq C_{d,5} n^{-1/2}(1+(\nu^2)^{\l\lceil 5d/4\r\rceil+3})\\
		&\leq K_d n^{-1/2}(1+(\tau^2)^{\l\lceil 5d/4\r\rceil+3}),
		\end{align*}
		Consider $\lambda\neq 2.$ Then for any $\lambda>0:$
		\begin{align*}
		W_{2,2}^2&(\PP_{Z\sqrt{2/\lambda}},\PP_{Y\sqrt{2/\lambda}}) \nonumber\\
		&= \inf_{\pi\in\Pi\l((\PP_{Z},\PP_{Y})\r)}2\EE\l[\|Z -Y\|^2\r]/\lambda+2 I(Z; Y) \\
		&= 2 W_{2,\lambda}^2(\PP_{Z},\PP_Y)/\lambda
		\end{align*}
		Thus, noting that for a sub-gaussian $Z:$
		\begin{align*}
		\EE \exp \biggl(\frac{\|Z\sqrt{2/\lambda}\|_2^2}{2r\sigma_Z^22/\lambda}\biggr) = \EE \exp \biggl(\frac{\|Z\|_2^2}{2r\sigma_Z^2}\biggr)\leq 2
		\end{align*}
		we conclude that $\sigma^2(Z\sqrt{\lambda/2})=2\sigma^2(Z)/\lambda.$ Plugging the result into the bound \eqref{eq:th1_res}  we get
		\begin{align}
		\EE\l[d_{\lambda}(G^*,G_n)\r]
		&\leq K_d \lambda n^{-1/2}\bigl(1+(2\tau^2/\lambda)^{\l\lceil 5d/4\r\rceil+3}\bigr)/2\label{eq:thm_fin_res}.
		\end{align}
	\end{proof}
	
	\begin{proof}[Proof of Theorem \ref{thm:generalization_lip}]
		The proof follows the same path as the proof of Theorem \ref{thm:generalization} with the only difference being in bounding the sub-Gaussian parameters. 
		
		For $G\in\cG$ let $\ciG(X) = G(X) - G(0)$ -- a shifted function. Note that $\ciG$ need not be in $\cG.$ 
		To avoid confusion we let $H^* = \argmin_{H\in\cG}S_{\lambda}(P_{H(X)}, P_Y)$ and $H_n = \argmin_{H\in\cG}S_{\lambda}(P_{H(X)}, Q^n_Y)$ -- the population and empirical solutions to Sinkhorn W2GANs. Since 
		\begin{align*}
		\EE\l[\|G(X) - Y\|^2\r] &= 
		\EE\l[\|\ciG(X) - Y + G(0)\|^2\r]\\
		& = \EE\l[\|\ciG(X) - Y\|^2 \r]+ 2G(0)^T\EE[\ciG(X)-Y]+ \|G(0)\|^2
		\end{align*}
		and $I(G(X), Y) = I(\ciG(X), Y),$ entropy-regularized Wasserstein distance decomposes as $W^2_{2,\lambda}(P_{G(X)}, P_Y) = W^2_{2,\lambda}(P_{\ciG(X)}, P_Y) + 2G(0)^T\EE[\ciG(X)-Y]+ \|G(0)\|^2$ 
		As in the proof of Theorem \ref{thm:generalization} we decompose the generalization error:
		\begin{align}
		d_{\lambda}(G^*,G_n)&= W_{2,\lambda}^2(\PP_{G_n(X)},\PP_Y)- W_{2,\lambda}^2(\PP_{G^*(X)},\PP_Y)\nonumber\\
		&\leq W_{2,\lambda}^2(\PP_{G^*(X)},\QQ_Y^n)- W_{2,\lambda}^2(\PP_{G^*(X)},\PP_Y)\nonumber\\
		&\qquad+ W_{2,\lambda}^2(\PP_{G_n(X)},\PP_Y)- W_{2,\lambda}^2(\PP_{G_n(X)},\QQ_Y^n)\nonumber\\
		&= \l(W_{2,\lambda}^2(\PP_{\ciG^*(X)},\QQ_Y^n)- W_{2,\lambda}^2(\PP_{\ciG^*(X)},\PP_Y)\r)\nonumber\\
		&\qquad+\l(W_{2,\lambda}^2(\PP_{\ciG_n(X)},\PP_Y)- W_{2,\lambda}^2(\PP_{\ciG_n(X)},\QQ_Y^n)\r)\nonumber\\
		&\qquad +2\l(G^*(0) - G_n(0)\r)^T\l(\EE Y - \EE_{\hat\cY} \hY\r),\label{eq:d_extra_term}
		\end{align}
		where the last inequality follows as \eqref{eq:risk}.
		Similarly, for Sinkhorn W2GAN the generalization error is:
		\begin{align}
		d^S_{\lambda}(H^*,H_n)&=
		S_{\lambda}(\PP_{H_n(X)},\PP_Y) - S_{\lambda}(\PP_{H^*(X)},\PP_Y)\nonumber\\
		&\leq \l(S_{\lambda}(\PP_{H^*(X)},\QQ_Y^n)- S_{\lambda}(\PP_{H^*(X)},\PP_Y)\r)\nonumber\\
		&\qquad+\l(S_{\lambda}(\PP_{H_n(X)},\PP_Y)- S_{\lambda}(\PP_{H_n(X)},\QQ_Y^n)\r)\nonumber\\
		&= \l(W_{2,\lambda}^2(\PP_{H^*(X)},\QQ_Y^n)- W_{2,\lambda}^2(\PP_{H^*(X)},\PP_Y)\r)\nonumber\\
		&\qquad+\l(W_{2,\lambda}^2(\PP_{H_n(X)},\PP_Y)- W_{2,\lambda}^2(\PP_{H_n(X)},\QQ_Y^n)\r)\nonumber\\
		&=\l(W_{2, \lambda}^2(\PP_{\ciH^*(X)},\QQ_Y^n)- W_{2, \lambda}^2(\PP_{\ciH^*(X)},\PP_Y)\r)\nonumber\\
		&\qquad+\l(W_{2, \lambda}^2(\PP_{\ciH_n(X)},\PP_Y)- W_{2, \lambda}^2(\PP_{\ciH_n(X)},\QQ_Y^n)\r)\nonumber\\
		&\qquad+2\l(H^*(0) - H_n(0)\r)^T\l(\EE Y - \EE_{\hat\cY} \hY\r)\label{eq:d_S_bound}
		\end{align}
		The RHS of \eqref{eq:d_S_bound} and \eqref{eq:d_extra_term} are the same as the RHS of \eqref{eq:risk}. Note that for any $G\in\cG$ and for $\sigma^2 = \sigma^2(X)rL^2/d$ by the $L$-Lipschitzness of $G:$
        \begin{align*}
           \EE\,e^{\frac{\|\ciG(X)\|_2^2}{2d\sigma^2}}& =\EE\, e^{\frac{\|G(X)-G(0)\|_2^2}{2d\sigma^2}}\leq \EE\, e^{\frac{L^2\|X\|_2^2}{2d\sigma^2}}= \EE\, e^{\frac{\|X\|_2^2}{2r\sigma^2(X)}}\leq 2,
        \end{align*}
		thus $\ciG(X)$ and $\ciH(X)$ are both sub-Gaussian,  $\max\{\sigma^2(\ciG(X)),\sigma^2(\ciH(X))\}\leq  \sigma^2(X)rL^2/d\leq \tau^2.$ 
		
		The next part of the proof follows the proof of Theorem \ref{thm:generalization} with $\nu^2 = \tau^2,$ and $\ciG_n$ and $\ciH_n$ in place of $G_n$, $\ciG^*$ and $\ciH^*$ in place of $G^*$ for the entropic and Sinkhorn W2GAN cases respectively. 
		Indeed, for entropic W2GAN \cref{eq:res_easy,eq:mena_res,eq:sq_bound} only require that $\max\{\sigma^2(\ciG_n(X)),\sigma^2(\ciG^*(X)),\sigma^2(Y)\}\leq \nu^2.$ 
		As $\sigma^2_{\hat\cY}(\ciG_n(X))\leq L^2\sigma^2(X),$ in place of \eqref{eq:sig_bound} we have for 
		$\tsig^2 = \max\bigl\{\sigma^2_{\hat\cY}\l(\ciG_n(X)\r), \sigma^2_{\hat\cY}(\hY),\sigma^2(Y)\bigr\}$ and $s = \l\lceil d/2\r\rceil +1:$ 
    	\begin{align*}
    	    \EE\bigl[\tsig^{6s}\bigr]&\leq\EE\bigl[\max\bigl\{\sigma^2_{\hat\cY}\bigl(\hY\bigr),\sigma^2(Y), \sigma_{\hat\cY}^2\l(G_n(X)\r)\bigr\}^{3s}\bigr]\nonumber\\
    	    &\leq\EE\bigl[\max\bigl\{\sigma^2_{\hat\cY}\bigl(\hY\bigr),\sigma^2(Y), L^2\sigma^2(X)\bigr\}^{3s}\bigr]\nonumber\\
    		&\leq\nu^{6s}\EE\biggl[\max\bigl\{1, \sigma^2_{\hat\cY}\bigl(\hY\bigr)/\sigma^2(Y)\bigr\}^{3s}\biggr]\leq 2(3s)^{3s}\nu^{6s},
	    \end{align*}
	    where the last inequality is due to Proposition~\ref{prop:sigmamoments}. So, eq.  \eqref{eq:th1_res} and \eqref{eq:thm_fin_res} hold, i.e.
		\begin{align}
		\EE d_{\lambda}(G^*,G_n) 
		&\leq K_d \lambda n^{-1/2}\bigl(1+(2\tau^2/\lambda)^{\l\lceil 5d/4\r\rceil+3}\bigr)/2 +2\EE\l[\l(G^*(0) - G_n(0)\r)^T\l(\EE Y - \EE_{\hat\cY} \hY\r)\r]\nonumber\\
		&\leq K_d \lambda n^{-1/2}\bigl(1+(2\tau^2/\lambda)^{\l\lceil 5d/4\r\rceil+3}\bigr)/2 +2\sqrt{\EE\|G_n(0)\|_2^2}\sqrt{\Tr K_Y/n},\label{eq:ciG_risk_bound}
		\end{align}
		where the last inequality follows from the independence of $G^*(0)$ and the sample and the Cauchy-Schwarz inequality. 
		
		For Sinkhorn W2GAN the above results in
		\begin{align}
		\EE d_{\lambda}^S(H^*,H_n)
		&\leq K_d \lambda n^{-1/2}\bigl(1+(2\tau^2/\lambda)^{\l\lceil 5d/4\r\rceil+3}\bigr)/2+2\sqrt{\EE\|H_n(0)\|_2^2}\sqrt{\Tr K_Y/n}\label{eq:ciH_risk_bound}
		\end{align} 
		
		We will now bound the last term of \eqref{eq:ciG_risk_bound} via Lemma \ref{lemma:G_bounded} and Lipschitzness of $G:$
		\begin{align*}
		    \EE\|G_n(0)\|_2^2&\leq 2\EE\|G_n(X) - G_n(0)\|_2^2 + 2\EE\|G_n(X)\|_2^2 \leq2L^2\Tr{K_X} + 2\Tr{K_Y}\leq 8d\tau^2,
		\end{align*}
		where the last inequality follows from Lemma \ref{lemma:cov_bound}.
		\eqref{eq:ciG_risk_bound} thus becomes:
		\begin{align}
		\EE d_{\lambda}(G^*,G_n) 
		&\leq K_d \lambda n^{-1/2}\bigl(1+(2\tau^2/\lambda)^{\l\lceil 5d/4\r\rceil+3}\bigr)/2 +2\sqrt{8d\tau^2\Tr{K_Y}/{n}}\nonumber\\
		&\leq K_d \lambda n^{-1/2}\bigl(1+(2\tau^2/\lambda)^{\l\lceil 5d/4\r\rceil+3}\bigr)/2+ 8d\tau^2/\sqrt{n}\label{eq:cauch_schw},
		\end{align}
		where the last inequality fllows from Lemma \ref{lemma:cov_bound}.
		Redefining $K_d$ completes the proof for Entropic W2GAN generalization error. 
		
		To complete the proof we need to bound $\EE \|H_n(0)\|_2^2.$ 
		We first note that $0\in\cG,$ so by optimality of $H_n:$ 
		\begin{align}
		    S_{\lambda}(P_{H_n(X)},Q^n_Y)&\leq S_{\lambda}(\delta_0,Q^n_Y)= \EE_{\hat\cY}\|\hY\|_2^2  - W_{2,\lambda}^2(Q^n_Y, Q^n_Y)/2\leq \EE_{\hat\cY}\|\hY\|_2^2,\label{eq:S_upper}
		\end{align}
		where $\delta_0$ is a point mass at $0.$
		As $ S_{\lambda}(P_{\ciH_n(X)}, Q^n_Y)\geq 0:$
		\begin{align}
		    S_{\lambda}(P_{H_n(X)}, Q^n_Y)& =
		    S_{\lambda}(P_{\ciH_n(X)}, Q^n_Y)+ \|H(0)\|_2^2 + 2H(0)^T\EE_{\hat\cY}\l[\hY - \ciH_n(X)\r]\nonumber\\
		    &\geq \|H_n(0)\|_2^2 + 2H_n(0)^T\EE_{\hat\cY}\l[ \hY-\ciH_n(X)\r]\label{eq:S_lower}
		\end{align}
	    Combining \eqref{eq:S_upper} and \eqref{eq:S_lower} and taking the expectation over the sample $\hat\cY$ we get:
	    \begin{align}
	        2d\tau^2&\geq \Tr K_Y=\EE\EE_{\hat\cY}\|\hY\|_2^2 \nonumber\\
	        &\geq\EE\|H_n(0)\|_2^2 - 2\EE\l[
	        \|H_n(0)\|\EE_{\hat\cY}\l[\|\hY\| + L\|X\|\r]\r]\nonumber\\
	        &\geq \EE\|H_n(0)\|_2^2 -2\sqrt{\EE
	        \|H_n(0)\|_2^2}(\sqrt{\Tr K_Y} + L\sqrt{\Tr K_X})\nonumber\\
	        &\geq \EE\|H_n(0)\|_2^2-4\sqrt{\EE
	        \|H_n(0)\|_2^2}\sqrt{2d\tau^2}\nonumber
	    \end{align}
	    The above inequality implies that  $\EE\|H_n(0)\|_2^2\leq 40d\tau^2.$ From \eqref{eq:ciH_risk_bound}:
		\begin{align*}
		\EE d_{\lambda}^S(H^*,H_n)
		&\leq K_d \lambda n^{-1/2}\bigl(1+(2\tau^2/\lambda)^{\l\lceil 5d/4\r\rceil+3}\bigr)/2 +2\sqrt{\EE\|H_n(0)\|_2^2}\sqrt{\Tr K_Y/n}\\
		&\leq K_d \lambda n^{-1/2}\bigl(1+(2\tau^2/\lambda)^{\l\lceil 5d/4\r\rceil+3}\bigr)/2 +2\sqrt{40d\tau^2}\sqrt{2d\tau^2/n}\\
		&\leq K_d \lambda n^{-1/2}\bigl(1+(2\tau^2/\lambda)^{\l\lceil 5d/4\r\rceil+3}\bigr)/2 +20d\tau^2/\sqrt{n}
		\end{align*} 
		Redefining $K_d$ completes the proof.
	\end{proof}
	
	\section{Computational Convergence}
	For the sake of completeness, in this section we discuss some results on the computational convergence of entropic optimal transport and Sinkhorn divergence and emphasize the advantages of these regularization methods from an optimization perspective. A more detailed discussion can be found in \citep{sanjabi2018convergence,feydy2019interpolating}. As was previously mentioned, entropic regularization makes the problem strongly convex, which in turn facilitates convergence.  Note that since the optimal solution to \eqref{eq:OTreg} is known to satisfy \eqref{eq:SK1},\eqref{eq:SK0}, Sinkhorn-Knopp iterates \eqref{eq:SK_it},\eqref{eq:SK_it1} or any other method can be used to solve the inner problem close to optimality. In contrast, computing the unregularized optimal transport requires the use of linear programming techniques which are computationally infeasible in many machine learning applications. 
	
	Moreover, \citep[Theorem 3.1]{sanjabi2018convergence} shows that  under mild conditions on the generator set $\cG$ and the distributions of $P_Y, P_X$, entropy-regularized Wasserstein distance is Lipschitz smooth, i.e. has a Lipschitz continuous gradient with respect to the the parameters of the generator. If we let the generator set $\cG$ be parametrized by $\theta\in\Theta,$ i.e. $\cG = \{G_{\theta}\mid \theta\in\Theta\})$ then
	$$|\grad_{\theta}W_{2,\lambda}^2(P_{G_{\theta_1}(X)}, P_Y) - \grad_{\theta}W_{2,\lambda}^2(P_{G_{\theta_2}(X)}, P_Y)|\leq L\|\theta_1 - \theta_2\|,$$
	where $L$ is a constant depending on $P_X, P_Y, \cG$ and $\lambda.$ Moreover, the optimal coupling $\pi^*(\theta)$ is a Lipschitz continuous function of $\theta:$
	\[\|\pi^*(\theta_1) - \pi^*(\theta_2)\|_1\leq
	\frac{L_0}{\lambda}\|\theta_1 - \theta_2\|\]
	The above indicates that small changes in the generator parameter $\theta$ result in small changes in the optimal coupling. Therefore, after the gradient step on the generator parameters $\theta,$ finding the regularized Wasserstein distance is easier since the discriminator parameters from the previous step are close to the optimal ones for the current step, while the Lipschitz smoothness of regularized Wasserstein distance in $\theta$ results in faster convergence of optimization.
	
	Note that first-order optimization methods commonly used for neural network optimization require calculating the gradients $\grad_{\theta}W_{2,\lambda}^2(P_{G_{\theta}(X)}, P_Y),$ which requires knowing the optimal dual potentials, but since they are found numerically, they can only be computed up to some positive accuracy, so the smoothness of the gradient of the entropic Wasserstein distance with respect to the accuracy up to which the dual potentials are calculated plays a crucial role in the convergence of the optimization. 
	More precisely, if the inner problem of calculating Entropic Wasserstein distance is solved up to a certain accuracy $\epsilon,$ it can be shown that the gradient step on the outer problem of finding $G$ is $O(\sqrt{\epsilon/\lambda})$- close to optimal, which makes training stable (see \citep[Theorem 4.1]{sanjabi2018convergence}). In contrast, the training of W2GAN, i.e. based on the unregularized squared Wasserstein distance, is known to be unstable even for the linear generator $G$ and quadratic discriminator  \citep{feizi2017understanding} when $r<d,$ and the training methods for  Wasserstein GAN \citep{arjovsky2017wasserstein,gulrajani2017improved} do not converge locally  with simultaneous or alternating gradient descent \citep{mescheder2018training}. 
	
	Finally, we note the following optimization convergence result from \citep[Theorem 4.2.]{sanjabi2018convergence}. Under mild conditions on $\cG, P_X,P_Y$  it can be shown that when stochastic gradient is used to solve 
	$$\min_{\theta} f(\theta) = \min_{\theta} W_{2,\lambda}^2(P_{G_{\theta}(X)}, P_Y),$$
	where $G(\cdot)$ is parametrized by $\theta$,
	the random iterates $\theta_1,\dots,\theta_T$ satisfy
	$$\min_{t=1\ldots T}\EE[\|\nabla f(\theta_t)\|^2]\leq O(1/\sqrt{T}) + O(\eps/\lambda).$$
Here constants in $O(\cdot)$ depend on the class of generators $G\in\cG$ and the distributions $P_X,P_Y,$ and $T$ is the number of iterations of stochastic gradient descent and $\eps$ is the precision,  to which the inner problem is solved.  The expectation is over the randomness in the algorithm. The theorem implies that if there are enough iterations to get the discriminator close to optimality the training reaches a stable point of small $\EE[\|\nabla f(\theta_t)\|^2].$ Since Sinkhorn divergence is a linear combination of entropy-regularized Wasserstein distances, a similar result holds for it and the optimization is stable.
 	
	\section{Experiments}
	In our experiments we aim to contrast and compare the performance of Sinkhorn GAN (label: SGAN) and 1-Wasserstein GAN WGAN (label: WGAN) for linear generators. Entropic W2GAN is omitted from the comparison due to the fact that it leads to a biased solution as shown in Theorem~\ref{thm:PCA}. Following the experimental evaluations in \citep{feizi2017understanding}, we generate $n=10^5$ samples from a $d=32$ dimensional Gaussian
    distribution $\cN(0, K)$ where $K$ is a random positive semi-definite matrix normalized to have Frobenius norm 1. We train WGAN with weight clipping (label: WGAN-WC) \citep{arjovsky2017wasserstein} and WGAN with gradient penalty (label: WGAN-GP) \citep{gulrajani2017improved} -- two common methods to ensure Lipschitzness of the discriminators. We use the linear generator and a neural network discriminator
    with hyper-parameter settings as recommended in \citep{gulrajani2017improved}. The discriminator neural network has three hidden
    layers, each with 64 neurons and ReLU activation functions. 
    
	The pseudocode of our optimization for Sinkhorn GAN can be found in Algorithm \ref{alg:SGD}. 
	The algorithm is similar to  \citep{sanjabi2018convergence}, where 
	we assume that the generators are parametrized by $\theta,$ i.e. $G(X) = G_{\theta}(x)$ and we apply stochastic gradient descent on $\theta$. Note that at every step of the gradient descent algorithm we need to calculate the gradient of the Sinkhorn divergence, $\nabla_\theta S_{\lambda}(P_{G_\theta(X)}, P_Y)$. 
	From the definition of the Sinkhorn divergence in \eqref{eq:OTsink}, to compute $\nabla_\theta S_{\lambda}(P_{G_\theta(X)}, P_Y)$ we need to compute  $\nabla_\theta W_{2, \lambda}^2(P_{G_\theta(X)}, P_{G_\theta(X)})$ and $\nabla_\theta W_{2, \lambda}^2(P_{G_\theta(X)}, P_Y)$. (Note that the third term $W_{2,\lambda}^2(P_Y,P_Y)$ is irrelevant since it doesn't depend on the generator.) From \eqref{eq:OTreg_dual}, we have the following dual representation:
	\begin{align}
    W_{2,\lambda}^2(\PP_{G_\theta(X)},\PP_Y)&=\sup_{\substack{\psi\in L_{\infty}(\cY)\\ \phi\in L_{\infty}(G_\theta(\cX))}} \EE\left[\psi(Y) + \phi(G_\theta(X))\right] +\lambda\nonumber\\
    &\qquad-\lambda\EE_{(X,Y)\sim P_X\times P_Y}\l[e^{\frac{\phi(G_\theta(X))+\psi(Y)-\|G_\theta(X)-Y\|_2^2}{\lambda}}\r]\label{eq:OTreg_dual1}\\
    W_{2,\lambda}^2(\PP_{G_\theta(X)},\PP_{G_\theta(X)})&=\sup_{\phi^x\in L_{\infty}(G_\theta(\cX))} 2\EE\left[\phi^x(G_\theta(X))\right]+\lambda \nonumber\\
    &\qquad-\lambda\EE_{(X_1,X_2)\sim P_X\times P_X}e^{\frac{\phi^x(G_\theta(X_1))+\phi^x(G_\theta(X_2))-\|G_\theta(X_1)-G_\theta(X_2)\|_2^2}{\lambda}},\label{eq:OTreg_dualxx}
	\end{align}
	Now assume that we have access to approximations of the optimal dual potentials for \eqref{eq:OTreg_dual1}, which for simplicity we also denote by  $\phi$ and $\psi$. Then by using \eqref{eq:opt_coup}, we can obtain an approximation of the optimal coupling given by
	\begin{align}
	    \pi(G_\theta(x), y)& = P_{G_\theta(X)}(G_\theta(x))P_Y(y)e^{\frac{\phi(G_\theta(x))+\psi(y)-\|G_\theta(x)-y\|_2^2}{\lambda}}\nonumber\\
	    &= P_{G_\theta(X)}(G_\theta(x))P_Y(y)\mu(x, y)\label{eq:mu}.
	\end{align}
	Using the above in place of the coupling in the primal formulation of the entropic 2-Wasserstein distance we can then compute an approximation of the desired gradient 
	\begin{align*}
	    \nabla_\theta W_{2, \lambda}^2(P_{G_\theta(X)}, P_Y)
	    \approx\EE_{(X,Y)\sim P_X\times P_Y}[\mu(X,Y)\nabla_\theta (\|G_{\theta}(X) - Y\|^2)].
	\end{align*}
	Analogously, if $\phi^x(\cdot)$ is an approximate optimal dual potential for \eqref{eq:OTreg_dualxx} then
	\begin{align*}
	   \nabla_\theta W_{2, \lambda}^2(P_{G_\theta(X)}, P_{G_\theta(X)})\approx\EE_{(X_1,X_2)\sim P_X\times P_X}[\mu^x(X_1,X_2)\nabla_\theta (\|G_{\theta}(X_1) - G_{\theta}(X_2)\|^2)],
	\end{align*}
	where 
	\begin{align}
	    \mu^x(x_1,x_2) = e^{\frac{\phi^x(G_\theta(x_1))+\phi^x(G_\theta(x_2))-\|G_\theta(x_1)-G_\theta(x_2)\|_2^2}{\lambda}}\label{eq:mu_x}.
	\end{align}
	Finally, the gradient of the Sinkhorn divergence is approximated via 
	\begin{align*}
	    \nabla_\theta S_{\lambda}(P_{G_\theta(X)}, P_Y)
	    &\approx \EE_{(X,Y)\sim P_X\times P_Y}[\mu(X,Y)\nabla_\theta (\|G_{\theta}(X) - Y\|^2)]\\
	    &\qquad- \EE_{(X_1,X_2)\sim P_X\times P_X}[\mu^x(X_1,X_2)\nabla_\theta (\|G_{\theta}(X_1) - G_{\theta}(X_2)\|^2)]
	\end{align*}
	
	Since the expectations cannot be calculated exactly, we further approximate the gradient with an empirical expectation over a batch of size $S,$ which results in a mini-batch stochastic gradient descent on $S_{\lambda}(P_{G(X)}, P_Y):$ for a sample $x^1, \cdots, x^S \stackrel{i.i.d.}{\sim} P_X, y^1, \cdots, y^S \stackrel{i.i.d.}{\sim} P_{\hat Y}$ the gradient approximation is given by
	\begin{align*}
	    \nabla_\theta S_{\lambda}(P_{G_\theta(X)}, P_Y)
	    &\approx \frac{1}{S^2}\sum_{i, j=1}^S[\mu(x_i,y_j)\nabla_\theta (\|G_{\theta}(x_i) - y_j\|^2)]\\
	    &\qquad- \frac{1}{S^2}\sum_{i, j=1}^S\mu^x(x_i,x_j)\nabla_\theta (\|G_{\theta}(x_i) - G_{\theta}(x_j)\|^2)]
	\end{align*}
	
	We note that the optimal dual potentials for $W_{2,\lambda}^2(P_{G(X)}, P_{G(X)})$ for Gaussian $X$ and a linear generator can indeed be found analytically as a function of $G,$ but since it is not possible to analytically compute the potentials in the case of a more complex $G(\cdot)$ and since we do not use the linearity of $G(\cdot)$ when computing the unregularized Wasserstein distance, to give the models a fair comparison, we find $W_{2,\lambda}^2(P_{G(X)}, P_{G(X)})$ numerically.
\begin{algorithm}[t]
  \caption{SGD for GANs}
  \label{alg:SGD}
\small{
  \begin{algorithmic}
  	\STATE INPUT:  $P_X$, $P_{\hat Y},$ $\lambda$, $S$, $\theta_0$, step sizes $\{\alpha_t>0\}_{t=0}^{T-1}$
	\FOR  {$t=0, \cdots, T-1$}\STATE Sample I.I.D. points $x^1_t, \cdots, x^S_t \sim P_X, y^1_t, \cdots, y^S_t \sim P_{\hat Y}$
	\STATE Call the oracle to find $\epsilon$-approximate maximizers $(\phi_t,\psi_t), \phi^x_t$ for the dual formulations \eqref{eq:OTreg_dual1}, \eqref{eq:OTreg_dualxx}
	\STATE Compute 
	\begin{align}
	    g_t &= \frac{1}{S^2}\sum_{i,j}\bigg( \mu_t(G_\theta(x^i_t),y^j_t)\nabla_\theta (\|G_\theta(x^i_t) - y^j_t\|^2)\\
	    &\qquad- \frac{1}{2}\mu_t^x(x^i_t, x^j_t)\nabla_\theta(\|G_\theta(x^i_t)-G_\theta(x^j_t)\|^2)\bigg)\nonumber
	\end{align}
	where $\mu_t, \mu_t^x$ are computed using $(\phi_t,\psi_t)$  and $\phi_t^x$ based on \eqref{eq:mu}, \eqref{eq:mu_x}.
	\STATE Update $\theta_{t+1} \leftarrow \theta_t-\alpha_t g_t$
	\ENDFOR
  \end{algorithmic}}%
\end{algorithm}

	Note that the in the above discussion, we assumed that we have access to approximations of the optimal dual potentials. These optimal dual potentials can be computed in two different ways. The first way is to compute them via the Sinkhorn-Knopp algorithm \citep{feydy2019interpolating} 
	(label: SGAN-NP), which allows to omit the discriminator network from the GAN and compute the dual potentials in a non-parametric fashion. 
	Another way of calculating approximations of the dual potentials is to represent the dual potentials as neural networks and update them using stochastic  gradient descent  on \eqref{eq:OTreg_dual}  (label: SGAN-P). 
	
	On the one hand, using neural networks helps preserve the history of the seen examples and might help the dual potentials generalize better. On the other hand, using Sinkhorn-Knopp algorithm is more precise for computing the Sinkhorn divergence between the empirical distributions.
	We compared the two approaches and didn't find any significant differences.
	To compare WGAN and SGAN as they minimize different objectives, we evaluate their performance by calculating Frobenius distance between the covariance matrix of the generated distribution $P_{G(X)}$ and the covariance matrix of the target distribution $P_Y$  (true covariance, bottom row). We also calculate the Frobenius distance between the covariance matrix of the generated distribution $P_{G(X)}$ and the optimal covariance matrix for W2GAN (\ref{thm:PCA}) $P_{G^*(X)}$ (optimal covariance, bottom row) in Figure 1 for two values of the dimensions of the latent random variable, $r = 4$ and $r=8.$  We run the experiments for 500 epochs with a batch size of 200. In these experiments, we observe that different versions of SGAN enjoy similar behaviour and the covariance matrix of the output distribution is closer to the one of the target distribution compared to standard Wasserstein GANs. We note that the distance to the true covariance has a higher floor here, since the error cannot be zero, i.e. the $d$-dimensional Gaussian distribution cannot be approximated as a function of the $r$-dimensional one with error converging to 0).

	\begin{figure*}[t] \label{fig:fig7}
	\centering
  \begin{minipage}{0.5\linewidth}
    \includegraphics[width=\linewidth]{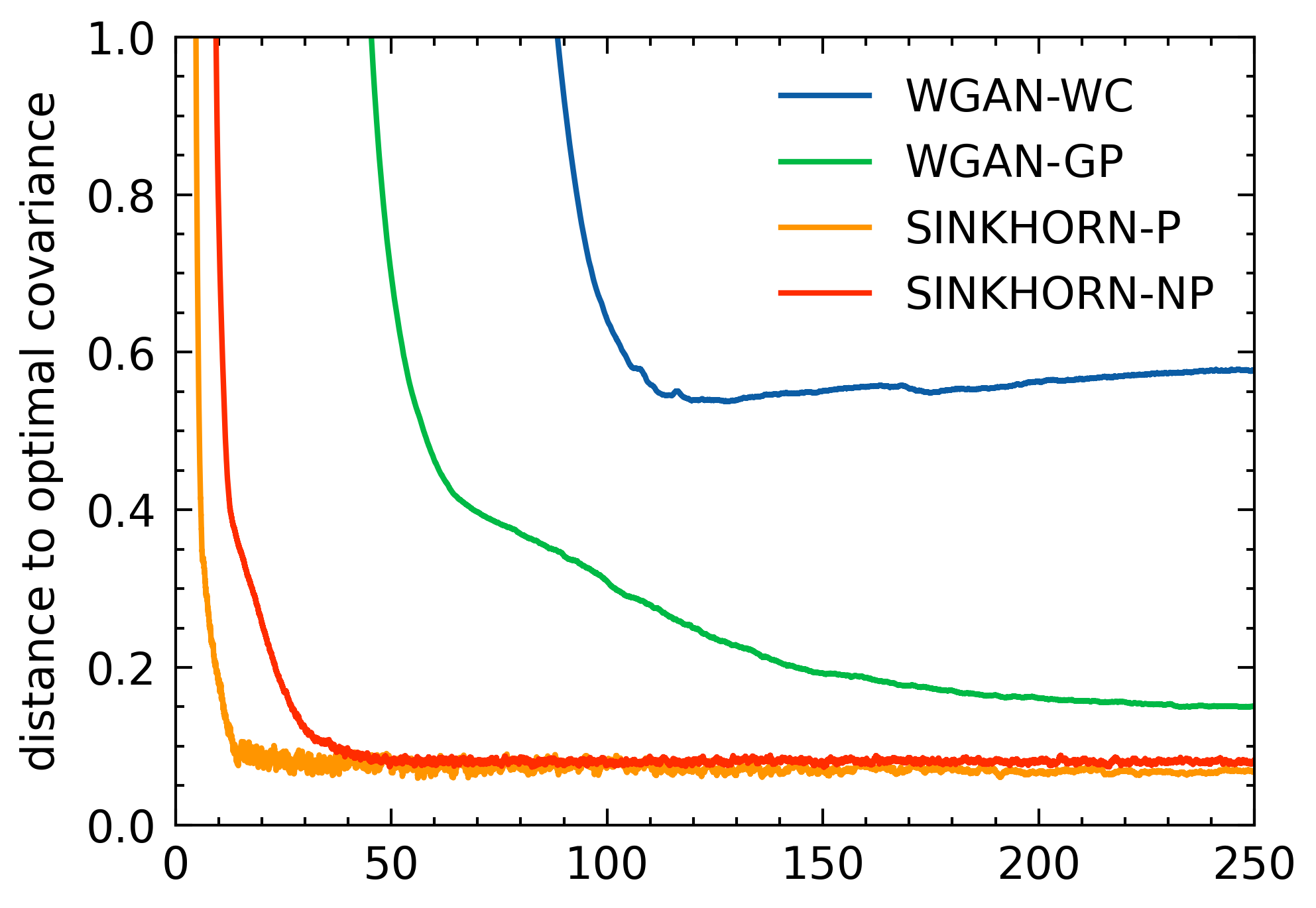}
  \end{minipage}
  \begin{minipage}{0.5\linewidth}
    \includegraphics[width=\linewidth]{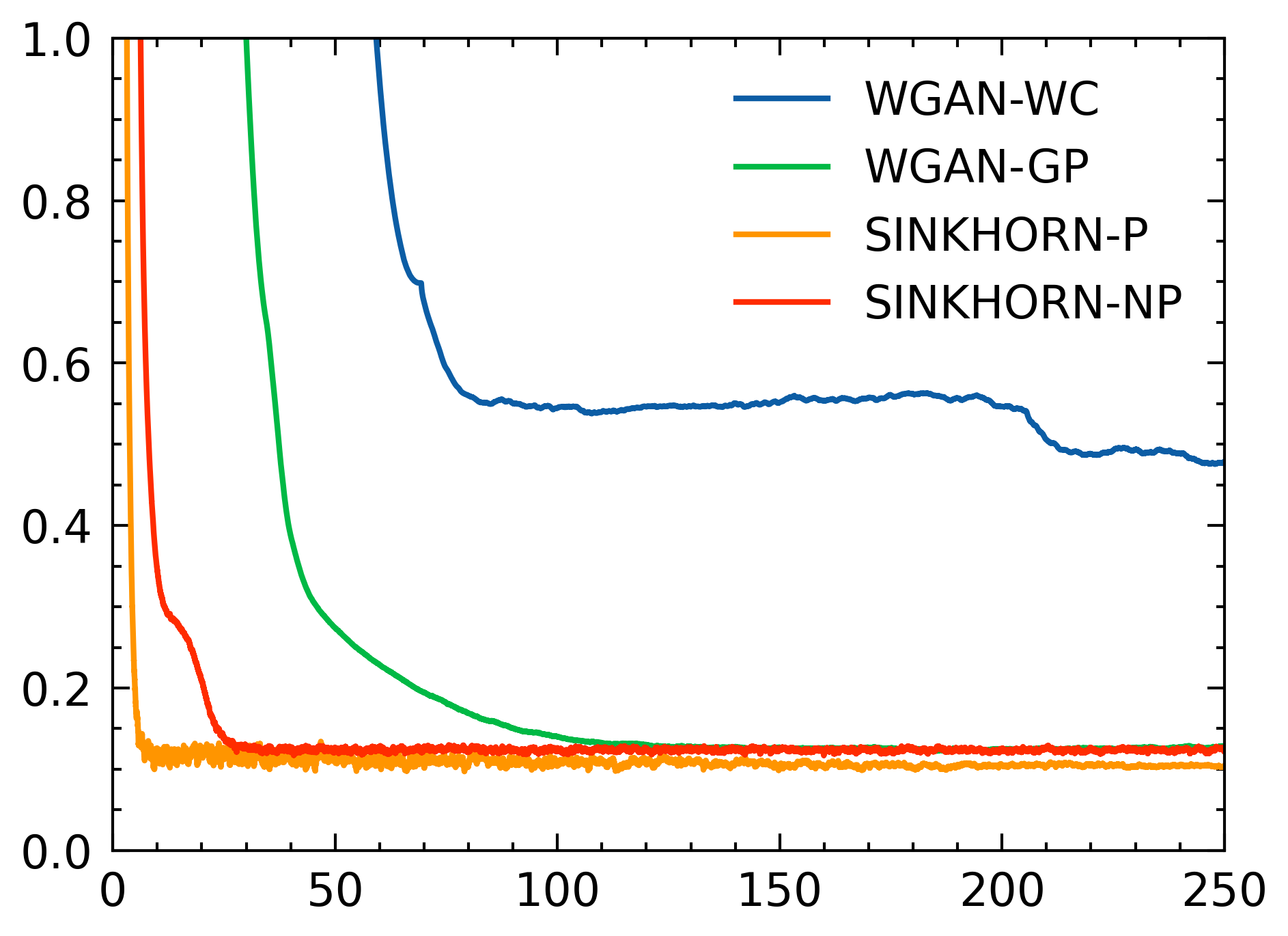} 
  \end{minipage}
  \begin{minipage}{0.5\linewidth}
    \includegraphics[width=\linewidth]{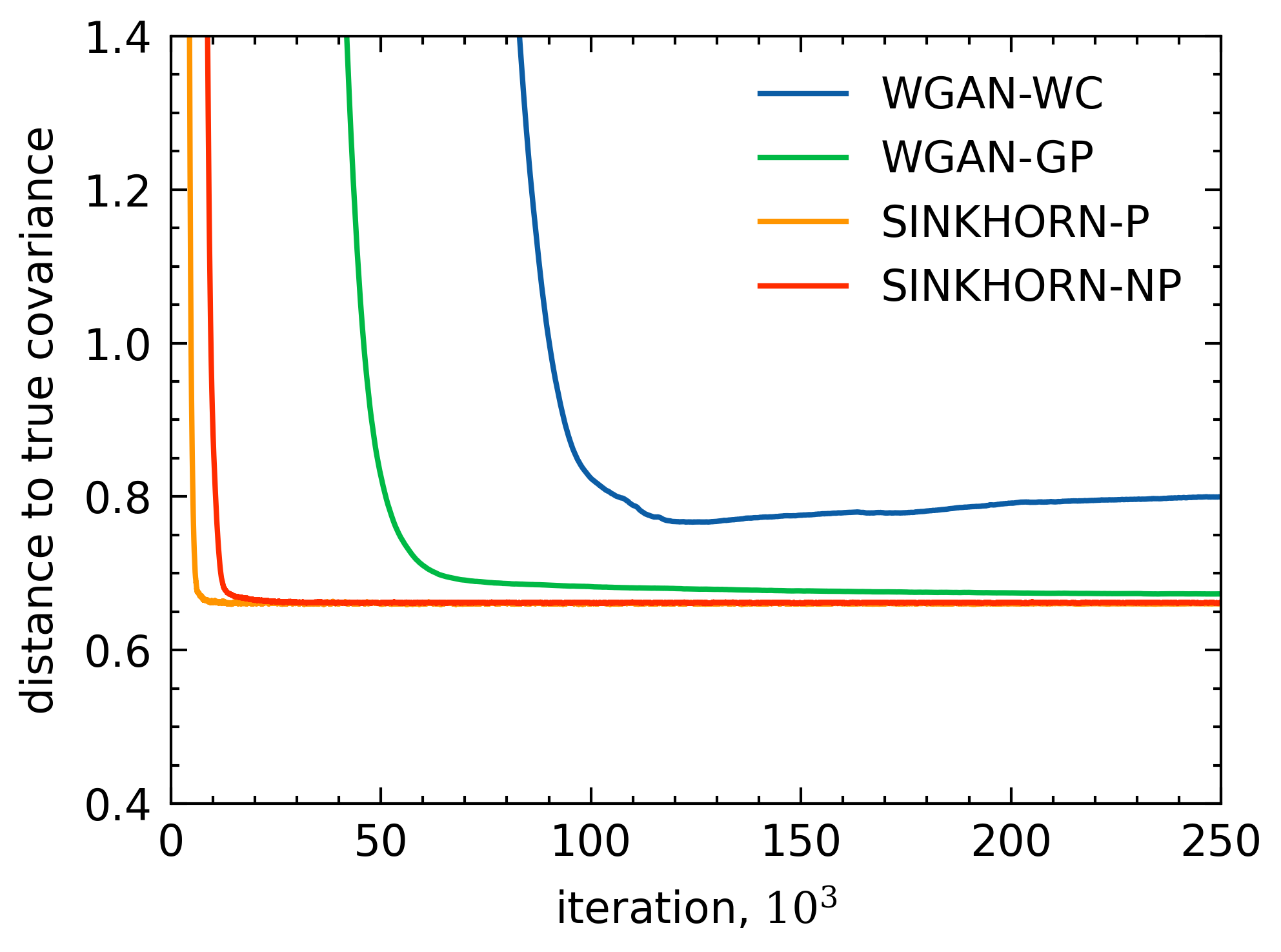} 
  \end{minipage}
  \begin{minipage}{0.5\linewidth}
    \includegraphics[width=\linewidth]{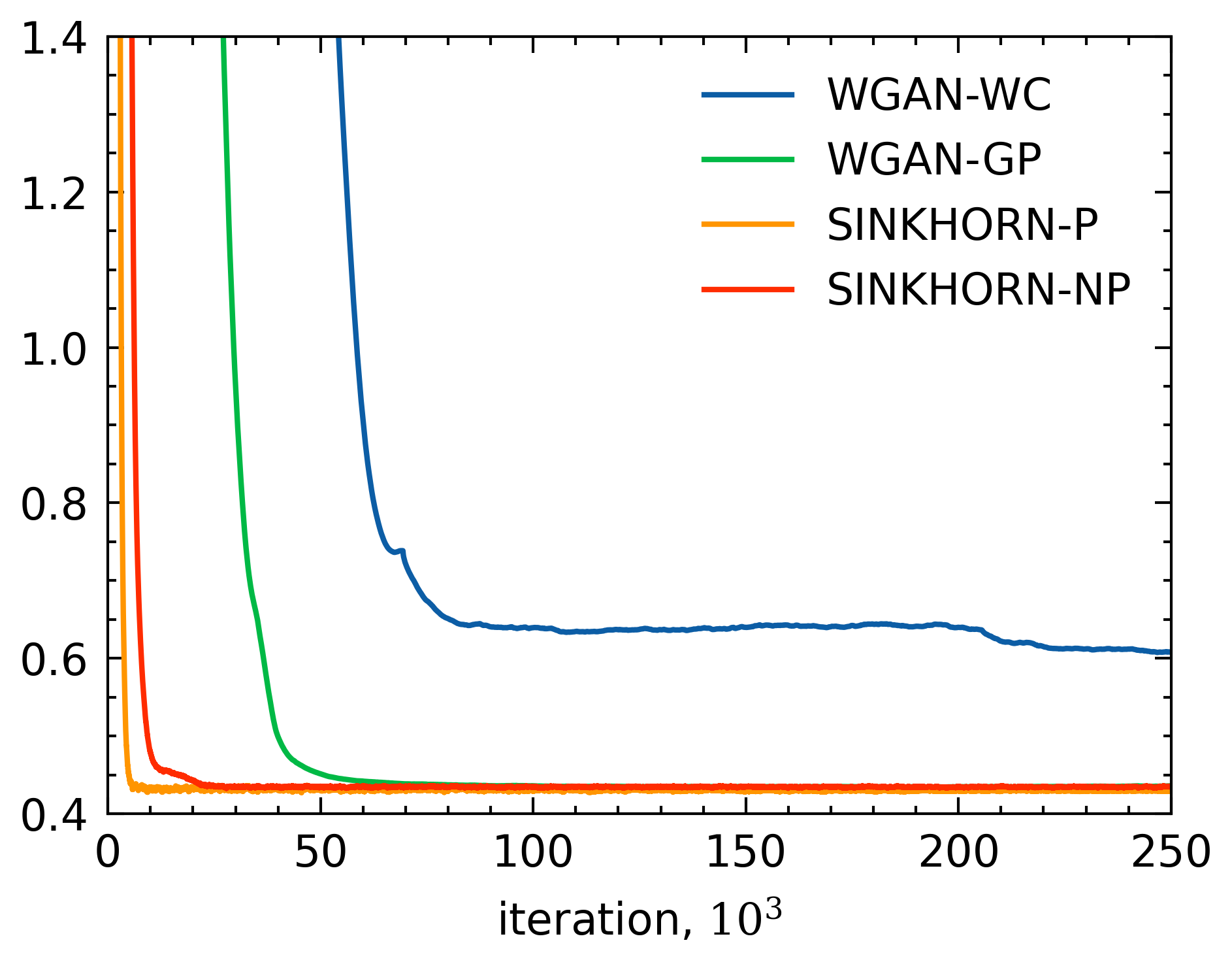}
  \end{minipage}
  \caption{Training of SGAN and WGANs for latent variable dimension $r = 4$(left) and $r = 8$(right) for a linear generator. The distance is calculated to the optimal covariance (r-PCA, top) and true covariance (bottom)} 
\end{figure*}

\section{Conclusion}
In this work we provide a comprehensive complexity analysis of entropy regularized GANs and explain their robustness. Moreover, in a specific simplified setting, the linear generator and Gaussian distributions, we derive an analytic expression for the optimal generator. This results motivates further studies on model-based designing of GANs and GANs stability.

\newpage
\acks{
This work was partly supported by a Stanford Graduate Fellowship, NSF award CCF-1704624, and the Center for Science of Information (CSoI), an NSF Science and Technology Center, under grant agreement CCF-0939370.
}

\vskip 0.2in
\bibliography{main_jmlr}

\end{document}